\documentclass[12pt]{article}
\RequirePackage[margin=1in]{geometry}
\RequirePackage[utf8]{inputenc}
\RequirePackage[T1]{fontenc}
\RequirePackage{lmodern}
\RequirePackage{url}
\RequirePackage{amsmath,amssymb,amsthm,mathtools}
\RequirePackage{aliascnt}
\RequirePackage{graphicx}
\RequirePackage{float}
\RequirePackage{booktabs}
\RequirePackage{algorithm}
\RequirePackage{algpseudocode}
\RequirePackage{nicefrac}
\RequirePackage{microtype}
\RequirePackage{xcolor}
\RequirePackage{colortbl}
\RequirePackage{tikz}
\usetikzlibrary{arrows.meta,positioning,fit}
\RequirePackage[round,authoryear]{natbib}
\RequirePackage[colorlinks=true,citecolor=black,linkcolor=black,urlcolor=black]{hyperref}
\newtheorem{theorem}{Theorem}[section]
\newaliascnt{proposition}{theorem}
\newtheorem{proposition}[proposition]{Proposition}
\aliascntresetthe{proposition}
\newaliascnt{lemma}{theorem}
\newtheorem{lemma}[lemma]{Lemma}
\aliascntresetthe{lemma}
\newaliascnt{corollary}{theorem}
\newtheorem{corollary}[corollary]{Corollary}
\aliascntresetthe{corollary}
\newaliascnt{assumption}{theorem}

\aliascntresetthe{assumption}
\theoremstyle{remark}
\newaliascnt{remark}{theorem}
\newtheorem{remark}[remark]{Remark}
\aliascntresetthe{remark}

\RequirePackage[capitalise,noabbrev]{cleveref}
\crefname{theorem}{Theorem}{Theorems}
\Crefname{theorem}{Theorem}{Theorems}
\crefname{proposition}{Proposition}{Propositions}
\Crefname{proposition}{Proposition}{Propositions}
\crefname{lemma}{Lemma}{Lemmas}
\Crefname{lemma}{Lemma}{Lemmas}
\crefname{corollary}{Corollary}{Corollaries}
\Crefname{corollary}{Corollary}{Corollaries}
\crefname{assumption}{Assumption}{Assumptions}
\Crefname{assumption}{Assumption}{Assumptions}
\crefname{remark}{Remark}{Remarks}
\Crefname{remark}{Remark}{Remarks}

\newcommand{\X}{\mathcal X}
\newcommand{\Y}{\mathcal Y}
\newcommand{\A}{\mathcal A}
\newcommand{\R}{\mathbb R}
\newcommand{\E}{\mathbb E}
\renewcommand{\P}{\mathbb P}
\newcommand{\ep}{\varepsilon}

\newcommand{\se}[1]{{\scriptstyle\,\pm\,#1}}
\DeclareMathOperator*{\argmin}{arg\,min}
\DeclareMathOperator*{\argmax}{arg\,max}
\DeclareMathOperator{\Select}{Select}

\definecolor{ed}{RGB}{225,0,0}

\newcommand{\plotorfallback}[2][]{%
  \IfFileExists{#2}{%
    \includegraphics[#1]{#2}%
  }{%
    \fbox{%
      \parbox[c][1.6in][c]{0.85\linewidth}{%
        \centering Missing figure file:\\ \texttt{\detokenize{#2}}%
      }%
    }%
  }%
}

\newcommand{\plotorfallbacktwo}[3][]{%
  \IfFileExists{#2}{%
    \includegraphics[#1]{#2}%
  }{%
    \IfFileExists{#3}{%
      \includegraphics[#1]{#3}%
    }{%
      \fbox{%
        \parbox[c][1.8in][c]{0.85\linewidth}{%
          \centering Missing mean-loss figure file and legacy utility figure file.%
        }%
      }%
    }%
  }%
}

\title{Risk-Controlled Post-Processing of Decision Policies}
\author{
  Sunay Joshi\thanks{Equal Contribution. Correspondence to: \texttt{sunayj@sas.upenn.edu}, 
  \texttt{tawan@wharton.upenn.edu}.}, \quad
  Tao Wang\footnotemark[1], \quad
  Hamed Hassani, \quad 
  Edgar Dobriban \\
  University of Pennsylvania
}
\date{\today}

\begin{document}
\maketitle

\begin{abstract}
Predictive models are often deployed through existing decision policies that stakeholders are reluctant to change unless a risk constraint requires intervention. We study risk-controlled post-processing: given a deterministic baseline policy, choose a new policy that maximizes agreement with the baseline subject to a chance constraint on a user-specified loss. At the population level, we show that the optimal policy has a threshold structure: it follows the baseline except on contexts where switching to the oracle fallback policy yields a large reduction in conditional violation risk. At the finite-sample level, given a fitted fallback policy and score, we develop a post-processing algorithm that uses calibration data to select a threshold. Leveraging tools from algorithmic stability and stochastic processes, we show that under regularity conditions, in the i.i.d.~setting, the expected excess risk of the post-processed policy is $O(\log n/n)$. In the special case when an exact-safe fallback policy is available, the algorithm achieves precise expected risk control under exchangeability. In this setting, we also give high-probability near-optimality guarantees on the post-processed policy. Experiments on a COVID-19 radiograph diagnosis task, an LLM routing problem, and a synthetic multiclass decision task show that targeted post-processing can meet or nearly meet risk budgets while preserving substantially more agreement with the baseline than score-blind random mixing.
\end{abstract}
\section{Introduction}\label{sec:introduction}

Machine learning systems are increasingly used not only to predict outcomes, but also to recommend actions. In many illustrations, however, the deployed action rule is not an unconstrained optimizer. It may be a legacy policy, a clinical workflow, a rule approved by a regulator, or a model whose behavior is already familiar to users. In such settings, it is natural to ask for a \emph{post-processor}: a lightweight wrapper that changes the baseline action only when needed to satisfy a risk constraint.

This paper studies a simple but expressive version of this problem. We are given a deterministic baseline policy $\pi_0:\mathcal X\to\mathcal A$, a base loss $\ell(a,y)$ measuring the consequence of taking action $a$ when the outcome is $y$, a loss cutoff $c$, and a risk budget $\varepsilon$. The loss cutoff defines the violation event $\{\ell(\pi(X),Y)\ge c\}$, and the constraint requires the violation risk $\mathbb P(\ell(\pi(X),Y)\ge c)$ to be at most the risk budget $\varepsilon$. The goal is to construct a policy $\pi$ that agrees with $\pi_0$ as often as possible while satisfying this chance constraint. Thus, the objective encodes fidelity to the existing policy, while the constraint encodes safety, reliability, or domain-specific acceptability.

We begin by demonstrating that the population problem admits an explicit oracle solution: the optimal policy switches from the baseline policy to a fallback policy exactly when the oracle score exceeds a threshold.
Since the oracle fallback policy and scores are unknown, we propose a
finite-sample conformalized post-processing algorithm (\Cref{alg:crc-nonmono-conservative}). After plugging-in for the unknown conditional label distribution, a held-out calibration set is used to select a switching threshold.
In general, the violation loss can be
non-monotone, so we leverage tools from algorithmic stability and stochastic processes to prove an \(O(\log n/n)\) expected excess violation risk guarantee under regularity conditions. 
When an exact-safe fallback policy is available,
monotonicity is restored, yielding precise expected risk control, assuming only data
exchangeability. Experiments on a COVID-19 radiograph diagnosis task, an LLM
thinking-mode routing problem, and a synthetic multiclass task show that the algorithm meets or
nearly meets the desired risk budgets while preserving more agreement with the
baseline than score-blind random mixing. \Cref{fig:workflow} summarizes the post-processing workflow.

\begin{figure}[t]
    \centering
    \resizebox{\textwidth}{!}{%
    \begin{tikzpicture}[
        font=\normalsize,
        node distance=0.65cm and 0.75cm,
        >=Latex,
        box/.style={
            draw=black!70,
            rounded corners=2pt,
            very thick,
            align=center,
            inner sep=5pt,
            minimum height=1.0cm,
            text width=3.0cm
        },
        widebox/.style={
            box,
            text width=3.0cm
        },
        data/.style={
            box,
            fill=gray!10
        },
        base/.style={
            box,
            fill=blue!8,
            draw=blue!55!black
        },
        fallback/.style={
            box,
            text width=3.5cm,
            fill=green!8,
            draw=green!45!black
        },
        calibrate/.style={
            widebox,
            fill=orange!10,
            draw=orange!65!black
        },
        output/.style={
            widebox,
            text width=6.0cm,
            fill=purple!8,
            draw=purple!55!black
        },
        guarantee/.style={
            widebox,
            text width=6.0cm,
            fill=gray!8,
            draw=black!55
        },
        arrow/.style={
            -{Latex},
            very thick,
            draw=black!65
        },
        dep/.style={
            -{Latex},
            thick,
            dashed,
            draw=black!40
        }
    ]
        \node[data] (train) {Training data};
        \node[fallback, right=of train] (fallback) {Fitted fallback $\hat\pi_*$};
        \node[base, above=of fallback] (base) {Baseline $\pi_0$};
        \node[calibrate, below=of fallback] (score) {Fitted score $\hat\Delta$};
        \node[calibrate, right=1.25cm of fallback] (threshold) {Select $\hat\tau$};
        \node[data, below=of threshold] (caldata) {Calibration data\\$(X_i,Y_i): i\in[n]$};
        \node[output, right=of threshold] (switch) {Post-processed policy\\$\displaystyle
            \hat\pi(x;\hat\tau)=\begin{cases}
            \pi_0(x), & \hat\Delta(x)<\hat\tau,\\
            \hat\pi_*(x), & \hat\Delta(x)\ge\hat\tau
            \end{cases}$};
        \node[guarantee, below=of switch] (risk) {Risk control guarantee:\\$\E[L(Z_{n+1}; \hat \tau)] \le \ep + {C\log n}/{n}$}; 

        \draw[arrow] (train) -- (fallback);
        \draw[arrow] (train.south) |- (score.west);
        \draw[dep] (base) -- (fallback);
        \draw[dep] (base.east) to[bend left=24] (score.east);
        \draw[dep] (fallback) -- (score);

        \draw[arrow] (base.east) -- (threshold.west);
        \draw[arrow] (fallback.east) -- (threshold.west);
        \draw[arrow] (score.east) -- (threshold.west);
        \draw[arrow] (caldata) -- (threshold);
        \draw[arrow] (threshold) -- (switch);
    \end{tikzpicture}
    }
    \caption{Workflow for risk-controlled post-processing. A fitted fallback policy and score identify contexts where deviating from the baseline policy is most useful; calibration chooses a threshold, producing a switching policy that satisfies the risk constraint.}
    \label{fig:workflow}
\end{figure}

\paragraph{Contributions.}
Our main contributions are as follows.
\begin{itemize}
    \item We formulate risk-controlled policy post-processing as an
    agreement-maximization problem subject to a chance constraint.
    \item We characterize the population-optimal policy, showing that it switches away from the baseline exactly on contexts with the largest oracle scores.
    \item We propose a finite-sample algorithm (\Cref{alg:crc-nonmono-conservative}) to post-process the baseline policy, switching from the baseline policy to a fitted fallback policy only on high-score contexts.
    \item We analyze our algorithm in two regimes. With an
    exact-safe fallback policy, we prove a finite-sample risk control guarantee under exchangeability,
    and show near-optimal agreement with the baseline.
    For a general fitted fallback, the violation loss is non-monotone, and under regularity conditions and i.i.d.~sampling, we bound the excess risk by \(O(\log n/n)\) via rank stability. Our techniques may be of interest in other non-monotone calibration problems.
    \item We evaluate the method on a COVID-19 radiograph diagnosis task, an LLM routing problem, and a synthetic multiclass task, demonstrating that the algorithm meets or nearly meets risk budgets while preserving more agreement with the baseline than score-blind random mixing.
\end{itemize}

\paragraph{Organization.}
\Cref{sec:related-work} discusses related work. \Cref{sec:population-oracle} formulates the problem and derives the population oracle. \Cref{sec:finite-sample} gives the algorithm and theoretical guarantees. \Cref{sec:experiments} presents experiments. \Cref{sec:discussion} contains discussion. Proofs and auxiliary results are deferred to the appendix.

\section{Related Work}\label{sec:related-work}

Conformal prediction provides finite-sample distribution-free predictive guarantees under exchangeability \citep{vovk2005algorithmic}, and conformal risk control extends these ideas to more general user-specified risks \citep{bates2021distribution,angelopoulos2021learn,angelopoulos2022conformal}. One-dimensional conformal risk-control relies on monotonicity of the loss in a tuning parameter,
while recent work has begun to study non-monotone variants via stability \citep{angelopoulos2026conformalriskcontrolnonmonotonic,aldirawi2026conformal}. 
Our analysis is specialized to policy post-processing and exploits the rank stability of the selected threshold.

Our paper also relates to recent work on decision-aware conformal methods, which calibrate uncertainty estimates directly for downstream decision-making \citep{vovk2018conformal,lekeufack2024conformal,cortes2024decision,yeh2024end,patel2024conformal,johnstone2021conformal,kiyani2025decision}. Our focus is complementary: rather than learning a new decision rule from scratch, we seek the least intrusive post-processing of a fixed baseline policy that satisfies a chance constraint.
Due to space limitations, additional related work is reviewed in \Cref{sec:addl-related-work}.

\section{Problem Formulation and Population Oracle}\label{sec:population-oracle}

\subsection{Problem formulation}\label{subsec:problem-formulation}

Let $\mathcal A$ be a finite action space and $\mathcal X$, $\mathcal Y$ be standard Borel spaces.
A \textit{policy} is a measurable deterministic map $\pi:\mathcal X\to\mathcal A$.
Given $X=x$, the deployed action is $\pi(x)$.
Throughout, all policies, scores, and losses are assumed measurable. 

Let $(X,Y)$ have a fixed joint distribution. 
Given a policy $\pi_0:\mathcal X\to\mathcal A$,
a base loss $\ell:\mathcal A\times\mathcal Y\to\mathbb R$, a loss cutoff $c\in\mathbb R$, and a risk budget $\varepsilon\in[0,1]$, define the \textit{violation indicator} as
$\mathbf 1\{\ell(\pi(X),Y)\ge c\}$ and the \textit{violation risk} as
$\mathbb P(\ell(\pi(X),Y)\ge c)$.
The \textit{population-level post-processing problem} is to maximize agreement with the baseline policy subject to controlling violation risk at level $\ep$:
\begin{align}\label{eq:zero-one-obj}
\max_{\pi} \; \mathbb P(\pi(X)=\pi_0(X))
\quad
\text{s.t.}
\quad
\mathbb P\big(\ell(\pi(X),Y)\ge c\big)\le \varepsilon.
\end{align}

\subsection{Population-level optimal policy}\label{subsec:population-optimal-policy}

In this section, we solve the optimization problem in \Cref{eq:zero-one-obj}. We begin with definitions. Define the \textit{conditional violation risk function}
$g : \mathcal A \times \mathcal X \to [0,1]$
by
$g(a,x) := \mathbb P\big(\ell(a,Y)\ge c \mid X = x\big)$ for all $(a,x)\in \mathcal A \times \mathcal X$.
Note that the violation risk can be written as
$\mathbb P\big(\ell(\pi(X),Y)\ge c\big)
= \mathbb E[g(\pi(X),X)]$.
Define the conditional violation risk of the baseline policy $g_0 : \mathcal X \to [0,1]$ as $g_0(x) := g(\pi_0(x), x)$ for all $x\in \mathcal X$.
Define the minimal conditional violation risk $g_* : \mathcal X \to [0,1]$ as
$g_*(x) := \inf_{a\in\mathcal A} g(a,x)$, for all $x \in \mathcal X$. 
Because $\mathcal A$ is finite, after deterministic tie-breaking there exists a measurable selector
$\pi_* : \mathcal X \to \mathcal A$
such that for each $x$,
$\pi_*(x) \in \arg\min_{a\in\mathcal A} g(a,x)$, with the convention that $\pi_*(x)=\pi_0(x)$ whenever
$\pi_0(x)\in\argmin_{a\in\mathcal A}g(a,x)$.
We call $\pi_*$ the \textit{oracle fallback policy.} Define the \textit{oracle score}
$\Delta : \mathcal X \to [0,1]$ by
$\Delta(x) := g_0(x)-g_*(x)$
for $x\in\mathcal X$.
In words, $\Delta(x)$ is the conditional excess violation risk of keeping the baseline action rather than using the oracle fallback policy.

The following result characterizes the population-optimal post-processor. It
shows that the optimal policy has a \textit{threshold structure:} it preserves the
baseline $\pi_0$ on low-score contexts, and switches to the oracle fallback
policy $\pi_*$ on high-score contexts, in such a way that the risk constraint is tight.



\begin{theorem}[Population-level solution to \Cref{eq:zero-one-obj}]\label{thm:zero-one-pop}
Assume that the marginal distribution $P_X$ is atomless.
Let 
$G_*:=\E[g_*(X)]$, $G_0:=\E[g_0(X)]$, and $B:=\varepsilon-G_*$.
If $G_*>\varepsilon$, then \Cref{eq:zero-one-obj} is infeasible. If $G_0\le\varepsilon$, then
$\pi_0$ is optimal. In the non-trivial case $G_*\le\varepsilon<G_0$, define
\[
\tau:=\inf\left\{t\in[0,1]:
\E\!\left[\Delta(X)\mathbf 1\{\Delta(X)<t\}\right]
+t\P(\Delta(X)=t)\ge B
\right\}.
\]
If $B=0$, an optimal deterministic policy is
\[
\pi^*(x)
=
\begin{cases}
\pi_0(x), & \Delta(x)=0,\\
\pi_*(x), & \Delta(x)>0.
\end{cases}
\]
If $B>0$, let $s:=\left(B-\E\!\left[\Delta(X)\mathbf 1\{\Delta(X)<\tau\}\right]\right)/\tau$.
Then $s\in[0,\P(\Delta(X)=\tau)]$, and there exists a measurable set
$E\subseteq\{x:\Delta(x)=\tau\}$ with $P_X(E)=s$. 
An optimal deterministic policy is
\[
\pi^*(x)
=
\begin{cases}
\pi_0(x), & x\in\{\Delta<\tau\}\cup E,\\
\pi_*(x), & x\in\{\Delta\ge\tau\}\setminus E.
\end{cases}
\]
Moreover, for the optimal policy in the non-trivial case, we have $\P\big(\ell(\pi^*(X),Y)\ge c\big)=\varepsilon$ and the agreement with the baseline is
\[
\P(\pi^*(X)=\pi_0(X))
=
\begin{cases}
\mathbb{P}(\Delta(X)=0), & B=0,\\ 
\mathbb{P}(\Delta(X)<\tau)+s, & B>0. 
\end{cases}
\]
\end{theorem}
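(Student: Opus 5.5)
The plan is to reduce the problem to a fractional-knapsack style problem over switching sets, solve that relaxation by a threshold argument, and then show the threshold policy in the statement is feasible and attains the relaxed optimum.

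\textbf{Step 1 (reduction to a knapsack).} For any policy $\pi$, let $S:=\{x:\pi(x)\neq\pi_0(x)\}$. Then
\[
\E[g(\pi(X),X)]\ge \E[g_0(X)\mathbf 1\{X\notin S\}]+\E[g_*(X)\mathbf 1\{X\in S\}]=G_0-\E[\Delta(X)\mathbf 1\{X\in S\}],
\]
with equality when $\pi=\pi_*$ on $S$. Hence feasibility of $\pi$ forces $\E[\Delta(X)\mathbf 1_S]\ge G_0-\varepsilon$. Conversely, for any set $S$, the policy equal to $\pi_*$ on $S$ and $\pi_0$ off $S$ has agreement at least $1-P_X(S)$, and exactly $1-P_X(S)$ after discarding the points of $S$ where $\pi_*=\pi_0$ (there $\Delta=0$, so the risk is unchanged). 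The problem is therefore equivalent to
\[
\text{minimize } P_X(S) \quad \text{subject to} \quad \E[\Delta(X)\mathbf 1_S]\ge G_0-\varepsilon.
\]
Write the keep-set as $K=S^c$. The constraint becomes $\E[\Delta\mathbf 1_K]\le \varepsilon-G_*=B$, since $G_0-G_*=\E\Delta$, and the objective is to maximize $P_X(K)$.

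\textbf{Step 2 (trivial cases).} If $G_*>\varepsilon$, then every policy has risk $\E[g(\pi(X),X)]\ge G_*>\varepsilon$, so the problem is infeasible. If $G_0\le\varepsilon$, then $\pi_0$ is feasible and has agreement $1$, so it is optimal.

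\textbf{Step 3 (the case $B=0$).} The constraint forces $\Delta=0$ a.s.\ on $K$, so $P_X(K)\le \P(\Delta=0)$. The set $K=\{\Delta=0\}$ attains this bound and gives the stated $\pi^*$. Its risk is $G_0-\E\Delta=G_*=\varepsilon$.

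\textbf{Step 4 (the case $B>0$, Neyman--Pearson exchange).} Define
\[
F(t)=\E[\Delta\mathbf 1\{\Delta<t\}]+t\P(\Delta=t)=\E[\Delta\mathbf 1\{\Delta\le t\}].
\]
$F$ is nondecreasing and right-continuous, with $F(1)=\E\Delta=G_0-G_*>B$. So the infimum $\tau$ is attained with $F(\tau)\ge B$. Left-limits give $\E[\Delta\mathbf 1\{\Delta<\tau\}]\le B$, and since $B>0$ we get $\tau>0$. This yields $s\in[0,\P(\Delta=\tau)]$. Because $P_X$ is atomless, the pushforward of $P_X$ restricted to $\{\Delta=\tau\}$ is still atomless, so Sierpiński's theorem (or a measurable total-order parametrization of the standard Borel space) gives $E$ with $P_X(E)=s$. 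The keep-set $K^*=\{\Delta<\tau\}\cup E$ then satisfies $\E[\Delta\mathbf 1_{K^*}]=B$ exactly, which gives risk $\varepsilon$ and agreement $\P(\Delta<\tau)+s$. One should check that points with $\Delta\ge\tau>0$ have $\pi_*\neq\pi_0$ by the tie-breaking convention, so the agreement is exact.

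\textbf{Main obstacle: optimality in Step 4.} For any feasible $K$, use the pointwise inequality $(\tau-\Delta)(\mathbf 1_{K^*}-\mathbf 1_K)\ge0$. This holds because $\mathbf 1_{K^*}=1$ where $\Delta<\tau$ and $\mathbf 1_{K^*}=0$ where $\Delta>\tau$. Taking expectations gives
\[
\tau\bigl(P_X(K^*)-P_X(K)\bigr)\ge \E[\Delta\mathbf 1_{K^*}]-\E[\Delta\mathbf 1_K]\ge B-B=0.
\]
Since $\tau>0$, it follows that $P_X(K^*)\ge P_X(K)$. Together with Step 1, this shows that $\pi^*$ maximizes agreement among all feasible policies.
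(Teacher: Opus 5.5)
Your proposal is correct and follows essentially the paper's route: it uses the same reduction to the keep-set problem of maximizing $P_X(K)$ subject to $\E[\Delta(X)\mathbf 1_K(X)]\le B$, and for $B>0$ your exchange inequality $(\tau-\Delta)(\mathbf 1_{K^*}-\mathbf 1_K)\ge 0$ is the paper's Lagrangian certificate $P_X(K)\le \E[(1-\Delta(X)/\tau)_+]+B/\tau=P_X(K^*)$ written pointwise. One wording fix: Sierpi\'nski's theorem should be applied to the restriction of $P_X$ to $\{\Delta=\tau\}$, which is atomless because $P_X$ is, and not to its pushforward under $\Delta$, which is a point mass at $\tau$.
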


The proof of the above result is provided in Section \ref{sec:proof-thm-zero-one-pop}.



\section{Finite-Sample Post-Processing Algorithm}\label{sec:finite-sample}

The oracle policy $\pi^*$ defined in \Cref{thm:zero-one-pop} depends on the conditional violation risk function $g$, which is unknown in practice. 
As a result, we use a separate training sample to fit a model to the conditional label distribution, construct a fitted fallback policy and score, and use a held-out calibration sample to choose a threshold.
Specifically, if $\mathcal Y$ is finite and $\hat{f}(x)\in\Delta(\mathcal Y)$ is an estimate\footnote{Throughout, we treat all fitted objects as fixed independently of the calibration and test data.} of the conditional distribution $P_{Y|X=x}$, 
the plug-in estimates of the conditional violation risk $\hat g : \mathcal A \times \mathcal X \to [0,1]$, the fallback policy $\hat \pi_* : \mathcal X \to \mathcal A$, and the score $\hat\Delta : \mathcal X \to [0,1]$ are given by\footnote{Here, ties are broken in favor of $\pi_0(x)$ whenever $\pi_0(x)$ is a minimizer. Also, it is clear that $\hat g, \hat \Delta\in [0,1]$ by definition.}
\begin{equation}\label{eq:plug-in-ests}
\begin{array}{c@{\qquad}c}
\hat g(a,x)
:= \sum_{y\in\mathcal Y}\mathbf 1\{\ell(a,y)\ge c\} \hat{f}_y(x),
&
\hat \pi_*(x)
\in \argmin_{a\in\mathcal A} \hat g(a,x)
\\[0.5em]
\multicolumn{2}{c}{
\hat\Delta(x)
:= \hat g(\pi_0(x),x)-\hat g(\hat \pi_*(x),x)
}
\end{array}
\end{equation}
Next, let $\overline{\mathcal T}:=[0,1]\cup\{\top\}$, where $\top$ is a formal threshold ordered after every element of $[0,1]$, so that $\hat\Delta(x)<\top$ for each $x\in \mathcal X$.
For each $\tau \in \overline{\mathcal T}$, define
the \textit{post-processed policy} $\hat\pi(\cdot;\tau) : \mathcal X \to \mathcal A$
by
\begin{align*}
\hat\pi(x; \tau)
=
\begin{cases}
\pi_0(x), & \hat \Delta(x)< \tau,\\
\hat \pi_*(x), & \hat \Delta(x) \ge \tau,
\end{cases}
\end{align*}
with the convention $\hat\pi(x;\top)=\pi_0(x)$ for each $x\in \mathcal X$.
Finally, define the \textit{violation loss function}
$L : \mathcal X \times \mathcal Y \times \overline{\mathcal T} \to \{0,1\}$
by
$L(x,y;\tau)=\mathbf 1\{\ell(\hat\pi(x;\tau),y)\ge c\}$ for all $x,y,\tau$.

\Cref{alg:crc-nonmono-conservative} is our proposed method for selecting the threshold $\hat \tau$ given $n$ calibration datapoints $Z_i = (X_i, Y_i) : i\in [n]$.\footnote{Here, for a positive integer $n$, $[n] := \{1,\ldots,n\}$.} The algorithm leverages the calibration data to construct a conservative \textit{bumped empirical risk function} $\hat R_n^+(\tau)$, sets $\hat \tau$ to be the largest threshold at which $\hat R_n^+(\tau)$ is controlled at the risk budget $\ep$, and returns the policy $\hat \pi(\cdot; \hat \tau)$. 


\begin{algorithm}
\caption{Post-processing algorithm for plug-in calibration}
\label{alg:crc-nonmono-conservative}

\scalebox{0.9}{%
\begin{minipage}{1.11\linewidth}
\begin{algorithmic}[1]

\State {\bfseries Input:} calibration data $(Z_i)_{i=1}^n$ with $Z_i=(X_i,Y_i)$, fitted score $\hat\Delta:\mathcal X\to[0,1]$, baseline policy $\pi_0$, fitted fallback policy $\hat \pi_*$, risk budget $\varepsilon\in[0,1]$

\State Compute the scores $\hat\Delta_i\gets \hat\Delta(X_i)$ for $i\in[n]$

\State Form the threshold grid
$\mathcal T_n\gets \{0,\top\}\cup\{\hat\Delta_i:i\in[n]\}$, with duplicate values included only once. Sort it in the extended order as
$t_{(0)}<\cdots<t_{(m_n)}$, where $m_n:=|\mathcal T_n|-1$

\For{$j=0,\dots,m_n$}
    \State Define
    \(
    \hat\pi(x;t_{(j)})
    =
    \begin{cases}
    \pi_0(x), & \hat\Delta(x)<t_{(j)},\\
    \hat \pi_*(x), & \hat\Delta(x)\ge t_{(j)}
    \end{cases}
    \)
    \State Compute
    \(
    \hat R_n^+(t_{(j)})
    =
    \frac{1}{n+1}\left(
    \sum_{i=1}^n
    \mathbf 1\{\ell(\hat\pi(X_i;t_{(j)}),Y_i)\ge c\}
    +1
    \right)
    \)
\EndFor

\State Define the feasible set
$\mathcal F_n\gets \left\{ t\in\mathcal T_n:\hat R_n^+(t)\le \varepsilon \right\}$

\State Set \texttt{empty\_feasible} to true iff $\mathcal F_n=\varnothing$

\State If $\mathcal F_n\ne\varnothing$, set
$\hat\tau \gets \max \mathcal F_n$, where the maximum is taken in the extended order; otherwise set $\hat\tau\gets 0$

\State \Return $\hat \pi(\cdot; \hat\tau)$, $\hat\tau$, and \texttt{empty\_feasible}

\end{algorithmic}
\end{minipage}%
}

\end{algorithm}

\subsection{Stability-based risk control guarantee}
\label{subsubsec:crc-nonmono-stability}


\Cref{alg:crc-nonmono-conservative} bears a resemblance to the general Conformal Risk Control (CRC) algorithm \citep{angelopoulos2022conformal}, for which strong distribution-free risk control guarantees are readily available.
However, a crucial assumption behind the CRC guarantee is the monotonicity of the loss function with respect to the calibration threshold.
In our setting, due to the fact that we must estimate the fallback policy $\hat \pi_*$, the violation loss $L(x,y;\tau)$ is not necessarily monotone in $\tau$.
This presents significant technical challenges in the analysis of \Cref{alg:crc-nonmono-conservative}, which we tackle via recently-introduced \textit{stability techniques} in the Conformal Prediction literature \citep{angelopoulos2026conformalriskcontrolnonmonotonic}.

Consider an exchangeable \textit{augmented sample} $Z_i : i\in [n+1]$ consisting of $n$ calibration datapoints and an additional test point $Z_{n+1} = (X_{n+1}, Y_{n+1})$.
For a subset $S\subseteq [n+1]$, we write $Z_S := \{ Z_i : i\in S \}$ for the set of observations with indices in $S$.
We define the \textit{augmented threshold} $\hat \tau_{1:(n+1)}$ as the threshold obtained by running \Cref{alg:crc-nonmono-conservative} on the augmented sample $Z_{[n+1]}$.
Also, given $i\in [n+1]$, we define the \textit{leave-one-out threshold} $\hat \tau_{-i}$ as the threshold obtained by running \Cref{alg:crc-nonmono-conservative} on the leave-one-out sample $Z_{[n+1]\setminus \{i\}}$. (In particular, note that $\hat \tau_{-(n+1)}$ is the threshold we deploy in practice.)

The key quantity in our analysis is the \textit{rank instability} $K$ of our algorithm, which is defined as the maximum displacement between the rank of $\hat \tau_{1:(n+1)}$ and the rank of $\hat \tau_{-i}$ among the scores $\hat \Delta_j : j\in [n+1]$, over $i\in [n+1]$.\footnote{Formal definitions are collected in \Cref{app:formal-rank-stability}.}
The significance of $K$ lies in the fact that only scores $\hat \Delta_j$ whose ranks lie between $\hat \tau_{-i}$ and $\hat \tau_{1:(n+1)}$ can contribute to the loss difference $L(Z_i; \hat \tau_{-i}) - L(Z_i; \hat \tau_{1:(n+1)})$,
as shown in \Cref{prop:stab-K}.
Consequently, if $K$ is small, then by averaging over $i$, taking expectations, and leveraging exchangeability, we deduce that the violation risk $\E[L(Z_{n+1}; \hat \tau_{-(n+1)})]$ of the deployed threshold must lie within $2\E[K]/(n+1)$ of the violation risk $\E[L(Z_{n+1}; \hat \tau_{1:(n+1)})]$ of the augmented threshold.
By the symmetry of $\hat \tau_{1:(n+1)}$ in all $n+1$ datapoints, an additional exchangeability argument then yields risk control up to the rank-instability term and the endpoint-feasibility slack made explicit in \Cref{cor:beta-logn-over-n}.

Therefore, to control the excess violation risk of \Cref{alg:crc-nonmono-conservative}, it suffices to control $\E[K]$. To do so, we leverage techniques from the theory of stochastic processes and, specifically, biased random walks. 
Recall that \Cref{alg:crc-nonmono-conservative} selects a threshold based on the last crossing of a bumped empirical risk function with the level $\ep$.
After ordering observations by their fitted scores, the bumped empirical risk function $\hat R^+(\cdot)$ can be represented as a biased random walk. 
Deleting one observation perturbs this process by at most one, so if the augmented threshold rank and the leave-one-out threshold rank are far apart, the intervening block must have unusually small cumulative drift. Under the drift condition in \Cref{thm:EK-log}, such long low-sum blocks have logarithmic expected length, which implies $\E[K] = O(\log n)$. Formally, we have the following result, which is our crucial theoretical contribution that enables risk control.

\begin{proposition}[Bound on $\E K$]\label{thm:EK-log}
Condition on the fitted objects. Assume the observations $Z_1,\dots,Z_{n+1}$ are i.i.d. and that $\hat\Delta(X)$ has an atomless distribution on $[0,1]$.
Assume $\mathbb{P}(\ell(\hat \pi_*(X_i),Y_i)\ge c) < \varepsilon$ and $\mathbb{P}(\ell(\pi_0(X_i),Y_i)\ge c) > \varepsilon$.
For $i\in [n+1]$, define the random variable
\begin{align*}
W_i = \mathbf 1\{\ell(\pi_0(X_i),Y_i)\ge c\} - \mathbf 1\{\ell(\hat \pi_*(X_i),Y_i)\ge c\}.
\end{align*}
Let $\mu_W:[0,1]\to\mathbb R$ be a version of the conditional mean satisfying
$\E[W_1\mid \hat\Delta(X_1)]=\mu_W(\hat\Delta(X_1))$ a.s., and assume that there exist constants $c_W>0$ and $\beta\ge 0$ such that
$\mu_W(z)\ge c_W z^{\beta}$ 
for $F_{\hat\Delta}$-almost every $z\in (0,1]$, where $F_{\hat\Delta}$ denotes the law of $\hat\Delta(X)$.
Then $\E[K] \le C_1\log(n+1) + C_2$ for constants $C_1, C_2$.\footnote{Here and below, constants do not depend on $n$, but may depend on the data distribution.}
\end{proposition}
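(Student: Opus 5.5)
The plan is to reduce the rank instability $K$ to the length of a low-sum block in a random walk indexed by score rank. Then $\E[K]=O(\log n)$ follows from Chernoff bounds plus a union bound over $O(n^2)$ candidate blocks.

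\textbf{Step 1: random-walk representation.} Let $N=n+1$. Since $\hat\Delta(X)$ is atomless, the augmented scores are a.s. distinct. Order them decreasingly as $\hat\Delta_{(1)}>\dots>\hat\Delta_{(N)}$, and let $W_{(j)}$ and $V_{(j)}=\mathbf 1\{\ell(\pi_0(X_{(j)}),Y_{(j)})\ge c\}$ be the concomitants. Choosing threshold $t_{(\cdot)}$ with exactly $k$ switched points makes the unnormalized bumped risk equal to
\[
S_0+1-P_k, \qquad S_0:=\sum_j V_{(j)}, \quad P_k:=\sum_{j\le k}W_{(j)}.
\]
Taking the largest feasible threshold therefore means taking the smallest $k$ with $D_k:=S_0+1-\ep(N+1)-P_k\le 0$, i.e. a first-passage time of $P_k$ across the level $a:=S_0+1-\ep(N+1)$.

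Deleting observation $i$ changes three things: $S_0$ by at most one, each $P_k$ by at most one after re-indexing, and the level through the normalization $n$ versus $n+1$, by at most $\ep\le1$. The rank shift caused by removing one point is at most one more. Hence the leave-one-out rank $\hat k_{-i}$ is a first-passage time of a walk that differs from $P$ by at most one, across a level within $O(1)$ of $a$.

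I will formalize this (using the definitions of \Cref{app:formal-rank-stability}) as follows. If $|\hat k_{-i}-\hat k_{1:N}|\ge m+2$, then there is a block $\{r+1,\dots,r+m\}$ lying between the two ranks on which
\[
\sum_{j=r+1}^{r+m}W_{(j)}\le C_0
\]
for an absolute constant $C_0$ (say $3$). The reason is that one walk has not yet crossed while the other already has. Because the first-passage time is used, the walk stays above the level before $\hat k$, so the block sum cannot go very negative either. This gives
\[
K\le 2+\max\{m:\text{some block of length }m\text{ below rank }\hat k^{\max}\text{ has sum}\le C_0\},
\]
where $\hat k^{\max}$ is the largest of the relevant passage ranks.

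\textbf{Step 2: localize the crossing away from small scores.} This is where the drift condition enters, and I expect it to be the main obstacle. Since $\mu_W(z)\ge c_Wz^\beta$ degenerates as $z\to0$, blocks near the bottom ranks could be long and nearly driftless. I will therefore show that with probability $1-O(1/n)$ every passage rank is at most $(1-\delta)N$, so all relevant scores exceed some quantile $z_0>0$.

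The population-level argument goes as follows. Switching everyone lowers the risk from $\P(V=1)>\ep$ to $\P(\ell(\hat\pi_*(X),Y)\ge c)<\ep$, with a strict gap $\eta$. Since $\E[W\,\mathbf 1\{\hat\Delta\ge z\}]$ is continuous in $z$ (atomless scores) and tends to $\E[W]$, there is $z_0>0$ such that switching only the points with $\hat\Delta\ge z_0$ already achieves risk at most $\ep-\eta/2$. Hoeffding's inequality applied to the empirical versions of $S_0$ and of $P_{\lfloor (1-\delta)N\rfloor}$ then gives, with probability $1-e^{-cn}$, that all walks cross before rank $(1-\delta)N$, including the $O(1)$ perturbations. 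On the complementary event I use the trivial bound $K\le N$, which contributes $O(1)$ to the expectation.

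\textbf{Step 3: Chernoff bound on low-sum blocks.} Condition on the order statistics of the scores. Because the pairs are i.i.d., the concomitants $W_{(j)}$ are conditionally independent, take values in $[-1,1]$, and have means $\mu_W(\hat\Delta_{(j)})$. For ranks $j\le(1-\delta)N$, on a high-probability event for the order statistics, these means are at least $c_Wz_0^\beta=:\kappa>0$ a.s. A fixed block of length $m$ in that range therefore satisfies, by Hoeffding,
\[
\P\Bigl(\textstyle\sum_{\text{block}}W\le C_0\Bigr)\le \exp\bigl(-(\kappa m-C_0)^2/(2m)\bigr)\le e^{-c'm}
\quad\text{for } m\ge 2C_0/\kappa .
\]
A union bound over at most $N^2$ blocks gives
\[
\P\bigl(\text{some such block has length}\ge M\bigr)\le N^2e^{-c'M}.
\]
Integrating the tail yields $\E[K]\le 2+(3/c')\log N+O(1)$. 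Adding the $O(1)$ contributions from the bad events in Step 2 gives $\E[K]\le C_1\log(n+1)+C_2$.
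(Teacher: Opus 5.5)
Your proposal is correct and follows essentially the paper's route. Deleting one point moves the centered walk by $O(1)$, which forces a block of bounded $W$-sum between the augmented and leave-one-out ranks. The crossings are then localized with high probability where $\hat\Delta$ exceeds a fixed level, and conditional independence of the concomitants given the order statistics, together with Hoeffding and a union bound over $O(N^2)$ blocks, gives $O(\log N)$. Your decreasing-order first-passage description is just the paper's last crossing read in reverse.

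The one place the paper is simpler is your Step 2. Since $W\le 1$, returning $j$ of the lowest-score points to $\pi_0$ raises the violation count by at most $j$. So Hoeffding on $\sum_i\mathbf 1\{\ell(\hat\pi_*(X_i),Y_i)\ge c\}$ alone, with gap $\eta=\ep-\P(\ell(\hat\pi_*(X),Y)\ge c)$, puts every passage rank below roughly $(1-\eta/2)N$ in your ordering. This avoids your Hoeffding step for $P_{\lfloor(1-\delta)N\rfloor}$, which is a sum over data-dependent ranks rather than an i.i.d.\ sum. It also makes explicit the choice of $\delta$ versus $z_0$ that your sketch leaves vague (e.g.\ $\delta=\eta/4$ and $\P(\hat\Delta(X)<z_0)\le\eta/8$, as in the paper).
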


The proof of the above result uses the notions in \Cref{app:formal-rank-stability} and is provided in Section \ref{sec:proof-thm-ek-log}.
Combining \Cref{thm:EK-log} and our observations above, we deduce the following violation risk bound.

\begin{theorem}[Risk control from rank stability]\label{cor:beta-logn-over-n}
Under the conditions in \Cref{thm:EK-log}, let $\hat\tau$ be the threshold selected by running \Cref{alg:crc-nonmono-conservative} on the calibration observations $Z_1,\ldots, Z_n$.
Then for all sufficiently large $n$,
$\E\big[L(Z_{n+1};\hat\tau)\big]
\le
\varepsilon
+
{C_3\log(n+1)}/(n+1),
$
for a constant $C_3$.
\end{theorem}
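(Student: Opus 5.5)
We will follow the two-step outline already sketched before \Cref{thm:EK-log}: first compare the deployed threshold $\hat\tau=\hat\tau_{-(n+1)}$ with the symmetric augmented threshold $\hat\tau_{1:(n+1)}$ via rank stability, and then control the risk of the augmented threshold by exchangeability together with the bumped feasibility condition. By exchangeability of $Z_1,\dots,Z_{n+1}$, and because relabeling the augmented sample maps $\hat\tau_{-(n+1)}$ to $\hat\tau_{-i}$, we have
\[
\E\big[L(Z_{n+1};\hat\tau_{-(n+1)})\big]=\frac{1}{n+1}\sum_{i=1}^{n+1}\E\big[L(Z_i;\hat\tau_{-i})\big].
\]
We then write $L(Z_i;\hat\tau_{-i})=L(Z_i;\hat\tau_{1:(n+1)})+\big(L(Z_i;\hat\tau_{-i})-L(Z_i;\hat\tau_{1:(n+1)})\big)$. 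By \Cref{prop:stab-K}, the difference is nonzero only when $\hat\Delta_i$ has rank between the ranks of $\hat\tau_{-i}$ and $\hat\tau_{1:(n+1)}$. Since ties occur with probability zero, because $\hat\Delta(X)$ is atomless, the sum over $i$ of these differences is bounded by the number of indices lying in such windows. Taking the union over the two sides gives at most $2K$, so the averaged correction is at most $2\E[K]/(n+1)$. \Cref{thm:EK-log} then bounds this by $2(C_1\log(n+1)+C_2)/(n+1)$.

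Next we bound $\frac{1}{n+1}\sum_{i}L(Z_i;\hat\tau_{1:(n+1)})$, the empirical risk of the augmented threshold on the augmented sample. Consider first the event that the augmented run has a nonempty feasible set. Then $\hat R^+_{n+1}(\hat\tau_{1:(n+1)})\le\varepsilon$, and the bumped risk uses normalization $1/(n+2)$ with a $+1$. Hence $\sum_{i\le n+1}L(Z_i;\hat\tau_{1:(n+1)})\le \varepsilon(n+2)-1$, and dividing by $n+1$ gives a quantity at most $\varepsilon+(\varepsilon-1)/(n+1)\le\varepsilon$. This step is deterministic, so no monotonicity is needed.

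The remaining piece is the endpoint-feasibility slack, namely the event $\mathcal E$ that the augmented feasible set is empty, in which case $\hat\tau_{1:(n+1)}=0$ and the loss is at most $1$. We bound $\P(\mathcal E)$ using the assumption $\P(\ell(\hat\pi_*(X),Y)\ge c)<\varepsilon$. At $t=0$ the policy is $\hat\pi_*$ everywhere, so $\hat R^+(0)$ is a bumped average of i.i.d.\ Bernoulli variables whose mean is strictly below $\varepsilon$. Hoeffding's inequality then gives $\P(\hat R^+(0)>\varepsilon)\le e^{-c'n}$ once $n$ is large enough that the bump $1/(n+2)$ is below half the gap. This is the source of the restriction to ``sufficiently large $n$''. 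The exponential term is $o(\log n/n)$ and can be absorbed into $C_3$.

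Combining the three pieces gives $\E[L(Z_{n+1};\hat\tau)]\le\varepsilon+e^{-c'n}+2(C_1\log(n+1)+C_2)/(n+1)\le\varepsilon+C_3\log(n+1)/(n+1)$. The main obstacle, bounding $\E[K]$, is already handled by \Cref{thm:EK-log}. What remains is to check carefully that \Cref{prop:stab-K} gives the factor $2K$ in the sum over $i$: each index $i$ can contribute only through its own window, and the windows for different $i$ must not overcount. If they did overlap, I would instead bound the count per index via the worst-case window size $K$ and use the fact that only $\hat\Delta_i$ itself matters in the $i$-th term. I would also verify the symmetry claim that relabeling maps $\hat\tau_{-(n+1)}$ to $\hat\tau_{-i}$, given the handling of duplicate grid values.
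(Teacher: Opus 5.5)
Your proposal is correct and follows essentially the same route as the paper: exchangeability to symmetrize over the augmented sample, \Cref{prop:stab-K} to pay $2\E[K]/(n+1)$, the bumped feasibility of $\hat\tau_{1:(n+1)}$ to get average augmented loss at most $\varepsilon$, and a Hoeffding bound on infeasibility of $\tau=0$ for the endpoint slack. The paper phrases the last step through the event $E_{\mathrm{start}}=\{S_0\le-\kappa N/2\}$ rather than directly through $\hat R^+(0)$. Your worry about overlapping windows is already settled inside \Cref{prop:stab-K}: every contributing rank $q_i$ lies in one of the two fixed windows of length $K$ adjacent to $\hat j$, and ranks are a.s.\ distinct.
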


The proof of the above result is provided in Section \ref{sec:proof-cor-beta-logn-over-n}.

\begin{remark}[Interpretation of the drift condition]
The drift condition on $\mu_W$ in \Cref{thm:EK-log} can be rewritten as follows. Let
$\widetilde \Delta(x):=g_0(x)-g(\hat\pi_*(x),x)$
denote the \textit{true} conditional violation risk reduction obtained by switching to the \textit{fitted}
fallback on context $x\in \mathcal X$. By the tower property, since $\E[W\mid X]=\widetilde \Delta(X)$, we have $\mu_W(z)=\E[\widetilde \Delta(X)\mid \hat\Delta(X)=z]$.
Thus, the assumption $\mu_W(z)\ge c_W z^\beta$
implies that contexts assigned fitted score $z$ yield, on average, a
positive benefit from switching, and that this benefit does not vanish too
quickly as $z\downarrow0$.
Further, in the oracle case that $\hat\pi_*=\pi_*$ and $\hat\Delta=\Delta$, we may take $\mu_W(z)=z$, and the condition holds with $c_W=1$ and $\beta=1$.
\end{remark}

\subsection{Risk control guarantee with an exact-safe fallback policy}
\label{subsubsec:safe-policy}


As noted above, for general fallback policies $\hat \pi_*$,
the violation loss $L(Z_i; \tau)$ need not be monotone in
the threshold $\tau$.
However, if the fallback policy $\hat \pi_*$ satisfies a certain \textit{exact-safety} condition, then monotonicity is restored, and \Cref{alg:crc-nonmono-conservative} achieves \textit{distribution-free} risk control at level $\ep$.
Formally, we call a policy $\pi_{\mathrm{safe}} : \mathcal X \to \mathcal A$ \textit{exact-safe} for the base loss $\ell$ and cutoff $c$ if it obeys 
\begin{equation}\label{eq:pointwise-safety}
\ell(\pi_{\mathrm{safe}}(x),y) < c \qquad\text{for all }x\in\mathcal X,\;y\in\mathcal Y.
\end{equation}
Intuitively, $\pi_{\mathrm{safe}}$ is exact-safe if the violation event never occurs.
Note that \(g(\pi_{\mathrm{safe}}(x),x)=0\) for every \(x\in \mathcal X\), so
\(\pi_{\mathrm{safe}}\) is an oracle fallback policy.
Further, exact-safety
also gives \(\hat g(\pi_{\mathrm{safe}}(x),x)=0\), hence the fitted fallback in \Cref{eq:plug-in-ests} may be
taken to be \(\hat \pi_* = \pi_{\mathrm{safe}}\).
The following result shows that under just exchangeability, running \Cref{alg:crc-nonmono-conservative} with an exact-safe fallback controls the violation risk at the desired level.

\begin{theorem}[Conformal risk control with an exact-safe fallback policy]
\label{thm:safe-policy-crc}
If \(Z_1, \ldots, Z_{n+1}\) are exchangeable, if the fitted fallback policy $\hat \pi_*$ is exact-safe, and if \(\hat\tau\) is the threshold selected
by running \Cref{alg:crc-nonmono-conservative} on the calibration set \(Z_1,\dots,Z_n\), then we have
$\E\big[L(Z_{n+1};\hat\tau)\big]\le\varepsilon.$\footnote{If \(\varepsilon<1/(n+1)\), then although the feasible set is empty, \Cref{alg:crc-nonmono-conservative} returns the exact-safe $\hat \pi_*$, and risk control holds.}
\end{theorem}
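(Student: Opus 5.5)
Our plan is to reduce to the standard conformal risk control argument of \citet{angelopoulos2022conformal}, once we check that exact-safety makes the violation loss monotone in the threshold. Since $\hat\pi_*$ is exact-safe, $\ell(\hat\pi_*(x),y)<c$ for every $(x,y)$. Hence
\[
L(x,y;\tau)=\mathbf 1\{\hat\Delta(x)<\tau\}\,\mathbf 1\{\ell(\pi_0(x),y)\ge c\},
\]
with $L(x,y;\top)=\mathbf 1\{\ell(\pi_0(x),y)\ge c\}$. This is nondecreasing in $\tau$ in the extended order on $\overline{\mathcal T}$, and $L(x,y;0)=0$. So smaller thresholds are safer, and the algorithm selects the largest feasible threshold. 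If $\mathcal F_n=\varnothing$, the algorithm sets $\hat\tau=0$, so $\hat\pi(\cdot;0)=\hat\pi_*$ is deployed and the test loss is identically zero; this handles the footnote case.

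\textbf{Oracle threshold.} Introduce the oracle threshold computed from all $n+1$ points. Let $\mathcal T_{n+1}:=\{0,\top\}\cup\{\hat\Delta_i:i\in[n+1]\}$ and $\hat R_{n+1}(t):=\frac1{n+1}\sum_{i=1}^{n+1}L(Z_i;t)$. Define
\[
\tau':=\max\{t\in\mathcal T_{n+1}:\hat R_{n+1}(t)\le\varepsilon\},
\]
which exists because $\hat R_{n+1}(0)=0$. The map $(Z_1,\dots,Z_{n+1})\mapsto\tau'$ is symmetric. By exchangeability,
\[
\E[L(Z_{n+1};\tau')]=\E[\hat R_{n+1}(\tau')]\le\varepsilon.
\]

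\textbf{Comparison $\hat\tau\le\tau'$.} It then suffices to show $\hat\tau\le\tau'$ almost surely whenever $\mathcal F_n\neq\varnothing$; monotonicity of $L$ then gives $L(Z_{n+1};\hat\tau)\le L(Z_{n+1};\tau')$. Since $L\le1$, for every $t$ we have
\[
\hat R_n^+(t)=\frac1{n+1}\Bigl(\sum_{i\le n}L(Z_i;t)+1\Bigr)\ge \hat R_{n+1}(t).
\]
So any $t\in\mathcal F_n$ satisfies $\hat R_{n+1}(t)\le\varepsilon$. Moreover $\mathcal T_n\subseteq\mathcal T_{n+1}$, so $\hat\tau$ is a candidate in the definition of $\tau'$, and therefore $\hat\tau\le\tau'$.

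\textbf{Main obstacle.} The delicate point is the grid discretization. We must confirm that $\tau'$, defined as a maximum over the finite grid $\mathcal T_{n+1}$, is a genuinely symmetric function of the data; it is, since the grid is a set-valued symmetric function. We must also confirm that restricting $\hat\tau$ to $\mathcal T_n$ causes no issue, which the containment $\mathcal T_n\subseteq\mathcal T_{n+1}$ resolves. No continuity or atomlessness is needed, because both thresholds are compared only on the common grid and $L$ is piecewise constant between grid points. Combining the two cases ($\mathcal F_n$ empty or not) gives $\E[L(Z_{n+1};\hat\tau)]\le\varepsilon$.
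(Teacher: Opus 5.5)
Your proposal is correct and follows essentially the paper's route. Exact-safety makes $L$ nondecreasing in $\tau$. The $+1$ bump together with $L\le 1$ makes the calibration-selected threshold feasible for the un-bumped augmented risk, so it lies below the symmetric threshold $\hat\tau^{\circ}_{1:N}$ (your $\tau'$), and exchangeability finishes. The only cosmetic difference is where exchangeability enters: the paper applies it to the leave-one-out average $\frac1N\sum_{i} L(Z_i;\hat\tau_{-i})$ and compares every $\hat\tau_{-i}$ to $\hat\tau^{\circ}_{1:N}$, whereas you compare only $\hat\tau_{-(n+1)}$ and use exchangeability through the symmetry of $\tau'$.
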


The proof of the above result is provided in Section
\ref{sec:proof-thm-safe-policy}.


\begin{remark}[Prediction sets as an exact-safe fallback example]
A canonical example arises in prediction set construction. Suppose
\(\mathcal Y\) is finite, \(\mathcal A=2^{\mathcal Y}\), and
\(\ell(a,y)=\mathbf 1\{y\notin a\}\). Taking \(c=1\), the violation risk is the
miscoverage probability. The policy \(\pi_{\mathrm{safe}}(x)=\mathcal Y\) is
exact-safe because \(\ell(\mathcal Y,y)=0\) for every \(y\in\mathcal Y\).
\end{remark}

\subsection{Near-optimality of the finite-sample policy}\label{subsubsec:conc}

Finally, in the exact-safe fallback regime of \Cref{subsubsec:safe-policy}, under mild regularity conditions, we show that the post-processed policy is \textit{nearly optimal} for the problem in \Cref{eq:zero-one-obj}. That is, the post-processor preserves nearly as
much agreement with the baseline as the oracle policy.
The sub-optimality relative to the oracle comes from two sources: finite-sample
calibration noise, and the error incurred by using the fitted score
\(\hat\Delta\) instead of the oracle score \(\Delta\).
This is captured by the following quantity:
\[
\mathcal E_\Delta(u):=
\P(|\hat\Delta(X)-\Delta(X)|>u)
+\sup_{t\in[0,1]}\P(|\Delta(X)-t|\le u).
\]
Denote $(t)_+ := \max\{0, t\}$ for $t\in \R$.

\begin{theorem}[Near-optimality of exact-safe fallback post-processing]
\label{thm:near-opt-short}
Fix \(\delta\in(0,1)\). 
Assume the fitted fallback policy $\hat \pi_*$ is exact-safe, and that the calibration observations are
i.i.d. Further, assume the regularity conditions in \Cref{app:near-optimality-full}.
Finally, define
\(
\ep_\Delta:=\inf_{u>0}\mathcal E_\Delta(u).
\)
Then,
with probability at least \(1-\delta\) over the calibration sample,
if $\ep_{\Delta}$ is sufficiently small,
for sufficiently large \(n\),
\[
\left(
J^*-\P\bigl(\hat\pi(X;\hat\tau)=\pi_0(X)\bigr)
\right)_+
\le
C_4\sqrt{\frac{\log(4/\delta)}{n}}
+
\frac{C_5}{n+1}
+
C_6\varepsilon_\Delta,
\]
where \(J^*\) is the population-optimal objective value for the problem in
\Cref{eq:zero-one-obj},
and the constants $C_4, C_5, C_6 \ge 0$ do not depend on $n$ or $\delta$.\footnote{The constants may depend on the data distribution.}
\end{theorem}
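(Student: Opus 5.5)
We prove \Cref{thm:near-opt-short} by comparing the data-driven threshold $\hat\tau$ with a population threshold for the \emph{fitted} score, and then comparing the fitted-score population problem with the oracle problem of \Cref{thm:zero-one-pop}.

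Because $\hat\pi_*$ is exact-safe, the loss $L(x,y;\tau)=\mathbf 1\{\hat\Delta(x)<\tau\}\mathbf 1\{\ell(\pi_0(x),y)\ge c\}$ is nondecreasing in $\tau$. Write $V_0:=\mathbf 1\{\ell(\pi_0(X),Y)\ge c\}$. Then both the population risk $R(t):=\E[V_0\mathbf 1\{\hat\Delta(X)<t\}]$ and the agreement $A(t):=\P(\hat\Delta(X)<t)$ are nondecreasing in $t$. Define $\hat t^\star:=\sup\{t: R(t)\le\varepsilon\}$. Under the regularity conditions of \Cref{app:near-optimality-full}, we assume $\hat\Delta(X)$ has a bounded density and $R$ has slope bounded below near $\hat t^\star$. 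We also assume $g_0>0$ on a neighborhood, which makes $R$ strictly increasing there.

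\textbf{Step 1 (calibration noise).} By the DKW inequality, or Hoeffding's inequality applied to the class $\{\mathbf 1\{\hat\Delta<t\}V_0\}_t$ (a VC class of dimension one), with probability at least $1-\delta$ we have
\[
\sup_t|\hat R_n(t)-R(t)|\le\sqrt{\log(4/\delta)/(2n)}.
\]
The bump $(\cdot+1)/(n+1)$ adds at most $1/(n+1)$. Since $\hat\tau$ is the largest grid point with $\hat R_n^+\le\varepsilon$, monotonicity gives $R(\hat\tau^{+})\ge\varepsilon-\sqrt{\log(4/\delta)/(2n)}-1/(n+1)$ up to one grid step. Grid steps have $P_X$-mass $O(\sqrt{\log(1/\delta)/n})$ with high probability, again by DKW. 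Inverting $R$ with the slope lower bound then gives
\[
A(\hat\tau)\ge A(\hat t^\star)-C\sqrt{\log(4/\delta)/n}-C/(n+1).
\]
Large $n$ ensures we stay in the neighborhood where the slope bound holds, and that the feasible set is nonempty.

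\textbf{Step 2 (score error, the main obstacle).} We show $A(\hat t^\star)\ge J^*-C_6\varepsilon_\Delta$. By \Cref{thm:zero-one-pop}, with an exact-safe fallback we have $g_*=0$ and $\Delta=g_0$, so the oracle keeps $\{\Delta<\tau\}$ (up to the tie set $E$, which has mass $\le\P(\Delta=\tau)\le\mathcal E_\Delta(u)$) with $\E[g_0\mathbf 1\{\Delta<\tau\}]\approx\varepsilon$.

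Fix $u>0$ and consider the shifted fitted threshold $t'=\tau-u$. On the event $\{|\hat\Delta-\Delta|\le u\}$, we have $\{\hat\Delta<t'\}\subseteq\{\Delta<\tau\}$. Since $g_0\le1$,
\[
R(t')\le\E[g_0\mathbf 1\{\Delta<\tau\}]+\P(|\hat\Delta-\Delta|>u)\le\varepsilon+\mathcal E_\Delta(u).
\]
Conversely,
\[
A(t')\ge\P(\Delta<\tau-2u)-\P(|\hat\Delta-\Delta|>u)\ge J^*-s-\mathcal E_\Delta(u)-\P(\tau-2u\le\Delta<\tau).
\]
Each of the last three terms is at most $O(\mathcal E_\Delta(u))$ via the anti-concentration term; the window of width $2u$ is covered by the $\sup_t$ term with $u$ replaced by $2u$, or by splitting it into two width-$u$ windows.

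The risk at $t'$ may exceed $\varepsilon$ by $\mathcal E_\Delta(u)$. To correct this, we lower the threshold further: by the slope lower bound on $R$, reducing the risk by $\mathcal E_\Delta(u)$ costs agreement at most $C\,\mathcal E_\Delta(u)$, using the density bound on $\hat\Delta$. This step needs $\varepsilon_\Delta$ small so that we remain in the regular neighborhood. Taking the infimum over $u$ gives $A(\hat t^\star)\ge J^*-C_6\varepsilon_\Delta$.

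\textbf{Conclusion.} Combining Steps 1 and 2 yields the claimed bound, with the positive part accounting for cases where $\hat\pi$ exceeds $J^*$. The obstacle is Step 2: converting the risk excess at $t'$ back into an agreement loss. This requires the quantitative invertibility of $R$, which is exactly what the appendix regularity conditions must supply. The first thing I would try is a two-sided Lipschitz/slope assumption on $t\mapsto(R(t),A(t))$ near $\hat t^\star$.
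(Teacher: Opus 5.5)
\textbf{There is a genuine gap: your argument rests on regularity hypotheses you supplied yourself, and they are not those of \Cref{thm:near-opt}.} The theorem regulates only the \emph{oracle} curves. Specifically, $R^*(\tau)=\P(\ell(\pi_0(X),Y)\ge c,\ \Delta(X)<\tau)$ crosses $\varepsilon$ at an interior $\tau^*$ with margin $|R^*(\tau)-R^*(\tau^*)|\ge C_R|\tau-\tau^*|$, and $\tau\mapsto\P(\Delta(X)<\tau)$ is $L_J$-Lipschitz near $\tau^*$. Nothing is assumed about the law of $\hat\Delta(X)$ beyond what $\varepsilon_\Delta$ measures.

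Your Step 1 inverts the \emph{fitted} curve $R(t)=\E[V_0\mathbf 1\{\hat\Delta(X)<t\}]$ around $\hat t^\star$ at resolution $\sqrt{\log(4/\delta)/n}$. It uses a slope lower bound on $R$ and a density bound on $\hat\Delta(X)$, and the threshold-lowering correction at the end of Step 2 needs the same. Neither follows from the oracle conditions, because $\sup_t|R(t)-R^*(t)|\le\varepsilon_\Delta$ says nothing about $R$ at scales below $\varepsilon_\Delta$.

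Step 1 can in fact fail. Suppose $R\equiv\varepsilon$ on $[t_1,\hat t^\star]$ while $m:=\P(t_1\le\hat\Delta(X)<\hat t^\star)>0$. This happens when $\hat\Delta$ misranks contexts with $g_0=0$, which is compatible with small $\varepsilon_\Delta$ when $m$ and $\hat t^\star-t_1$ are small. Then $\hat R_n$ is a.s.\ constant on $[t_1,\hat t^\star]$, and $\hat R_n^+(\hat t^\star)>\varepsilon$ with probability bounded away from zero. On that event $\hat\tau<t_1$, so $A(\hat t^\star)-A(\hat\tau)\ge m$ for every $n$. The theorem survives such examples only because this loss can be charged to $\varepsilon_\Delta$, which your decomposition does not allow in Step 1. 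Relatedly, your constants would depend on the fitted score rather than only on $C_R$ and $L_J$.

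\textbf{The missing idea is to bypass the fitted population threshold $\hat t^\star$ and compare $\hat R^+$ directly with the oracle curve.} Your concentration step is fine. Indeed $\hat R_n$ is the empirical strict-inequality CDF of $U_i:=\hat\Delta_i$ if $V_{0,i}=1$ and $U_i:=2$ otherwise, so DKW applies directly and is slightly sharper than the paper's $\varepsilon_{1,n}(\delta)$ from \Cref{cor:step1-eps1}.

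The containment you use in Step 2, applied at every threshold, gives $\sup_\tau|R(\tau)-R^*(\tau)|\le\varepsilon_\Delta$ and $\sup_\tau|\P(\hat\Delta(X)<\tau)-\P(\Delta(X)<\tau)|\le\varepsilon_\Delta$. Hence $\sup_\tau|\hat R^+(\tau)-R^*(\tau)|\le\varepsilon_n(\delta)$. With $\nu=\nu_n(\delta)$, the margin condition then yields $\hat R^+(\tau^*-\nu)<\varepsilon<\hat R^+(\tau^*+\nu)$. Since $\hat R^+$ is nondecreasing and constant between consecutive grid points, this pins $|\hat\tau-\tau^*|\le\nu$ with no grid-spacing term. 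So you need no second DKW event and no splitting of $\delta$. Finally,
\[
J^*-\P\bigl(\hat\pi(X;\hat\tau)=\pi_0(X)\bigr)\le\P(\Delta(X)<\tau^*)-\P(\hat\Delta(X)<\hat\tau)\le L_J\nu+\varepsilon_\Delta .
\]
Here $J^*=\P(\Delta(X)<\tau^*)$ follows from the Lagrangian bound with multiplier $1/\tau^*$ and the tightness $R^*(\tau^*)=\varepsilon$. This also avoids invoking \Cref{thm:zero-one-pop}, whose atomless-$P_X$ hypothesis is not assumed here. All dependence on the fit then enters only through $\varepsilon_\Delta$, and the proviso ``$\varepsilon_\Delta$ small, $n$ large'' is exactly the condition $\nu_n(\delta)\le\rho$.
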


The theorem is stated in its entirety in
\Cref{app:near-optimality-full}, and the proof is provided in \Cref{sec:proof-thm-near-opt}.

\section{Experiments}\label{sec:experiments}
We present two empirical 
illustrations
in the main text, and defer a controlled
synthetic study to Appendix~\ref{app:synthetic_exp}. The empirical illustrations
are a COVID-19 radiograph diagnosis task and an LLM thinking-mode routing task,
which illustrate the method under different fallback structures and cost measures.
Throughout, we distinguish the \textit{exact-safe fallback regime} (where an exact-safe fallback exists, in the sense of \Cref{subsubsec:safe-policy}) from the \textit{non-monotone plug-in regime} (where one does not).
In the diagnosis task, the loss matrix induces
both an exact-safe fallback regime and a non-monotone plug-in regime; in the
routing task, the fallback is a more expensive thinking model and risk control
must be traded against computational cost. In both illustrations, we compare the
proposed score-based post-processor with score-blind random mixing, reporting
violation risk, agreement or switch rate relative to the baseline, and the
relevant performance cost. The synthetic experiment isolates the population
oracle structure and evaluates how closely the proposed algorithm tracks
the oracle agreement--risk tradeoff across exact-safe and non-monotone regimes.




\subsection{Medical diagnosis}
\label{subsec:med_diagnosis}
\paragraph{Setup.}
We follow the decision-theoretic conformal benchmark of \citet{kiyani2025decision} on the COVID-19 Radiography Database \citep{chowdhury2020can,rahman2021exploring}. Each instance is a chest X-ray image labeled with one of four diagnoses: $\Y=\{\text{Normal},\text{Pneumonia},\text{COVID-19},\text{Lung Opacity}\}$, and the decision maker selects one of four clinical actions $\A=\{\text{No Action},\text{Antibiotics},\text{Quarantine},\text{Additional Testing}\}$. 
\citet{kiyani2025decision}  specify the decision through utilities,
while here we work with its loss-form transformation
\(
\ell(a,y):=\max_{(a',y')\in\A\times\Y}u(a',y')-u(a,y)\),
\(\Lambda=
\begin{pmatrix}
0 & 10 & 10 & 9\\
8 & 0 & 7 & 6\\
8 & 7 & 0 & 6\\
6 & 3 & 2 & 0
\end{pmatrix},
\)
with rows of $\Lambda$ indexed by actions, columns by labels, and $\ell(a,y)=\Lambda_{a,y}$. Since $\max_{y\in\Y}\ell(\text{Additional Testing},y)=6$, the action \emph{Additional Testing} is an exact-safe fallback policy whenever $c>6$. No exact-safe fallback policy exists for $c\le 6$.

We choose 
as baseline policy $\pi_0=\mathrm{RAC}(\alpha)$  the Risk-Averse Calibration procedure of \citet{kiyani2025decision}: a
conformal set-valued predictor $\hat C_\alpha:\X\to 2^{\Y}$ at miscoverage level $\alpha$, composed with the max-min decision rule
$
\pi_0(x)\;=\;\argmax_{a\in\A}\;\min_{y\in\hat C_\alpha(x)} u(a,y)
$.
This method has been shown to induce decision-making that is worst-case optimal for the expected $1-\alpha$ quantile of the loss.
As such, it serves as a reasonable heuristic choice for a baseline policy that aims to control violation risk.
However, it is not necessarily guaranteed to control the violation risk, and hence one may ask how to post-process it to ensure this property.

The data are randomly split into training ($70\%$), baseline conformal calibration ($10\%$), threshold calibration ($10\%$), and test ($10\%$) sets. We then fine-tune an Inception-V3 model \citep{szegedy2015going,szegedy2016rethinking} (pretrained on ImageNet)
on the training split, calibrate $\pi_0$ on the baseline calibration split, 
calibrate the threshold $\hat\tau$ on the threshold calibration split, and report all evaluation metrics on the test split. We apply \Cref{alg:crc-nonmono-conservative} to $\pi_0$ in both regimes, with fallback policy taken to be the exact-safe fallback policy $\pi_{\text{safe}}=$\emph{Additional Testing} when $c>6$ and the estimated oracle fallback policy $\hat \pi_*(x)$ when $c\le 6$.
We write $\hat\pi(\cdot;\hat\tau)$ for the post-processed policy returned by the algorithm. 

For any policy $\pi$, we report on the test split the violation risk and mean realized loss.
For the post-processed policy $\hat\pi(\cdot;\hat\tau)$, we also report switch rate relative to $\pi_0$ (precise metric definitions are given in Appendix~\ref{app:additional-covid-results}). All reported quantities below are averages over the $20$ seeds.

\begin{table}
    \centering
    \small
    \setlength{\tabcolsep}{2.7pt}
    \renewcommand{\arraystretch}{1.14}
    \begin{tabular*}{\textwidth}{@{\extracolsep{\fill}}lccccccc@{}}
        \toprule
        \multicolumn{8}{c}{\bfseries Panel A: Exact-safe fallback regime ($c=7$, $\pi_0=\mathrm{RAC}(0.05)$, $\pi_{\mathrm{safe}}\equiv\text{Additional Testing}$)} \\
        \midrule
        & \multicolumn{3}{c}{Violation risk} & \multicolumn{3}{c}{Mean realized loss} & \multicolumn{1}{c}{Switch rate} \\
        \cmidrule(lr){2-4}\cmidrule(lr){5-7}
        $\varepsilon$ & post-processed & $\pi_0$ & $\pi_{\mathrm{safe}}$ & post-processed & $\pi_0$ & $\pi_{\mathrm{safe}}$ & \\
        \midrule
        $0.05$ & $0.04\se{0.00}$ & $0.04\se{0.00}$ & $0.00\se{0.00}$ & \cellcolor{green!30}$0.53\se{0.01}$ & \cellcolor{green!30}$0.53\se{0.01}$ & \cellcolor{red!30}$3.42\se{0.00}$ & $0.00\se{0.00}$ \\
        $0.02$ & $0.02\se{0.00}$ & $0.04\se{0.00}$ & $0.00\se{0.00}$ & \cellcolor{green!15}$0.63\se{0.02}$ & \cellcolor{green!30}$0.53\se{0.01}$ & \cellcolor{red!30}$3.42\se{0.00}$ & $0.06\se{0.01}$ \\
        \bottomrule
    \end{tabular*}

    \vspace{0.4em}

    \begin{tabular*}{\textwidth}{@{\extracolsep{\fill}}lccccccc@{}}
        \toprule
        \multicolumn{8}{c}{\bfseries Panel B: Non-monotone plug-in regime ($\pi_0=\mathrm{RAC}(0.01)$, $\varepsilon=0.1$)} \\
        \midrule
        & \multicolumn{3}{c}{Violation risk} & \multicolumn{3}{c}{Mean realized loss} & \multicolumn{1}{c}{Switch rate} \\
        \cmidrule(lr){2-4}\cmidrule(lr){5-7}
        $c$ & post-processed & $\pi_0$ & $\hat\pi_*$ & post-processed & $\pi_0$ & $\hat\pi_*$ & \\
        \midrule
        $4$ & $0.10\se{0.00}$ & $0.18\se{0.01}$ & $0.06\se{0.00}$ & \cellcolor{green!30}$0.81\se{0.02}$ & \cellcolor{red!30}$1.23\se{0.04}$ & \cellcolor{red!30}$1.01\se{0.01}$ & $0.09\se{0.01}$ \\
        $3$ & $0.10\se{0.00}$ & $0.19\se{0.01}$ & $0.06\se{0.00}$ & \cellcolor{green!30}$0.74\se{0.02}$ & \cellcolor{red!30}$1.23\se{0.04}$ & \cellcolor{green!15}$0.84\se{0.01}$ & $0.11\se{0.01}$ \\
        $2$ & $0.10\se{0.00}$ & $0.24\se{0.01}$ & $0.07\se{0.00}$ & \cellcolor{green!15}$0.61\se{0.01}$ & \cellcolor{red!30}$1.23\se{0.04}$ & \cellcolor{green!30}$0.54\se{0.01}$ & $0.16\se{0.01}$ \\
        \bottomrule
    \end{tabular*}
    \vspace{0.1em}
    \caption{\footnotesize Risk control across exact-safe and non-monotone plug-in fallback regimes. Entries are mean $\pm$ standard error over $20$ random seeds.}
    \label{tab:covid-risk-control}
    \vspace{-1em}
\end{table}

\paragraph{Risk control across fallback regimes.}
To test the exact-safe fallback regime, we take $c=7$, for which \emph{Additional Testing} is an exact-safe fallback policy, and apply \Cref{alg:crc-nonmono-conservative} with $\hat\pi_*=\pi_{\mathrm{safe}}$. Panel A of \Cref{tab:covid-risk-control} reports two configurations at $\pi_0=\mathrm{RAC}(0.05)$. In the first row with risk budget $\varepsilon=0.05$, $\pi_0$ already satisfies the risk constraint ($\pi_0$ violation risk $0.04<\varepsilon$) and the algorithm keeps $\pi_0$ on every test point. In the second row with risk budget $\varepsilon=0.02$, $\pi_0$ is infeasible ($\pi_0$ violation risk $0.04>\varepsilon$); the algorithm switches to $\pi_{\mathrm{safe}}$ at rate $0.06$ and attains the risk-control guarantee (post-processed violation risk $0.02$) with mean realized loss $0.63$, comparable to $\pi_0$'s $0.53$ and far below $\pi_{\mathrm{safe}}$'s $3.42$.

When $c\le 6$, no action is exact-safe; in this non-monotone regime, we use the plug-in estimated oracle fallback $\hat \pi_*(x)\in\argmin_{a\in\A}\hat g(a,x)$. Panel B of \Cref{tab:covid-risk-control} reports results for $\pi_0=\mathrm{RAC}(0.01)$ at $\varepsilon=0.1$ across $c\in\{4,3,2\}$; the rows for $c=5,6$ are omitted because no entry of $\Lambda$ lies in $[4,6)$, so the violation event $\{\ell\ge c\}$ equals $\{\ell\ge 4\}$ for every $c\in\{6,5,4\}$. In Panel B, post-processed violation risk rounds to $0.10$ in every row, consistent with the target $\varepsilon=0.1$ up to the $O(\log n/n)$ stability slack. As $c$ decreases, $\pi_0$ violation risk grows and the switch rate increases, reaching $0.16$ at $c=2$. The algorithm's mean realized loss is strictly below both $\pi_0$'s and $\hat \pi_*$'s for $c\in\{4,3\}$ (e.g., $0.81$ vs.\ $1.23$ and $1.01$ at $c=4$); at $c=2$, where $\hat \pi_*$'s mean realized loss is the smallest of the three policies, the algorithm's mean realized loss ($0.61$) is close to $\hat \pi_*$'s ($0.54$). We also report the corresponding $\pi_0=\mathrm{RAC}(0.02)$ results in \Cref{tab:covid-nonmonotonic-rac002}, where the same qualitative pattern is observed.

\paragraph{Random-mixing comparison.}
To compare the algorithm against the simplest score-blind alternative, we evaluate a random-mixing baseline. For fixed $(\alpha,\varepsilon,c)$, define
\begin{equation}\label{eq:random-mixing}
\pi_{\mathrm{mix},p}(x)=
\begin{cases}
\pi_0(x), & \text{with probability } p,\\
\hat \pi_*(x), & \text{with probability } 1-p,
\end{cases}
\end{equation}
with the Bernoulli draw independent of $(X,Y)$. On the calibration split we set $\hat p_{\mathrm{mix}}$ to the largest value for which the expected mixing violation risk $p\,\hat r_0+(1-p)\,\hat r_*$ does not exceed $\varepsilon$, where $\hat r_0$ and $\hat r_*$ are the empirical violation risks of $\pi_0$ and $\hat \pi_*$ on the calibration sample. With the true endpoint risks in place of $\hat r_0$ and $\hat r_*$, this interpolation would satisfy the risk constraint in expectation; as implemented here, it is a score-blind empirical calibration baseline and does not carry the conformal guarantee of the exact-safe procedure.

\Cref{tab:covid-random-mix} reports both policies for $\pi_0=\mathrm{RAC}(0.01)$ at $\varepsilon=0.1$ and $c\in\{4,3,2\}$. Random mixing and the algorithm both empirically attain violation risk near $\varepsilon$, but the algorithm has a smaller switch rate and a lower mean realized loss in every row. We also report the corresponding $\pi_0=\mathrm{RAC}(0.02)$ comparison in \Cref{tab:covid-random-mix-rac002}, where the same qualitative pattern is observed.

\begin{table}
    \centering
    \small
    \setlength{\tabcolsep}{2.7pt}
    \renewcommand{\arraystretch}{1.14}
    \begin{tabular*}{\textwidth}{@{\extracolsep{\fill}}lcccccc@{}}
        \toprule
        \multicolumn{7}{c}{\bfseries $\pi_0=\mathrm{RAC}(0.01)$, $\varepsilon=0.1$} \\
        \midrule
        & \multicolumn{2}{c}{Violation risk} & \multicolumn{2}{c}{Switch rate} & \multicolumn{2}{c}{Mean realized loss} \\
        \cmidrule(lr){2-3}\cmidrule(lr){4-5}\cmidrule(lr){6-7}
        $c$ & random-mix & post-processed & random-mix & post-processed & random-mix & post-processed \\
        \midrule
        $4$ & $0.10\se{0.00}$ & $0.10\se{0.00}$ & \cellcolor{red!30}$0.22\se{0.01}$ & \cellcolor{green!30}$0.09\se{0.01}$ & \cellcolor{red!30}$1.07\se{0.01}$ & \cellcolor{green!30}$0.81\se{0.02}$ \\
        $3$ & $0.10\se{0.00}$ & $0.10\se{0.00}$ & \cellcolor{red!30}$0.22\se{0.01}$ & \cellcolor{green!30}$0.11\se{0.01}$ & \cellcolor{red!30}$0.95\se{0.01}$ & \cellcolor{green!30}$0.74\se{0.02}$ \\
        $2$ & $0.10\se{0.00}$ & $0.10\se{0.00}$ & \cellcolor{green!15}$0.19\se{0.01}$ & \cellcolor{green!30}$0.16\se{0.01}$ & \cellcolor{green!15}$0.67\se{0.01}$ & \cellcolor{green!30}$0.61\se{0.01}$ \\
        \bottomrule
    \end{tabular*}
    \vspace{0.1em}
    \caption{\footnotesize Our algorithm versus the random-mixing baseline of \eqref{eq:random-mixing} at $\varepsilon=0.1$. Entries are mean $\pm$ standard error over $20$ random seeds.}
    \label{tab:covid-random-mix}
    \vspace{-1em}
\end{table}

\paragraph{Binary diagnostic toy with action cost.}
To allow comparison of policies on action cost separately from the loss, we introduce an explicit action cost; for a clean illustration we use a binary collapse of the COVID experiment. The label and action spaces collapse to
$\Y=\{\text{Normal},\text{Non-Normal}\}, \A=\{\text{No Action},\text{Additional Testing}\}$,
with binary posterior $\hat p(y\mid x)$ obtained by merging the Pneumonia, COVID-19, and Lung Opacity classes of the Inception-V3 classifier into Non-Normal. We identify the action \emph{No Action} with the Normal label and \emph{Additional Testing} with the Non-Normal label, and use the indicator loss $\ell(a,y)=\mathbf 1\{a\ne y\}$ at threshold $c=1$ (so that the risk constraint becomes $\P(\pi(X)\ne Y)\le \varepsilon$) and
report the mean action cost
$\frac{1}{n_{\text{test}}}\sum_{i=1}^{n_{\text{test}}}
\mathbf 1\{\pi(X_i)=\text{Additional Testing}\}$.
Fixing $\pi_0=\mathrm{RAC}(0.01)$ and $\varepsilon=0.1$, we apply \Cref{alg:crc-nonmono-conservative} and the random-mixing baseline of \eqref{eq:random-mixing} on the same training, calibration, and test splits as above. \Cref{tab:binary-toy-random-mix} reports the two policies. The post-processed policy and random mixing both attain the risk budget, but post-processing reaches it with a smaller mean cost ($0.43$ vs.\ $0.44$) and a smaller switch rate ($0.03$ vs.\ $0.04$).

\begin{table}
    \centering
    \small
    \setlength{\tabcolsep}{4pt}
    \renewcommand{\arraystretch}{1.14}
    \begin{tabular*}{\textwidth}{@{\extracolsep{\fill}}lccc@{}}
        \toprule
        Policy & Violation risk & Mean cost & Switch rate \\
        \midrule
        random-mix & $0.10\se{0.00}$ & \cellcolor{green!15}$0.44\se{0.00}$ & \cellcolor{green!15}$0.04\se{0.01}$ \\
        post-processed & $0.10\se{0.00}$ & \cellcolor{green!30}$0.43\se{0.00}$ & \cellcolor{green!30}$0.03\se{0.00}$ \\
        \bottomrule
    \end{tabular*}
    \vspace{0.1em}
    \caption{\footnotesize Binary diagnostic toy at $\pi_0=\mathrm{RAC}(0.01)$ and $\varepsilon=0.1$. Entries are mean $\pm$ standard error over $20$ random seeds. Reference endpoints: $\pi_0$ has violation risk $0.13\se{0.01}$ and mean cost $0.39\se{0.01}$; $\hat\pi_*$ has violation risk $0.06\se{0.00}$ and mean cost $0.50\se{0.00}$.}
    \label{tab:binary-toy-random-mix}
    \vspace{-2em}
\end{table}

\subsection{LLM thinking-mode routing}\label{subsec:llm-routing}

\paragraph{Setup.}
A natural instance of post-processing arises in modern LLM deployments, where
each query is routed between a cheap, fast model ($\pi_0$) and a more
expensive model run in extended-thinking mode ($\pi_s$). We study this
routing problem on the MMLU-Pro multiple-choice benchmark
\citep{wang2024mmlu}, where $\Y=\A$ is the set of answer choices and
$\ell(a,y)=\mathbf 1\{a\ne y\}$ at threshold $c=1$, so the risk constraint
becomes a bound on the answer-error rate
$\P(\pi(X)\ne Y)\le\varepsilon$. We instantiate
\Cref{alg:crc-nonmono-conservative} with fallback policy $\pi_s=$ Qwen3-32B
in thinking mode and report two fast baselines, $\pi_0=$ Qwen3-4B and
$\pi_0=$ Qwen3-1.7B, both in non-thinking mode
\citep{yang2025qwen3}.\footnote{The thinking model $\pi_s$ attains a test
answer-error rate of $0.234$, so it is not an exact-safe fallback policy, and the
loss curve in the threshold is non-monotone; the expected-risk guarantee of
\Cref{cor:beta-logn-over-n} therefore requires the rank-stability
assumptions of \Cref{thm:EK-log} to hold for the triple
$(\pi_0,\pi_s,\hat\Delta)$; we do not claim to verify these assumptions in this empirical study.} Each model is queried with a fixed sampling
seed so that the policy is a deterministic function of $X$. We measure
compute cost by the per-request forward FLOPs \citep{kaplan2020scaling}
$\kappa(a,x)\;:=\;2\,N_{\text{params}}(a)\,\bigl(L_{\text{prompt}}(x)+L_{\text{completion}}(x,a)\bigr)$,
where $N_{\text{params}}(a)$ is the parameter count of model $a$, and
$L_{\text{prompt}}(x)$ and $L_{\text{completion}}(x,a)$ are the prompt and
generated token counts. We report on the test split the average per-request FLOPs in
teraFLOPs. \footnote{$1\,\text{TFLOP}=10^{12}$ floating-point operations.}

The data are randomly split into training ($5000$), threshold
calibration ($5000$), and test ($2032$) sets, and all reported values are
averages over $20$ independent repetitions. The score is the clipped estimated improvement
$\hat\Delta(x)=(\hat g_0(x)-\hat g_s(x))_+$, where $\hat g_0$ and $\hat g_s$ are obtained from two logistic regressions
trained on the training sample, predicting $\mathbf 1\{\pi_0(X)\ne Y\}$ and
$\mathbf 1\{\pi_s(X)\ne Y\}$ from the concatenation of mid-depth last-token
hidden states of $\pi_0$ and $\pi_s$. We compare against the random-mixing
baseline of \eqref{eq:random-mixing}, applied with $\hat \pi_*\equiv\pi_s$.

\paragraph{Results.}
\Cref{tab:llm-think-routing} reports our algorithm and the random-mixing baseline
for $\pi_0=\text{Qwen3-4B}$ across a grid of risk budgets $\varepsilon$. Both methods empirically attain the target violation risk in every row. At matched violation risk, our algorithm switches less often to the thinking model and is uniformly cheaper, saving $12$--$19\%$ of FLOPs over random mixing. We also report the corresponding $\pi_0=\text{Qwen3-1.7B}$ results in \Cref{tab:llm-think-routing-qwen17}, where the same qualitative pattern is observed.

\begin{table}
    \centering
    \small
    \setlength{\tabcolsep}{3.2pt}
    \renewcommand{\arraystretch}{1.14}
    \begin{tabular*}{\textwidth}{@{\extracolsep{\fill}}lcccccc@{}}
        \toprule
        \multicolumn{7}{c}{\bfseries $\pi_0=\text{Qwen3-4B}$, $\pi_s=\text{Qwen3-32B}$} \\
        \midrule
        & \multicolumn{2}{c}{Violation risk} & \multicolumn{2}{c}{Switch rate} & \multicolumn{2}{c}{FLOPs (T)} \\
        \cmidrule(lr){2-3}\cmidrule(lr){4-5}\cmidrule(lr){6-7}
        $\varepsilon$ & post-processed & random-mix & post-processed & random-mix & post-processed & random-mix \\
        \midrule
        $0.25$ & $0.25\se{0.00}$ & $0.25\se{0.00}$ & \cellcolor{green!30}$0.78\se{0.01}$ & \cellcolor{red!30}$0.91\se{0.00}$ & \cellcolor{green!30}$138.38\se{1.14}$ & \cellcolor{red!30}$157.96\se{0.73}$ \\
        $0.30$ & $0.30\se{0.00}$ & $0.30\se{0.00}$ & \cellcolor{green!30}$0.48\se{0.01}$ & \cellcolor{red!30}$0.63\se{0.00}$ & \cellcolor{green!30}$92.72\se{1.05}$ & \cellcolor{red!30}$114.75\se{0.83}$ \\
        $0.35$ & $0.35\se{0.00}$ & $0.35\se{0.00}$ & \cellcolor{green!30}$0.25\se{0.01}$ & \cellcolor{red!30}$0.36\se{0.00}$ & \cellcolor{green!30}$56.81\se{1.09}$ & \cellcolor{red!30}$69.34\se{0.93}$ \\
        \bottomrule
    \end{tabular*}
    \vspace{0.35em}
    \caption{\footnotesize LLM thinking-mode routing: our algorithm versus the random-mixing baseline across risk budgets $\varepsilon$. Compute is per-request forward FLOPs in TFLOPs. Entries are mean $\pm$ standard error over $20$ random seeds.}
    \label{tab:llm-think-routing}
\end{table}

\vspace{-1em}

\section{Discussion}\label{sec:discussion}

We studied the problem of post-processing a baseline policy to enforce a chance constraint on a downstream loss. 
After characterizing the population optimum,
we showed small excess violation risk under regularity conditions, and proved exact risk control and near-optimality when the fallback is exact-safe. 
The experiments demonstrated that our algorithm achieves risk control with minimally invasive post-processing, comparing favorably to a random mixing baseline.
Future directions include generalizing the framework to randomized policies and continuous action spaces.

\section{Acknowledgements}

This work was supported in part by the US NSF, ARO, AFOSR, ONR, the Simons Foundation
and the Sloan Foundation.

{\small
\setlength{\bibsep}{0.2pt plus 0.3ex}
\IfFileExists{ref.bib}{%
\bibliographystyle{plainnat}
\bibliography{ref}%
}{%
}
}

\appendix

\section{Additional related work}\label{sec:addl-related-work}

\paragraph{Conformal prediction and conformal risk control.}
Conformal prediction provides finite-sample distribution-free predictive guarantees under exchangeability \citep{vovk2005algorithmic,shafer2008tutorial,angelopoulos2023conformal,angelopoulos2024theoretical}. Conformal risk control extends these ideas from miscoverage to more general user-specified risks \citep{bates2021distribution,angelopoulos2021learn,angelopoulos2022conformal}. The standard one-dimensional conformal risk-control argument relies on monotonicity of the loss in a tuning parameter. Our exact-safe fallback result fits exactly into this paradigm, while plug-in calibration leads to non-monotone losses. Recent work has begun to study non-monotone risk control via stability or finite-grid analyses \citep{angelopoulos2026conformalriskcontrolnonmonotonic,aldirawi2026conformal}; our analysis is specialized to policy post-processing and exploits the rank stability of the selected threshold.

\paragraph{Decision-aware conformal methods.}
Several recent works calibrate uncertainty estimates or decision rules directly for downstream decision-making. Conformal predictive decision-making and conformal decision theory calibrate decisions rather than only prediction sets \citep{vovk2018conformal,lekeufack2024conformal}. Decision-focused uncertainty quantification and conformal calibration for optimization under uncertainty incorporate downstream costs and robust-optimization objectives into the construction of uncertainty sets \citep{cortes2024decision,yeh2024end,patel2024conformal,johnstone2021conformal}. Our focus is complementary: rather than learning a new decision rule or uncertainty set from scratch, we seek the least intrusive post-processing of a fixed baseline policy that satisfies a chance constraint.

\paragraph{Reject-option and selective prediction.}
The special case in which the post-processor replaces a risky prediction by an abstention, referral, or additional-test action is closely related to classification with a reject option, originating with Chow's optimal error--reject tradeoff \citep{chow1970optimum}. Later work developed convex surrogates and learning-theoretic analyses for reject-option classifiers \citep{bartlett2008classification,cortes2016learning}, and recent surveys organize this broader literature around ambiguity and novelty rejection \citep{hendrickx2024machine}. Our formulation differs in two ways: it allows an arbitrary finite action space and loss matrix, and it treats the baseline policy as an object to be preserved unless risk control requires otherwise.

\paragraph{Prediction, optimization, and decision-making under uncertainty.}
Our population objective is also connected to classical statistical decision theory \citep{wald1949statistical}, predict-then-optimize pipelines \citep{elmachtoub2022smart}, and optimization under uncertainty \citep{keith2021survey}. The distinctive feature of the present work is the distribution-free calibration layer: the post-processor may use a fitted probabilistic model to rank interventions, but the selected threshold is calibrated using held-out data to control the risk budget.

\section{Additional experimental details}

\subsection{Synthetic multiclass task}
\label{app:synthetic_exp}
\paragraph{Setup.}
In this section, we construct a four-class synthetic problem
in order to illustrate that \Cref{alg:crc-nonmono-conservative} recovers nearly
the same agreement--risk tradeoff 
as the oracle post-processor.

The covariates are sampled according to
$X\sim N(0,I_2)$, the label and action spaces are $\mathcal Y=\mathcal A=\{1,2,3,4\}$, and the labels are drawn from
a well-specified linear-softmax model
\[
p(y=k\mid x)=\frac{\exp(s_k(x))}{\sum_{j=1}^4\exp(s_j(x))},
\]
where, for $x=(x_1,x_2)\in \mathbb R^2$, 
\[
\begin{aligned}
s_1(x)&=1.8x_1-0.4x_2, &
s_2(x)&=-1.2x_1+1.1x_2,\\
s_3(x)&=0.5x_1+1.6x_2, &
s_4(x)&=-0.8x_1-1.3x_2.
\end{aligned}
\]
The baseline policy $\pi_0$ is the
argmax rule from a multinomial logistic regression fit on the training split.
We use the same loss matrix as in Section \ref{subsec:med_diagnosis},
where action $4$ is a conservative fallback. With risk budget
$\varepsilon=0.18$, the cutoff $c$ determines the structure of the fallback
problem. For $c>6$, action $4$ is an exact-safe fallback, in that it obeys the condition in \Cref{eq:pointwise-safety}. For $c\le 6$, no action is exact
safe.

We compare three policies: the baseline $\pi_0$, an oracle post-processor using
the true conditional distribution, 
and \Cref{alg:crc-nonmono-conservative}, which uses fitted class probabilities to
construct $\hat \pi_*$ and $\hat\Delta$. The oracle post-processor computes the oracle fallback policy and
oracle score from the true conditional probabilities, with its threshold
approximated on a large independent reference sample. It is the population
solution of \Cref{thm:zero-one-pop}; its agreement with $\pi_0$ is therefore the
target objective value for the finite-sample algorithm. Each repetition uses
$n_{\mathrm{train}}=250$, $n_{\mathrm{cal}}=200$, and
$n_{\mathrm{test}}=6000$, and results are averaged over $40$ independent repetitions.

\begin{figure}
    \centering
    \plotorfallback[width=\textwidth]{plots/summary_panel_well_spec.png}
    \caption{Synthetic multiclass experiment. The panels show violation risk, switch rate, and mean realized loss as a function of the loss cutoff
    $c$ (mean $\pm$ standard error over $40$ repetitions). The dashed line marks the risk budget
    $\varepsilon=0.18$. 
    The oracle curve is the oracle
    policy from \Cref{thm:zero-one-pop}; the algorithm's curve uses a
    fitted fallback and fitted score.}
    \label{fig:summary_panel_well_spec}
\end{figure}

\paragraph{Results.}
\Cref{fig:summary_panel_well_spec} shows that the algorithm closely tracks the
oracle objective, namely agreement with the baseline subject to the risk budget.
When the baseline already satisfies the risk constraint, both the oracle and the algorithm keep high agreement
with $\pi_0$. When the baseline first becomes infeasible in the 
region $c>6$, the algorithm uses action $4$ as the fallback action $\hat\pi_*$ and
intervenes only on high-score contexts. For example, at $c=7$, the baseline risk
is about $0.232$, while the algorithm reduces risk to about $0.175$ with
agreement about $0.896$, close to the oracle risk $0.181$ and oracle agreement
$0.914$.

For $c\le6$, action $4$ no longer obeys the exact-safety condition \Cref{eq:pointwise-safety},
but the algorithm remains close to
this oracle benchmark. Over $3.5\le c\le6.0$, the oracle has risk near the
budget with agreement about $0.647$ and mean loss about $2.249$, while the
algorithm attains risk about $0.186$, agreement about $0.629$, and mean loss
about $2.299$. Thus the algorithm nearly matches the oracle agreement objective
while nearly meeting the risk budget, even in the non-monotone fallback regime.

\paragraph{Computational resources.}
This experiment was run in Google Colab using the default CPU runtime without specialized hardware.

\subsection{Additional COVID-19 diagnosis results}
\label{app:additional-covid-results}
\paragraph{Evaluation metrics.}
We evaluate the following metrics on the test set
$\left\{\left(X_i, Y_i\right)\right\}_{i=1}^{n_{\text {test }}}$:
\begin{itemize}
    \item $\text{Violation risk}:=\frac{1}{n_{\text{test}}}\sum_{i=1}^{n_{\text{test}}}\mathbf{1}\!\bigl\{\ell(\pi(X_i),Y_i)\ge c\bigr\}$
    \item $\text{Mean realized loss}:=\frac{1}{n_{\text{test}}}\sum_{i=1}^{n_{\text{test}}}\ell(\pi(X_i),Y_i)$
    \item $\text{Switch rate}:=\frac{1}{n_{\text{test}}}\sum_{i=1}^{n_{\text{test}}}\mathbf{1}\!\bigl\{\pi(X_i)\ne \pi_0(X_i)\bigr\}$.
\end{itemize}

The two tables below supplement the COVID-19 diagnosis results in
\Cref{subsec:med_diagnosis}. \Cref{tab:covid-nonmonotonic-rac002} repeats the
non-monotone plug-in experiment with the less conservative baseline
$\pi_0=\mathrm{RAC}(0.02)$. The qualitative behavior matches the main
$\mathrm{RAC}(0.01)$ results: the post-processed policy stays near the target
risk $\varepsilon=0.1$ and uses only modest switching, with mean realized loss
below or close to the competing endpoints. \Cref{tab:covid-random-mix-rac002}
compares the same post-processed policy with score-blind random mixing; targeted
switching again attains comparable risk with lower switch rate and lower mean
realized loss.

\begin{table}
    \centering
    \small
    \setlength{\tabcolsep}{2.7pt}
    \renewcommand{\arraystretch}{1.14}
    \begin{tabular*}{\textwidth}{@{\extracolsep{\fill}}lccccccc@{}}
        \toprule
        \multicolumn{8}{c}{\bfseries Supplementary COVID results: $\pi_0=\mathrm{RAC}(0.02)$, $\varepsilon=0.1$} \\
        \midrule
        & \multicolumn{3}{c}{Violation risk} & \multicolumn{3}{c}{Mean realized loss} & \multicolumn{1}{c}{Switch rate} \\
        \cmidrule(lr){2-4}\cmidrule(lr){5-7}
        $c$ & post-processed & $\pi_0$ & $\hat\pi_*$ & post-processed & $\pi_0$ & $\hat\pi_*$ & \\
        \midrule
        $4$ & $0.10\se{0.00}$ & $0.11\se{0.01}$ & $0.06\se{0.00}$ & \cellcolor{green!30}$0.73\se{0.02}$ & \cellcolor{green!15}$0.79\se{0.03}$ & \cellcolor{red!30}$1.01\se{0.01}$ & $0.01\se{0.01}$ \\
        $3$ & $0.10\se{0.00}$ & $0.12\se{0.01}$ & $0.06\se{0.00}$ & \cellcolor{green!30}$0.70\se{0.02}$ & \cellcolor{green!15}$0.79\se{0.03}$ & \cellcolor{red!30}$0.84\se{0.01}$ & $0.02\se{0.01}$ \\
        $2$ & $0.10\se{0.00}$ & $0.15\se{0.01}$ & $0.07\se{0.00}$ & \cellcolor{green!15}$0.61\se{0.01}$ & \cellcolor{red!30}$0.79\se{0.03}$ & \cellcolor{green!30}$0.54\se{0.01}$ & $0.06\se{0.01}$ \\
        \bottomrule
    \end{tabular*}
    \vspace{0.35em}
    \caption{\footnotesize Supplementary COVID-19 diagnosis results for the non-monotone plug-in regime with $\pi_0=\mathrm{RAC}(0.02)$. Entries are mean $\pm$ standard error over $20$ random seeds. The post-processed columns report the policy returned by the algorithm; switch rates are measured relative to $\pi_0$.}
    \label{tab:covid-nonmonotonic-rac002}
\end{table}

\begin{table}
    \centering
    \small
    \setlength{\tabcolsep}{2.7pt}
    \renewcommand{\arraystretch}{1.14}
    \begin{tabular*}{\textwidth}{@{\extracolsep{\fill}}lcccccc@{}}
        \toprule
        \multicolumn{7}{c}{\bfseries Supplementary random-mixing comparison: $\pi_0=\mathrm{RAC}(0.02)$, $\varepsilon=0.1$} \\
        \midrule
        & \multicolumn{2}{c}{Violation risk} & \multicolumn{2}{c}{Switch rate} & \multicolumn{2}{c}{Mean realized loss} \\
        \cmidrule(lr){2-3}\cmidrule(lr){4-5}\cmidrule(lr){6-7}
        $c$ & random-mix & post-processed & random-mix & post-processed & random-mix & post-processed \\
        \midrule
        $4$ & $0.10\se{0.00}$ & $0.10\se{0.00}$ & \cellcolor{green!15}$0.05\se{0.02}$ & \cellcolor{green!30}$0.01\se{0.01}$ & \cellcolor{green!15}$0.81\se{0.03}$ & \cellcolor{green!30}$0.73\se{0.02}$ \\
        $3$ & $0.10\se{0.00}$ & $0.10\se{0.00}$ & \cellcolor{green!15}$0.07\se{0.01}$ & \cellcolor{green!30}$0.02\se{0.01}$ & \cellcolor{green!15}$0.78\se{0.02}$ & \cellcolor{green!30}$0.70\se{0.02}$ \\
        $2$ & $0.10\se{0.00}$ & $0.10\se{0.00}$ & \cellcolor{green!15}$0.07\se{0.01}$ & \cellcolor{green!30}$0.06\se{0.01}$ & \cellcolor{green!15}$0.64\se{0.01}$ & \cellcolor{green!30}$0.61\se{0.01}$ \\
        \bottomrule
    \end{tabular*}
    \vspace{0.35em}
    \caption{\footnotesize Supplementary COVID-19 random-mixing comparison for $\pi_0=\mathrm{RAC}(0.02)$. Entries are mean $\pm$ standard error over $20$ random seeds. The mixing weight $\hat p_{\mathrm{mix}}$ is chosen on the calibration split as the largest value satisfying $\hat p_{\mathrm{mix}}\hat r_0+(1-\hat p_{\mathrm{mix}})\hat r_*\le\varepsilon$.}
    \label{tab:covid-random-mix-rac002}
\end{table}

\subsection{Additional LLM thinking-mode routing results}
\label{app:additional-llm-routing-results}
\Cref{tab:llm-think-routing-qwen17} reports the same thinking-mode routing
experiment as in \Cref{subsec:llm-routing}, but with the smaller fast baseline
$\pi_0=\text{Qwen3-1.7B}$. 
The same pattern persists: at nearly matched empirical violation risk, the score-based
post-processor switches to the thinking model less often than random mixing and
therefore uses fewer FLOPs.

\begin{table}
    \centering
    \small
    \setlength{\tabcolsep}{2.7pt}
    \renewcommand{\arraystretch}{1.14}
    \begin{tabular*}{\textwidth}{@{\extracolsep{\fill}}lcccccc@{}}
        \toprule
        \multicolumn{7}{c}{\bfseries $\pi_0=\text{Qwen3-1.7B}$, $\pi_s=\text{Qwen3-32B}$} \\
        \midrule
        & \multicolumn{2}{c}{Violation risk} & \multicolumn{2}{c}{Switch rate} & \multicolumn{2}{c}{FLOPs (T)} \\
        \cmidrule(lr){2-3}\cmidrule(lr){4-5}\cmidrule(lr){6-7}
        $\varepsilon$ & post-processed & random-mix & post-processed & random-mix & post-processed & random-mix \\
        \midrule
        $0.25$ & $0.25\se{0.00}$ & $0.25\se{0.00}$ & \cellcolor{green!30}$0.89\se{0.00}$ & \cellcolor{red!30}$0.95\se{0.00}$ & \cellcolor{green!30}$153.91\se{0.66}$ & \cellcolor{red!30}$163.80\se{0.57}$ \\
        $0.28$ & $0.28\se{0.00}$ & $0.28\se{0.00}$ & \cellcolor{green!30}$0.76\se{0.00}$ & \cellcolor{red!30}$0.86\se{0.00}$ & \cellcolor{green!30}$133.31\se{0.76}$ & \cellcolor{red!30}$148.41\se{0.59}$ \\
        $0.30$ & $0.30\se{0.00}$ & $0.30\se{0.00}$ & \cellcolor{green!30}$0.69\se{0.00}$ & \cellcolor{red!30}$0.80\se{0.00}$ & \cellcolor{green!30}$121.23\se{0.75}$ & \cellcolor{red!30}$138.64\se{0.61}$ \\
        $0.33$ & $0.33\se{0.00}$ & $0.33\se{0.00}$ & \cellcolor{green!30}$0.59\se{0.00}$ & \cellcolor{red!30}$0.70\se{0.00}$ & \cellcolor{green!30}$104.85\se{0.77}$ & \cellcolor{red!30}$122.16\se{0.60}$ \\
        $0.36$ & $0.36\se{0.00}$ & $0.36\se{0.00}$ & \cellcolor{green!30}$0.50\se{0.00}$ & \cellcolor{red!30}$0.61\se{0.00}$ & \cellcolor{green!30}$89.75\se{0.71}$ & \cellcolor{red!30}$106.85\se{0.84}$ \\
        $0.39$ & $0.39\se{0.00}$ & $0.40\se{0.00}$ & \cellcolor{green!30}$0.41\se{0.00}$ & \cellcolor{red!30}$0.52\se{0.00}$ & \cellcolor{green!30}$75.24\se{0.62}$ & \cellcolor{red!30}$90.93\se{0.68}$ \\
        $0.45$ & $0.45\se{0.00}$ & $0.45\se{0.00}$ & \cellcolor{green!30}$0.25\se{0.00}$ & \cellcolor{red!30}$0.33\se{0.00}$ & \cellcolor{green!30}$48.54\se{0.74}$ & \cellcolor{red!30}$61.53\se{0.79}$ \\
        $0.50$ & $0.50\se{0.00}$ & $0.50\se{0.00}$ & \cellcolor{green!30}$0.13\se{0.00}$ & \cellcolor{red!30}$0.18\se{0.00}$ & \cellcolor{green!30}$27.53\se{0.64}$ & \cellcolor{red!30}$35.60\se{0.79}$ \\
        \bottomrule
    \end{tabular*}
    \vspace{0.35em}
    \caption{\footnotesize Supplementary LLM thinking-mode routing results for our algorithm versus the random-mixing baseline with $\pi_0=\text{Qwen3-1.7B}$. Compute is per-request forward FLOPs in TFLOPs. Entries are mean $\pm$ standard error over $20$ random seeds.}
    \label{tab:llm-think-routing-qwen17}
\end{table}

\subsection{Existing Assets, Licenses, and Terms of Use}
\label{app:assets-licenses}

We used the following existing datasets, model architectures, and pretrained model
weights. We cite the original scientific sources in the main text and record the
corresponding licenses or terms of use here. We do not redistribute any third-party
datasets or third-party pretrained model weights.

\paragraph{COVID-19 Radiography Database.}
The COVID-19 radiograph diagnosis experiments use the COVID-19 Radiography Database
\citep{chowdhury2020can,rahman2021exploring}. The Kaggle listing for this dataset
identifies the license/ownership field as \emph{Data files \copyright{} Original
Authors}. This is not a standard open-source or Creative Commons license. We used
the data only for research evaluation and do not redistribute the images; users of
any released code must obtain the dataset from the original source and comply with
the Kaggle listing and original-author terms.

\paragraph{Inception-V3, TorchVision, and ImageNet-pretrained weights.}
For the COVID-19 image classifier, we initialized Inception-V3
\citep{szegedy2015going,szegedy2016rethinking} using the TorchVision implementation
\citep{torchvision2016}, specifically the
\texttt{torchvision.models.inception\_v3} model with
\texttt{Inception\_V3\_Weights.IMAGENET1K\_V1}. TorchVision is distributed under
the BSD 3-Clause License. TorchVision notes that pretrained weights may have
additional licenses or terms derived from the dataset used for training. The
pretrained weights used here are ImageNet-1K pretrained weights
\citep{deng2009imagenet}; the ImageNet access terms restrict use of the ImageNet
database to non-commercial research and educational purposes. We use these weights
only for model initialization and do not redistribute ImageNet data or TorchVision
pretrained weights.

\paragraph{MMLU-Pro.}
The LLM routing experiments use the MMLU-Pro benchmark \citep{wang2024mmlu},
accessed through the Hugging Face dataset identifier
\texttt{TIGER-Lab/MMLU-Pro}. The Hugging Face dataset card lists the MMLU-Pro
dataset license as MIT. We use the benchmark for evaluation and do not redistribute
a modified copy of the dataset.

\paragraph{Qwen3 models.}
The LLM routing experiments use the Qwen3 model family \citep{yang2025qwen3},
accessed through the Hugging Face model identifiers
\texttt{Qwen/Qwen3-1.7B}, \texttt{Qwen/Qwen3-4B}, and
\texttt{Qwen/Qwen3-32B}. The Hugging Face model cards for these three repositories
list the license as Apache License 2.0. We use these models for inference only and
do not redistribute the model weights.

\section{Additional theoretical components}

\subsection{Constants appearing in the main text}
\label{app:constants}

For \Cref{thm:EK-log,cor:beta-logn-over-n}, define
\[
p_s:=\P(\ell(\hat\pi_*(X),Y)\ge c),
\qquad
p_0:=\P(\ell(\pi_0(X),Y)\ge c),
\qquad
\kappa:=\varepsilon-p_s,
\qquad
\xi:=p_0-\varepsilon .
\]
Let
\[
c_0:=\frac{\kappa}{4},
\qquad
\zeta>0
\quad\text{satisfy}\quad
\P(\hat\Delta(X)<\zeta)\le \frac{\kappa}{8},
\qquad
\lambda_W:=c_W\zeta^\beta .
\]
Then, for \Cref{thm:EK-log,cor:beta-logn-over-n}, one may take
\[
C_1:=\frac{16}{\lambda_W^2},
\qquad
C_2:=\frac{2}{\lambda_W}+3,
\qquad 
C_3:=3C_1.
\]
For \Cref{thm:near-opt-short}, 
with $C_R, L_J$ defined as in \Cref{thm:near-opt}, one may take
\[
C_4:=\frac{\sqrt{2}\,L_J}{C_R},
\qquad
C_5:=\frac{2L_J}{C_R},
\qquad
C_6:=1+\frac{L_J}{C_R}.
\]

\subsection{Preliminaries for non-monotone risk control guarantee}
\label{app:formal-rank-stability}

This appendix records definitions and preliminary results used in the proofs of results in \Cref{subsubsec:crc-nonmono-stability}. 
For a dataset
$D=\{z_r=(x_r,y_r):r\in [m]\}$, define
\[
\mathcal T(D):=\{0,\top\}\cup\{\hat\Delta(x_r):r\in [m]\},
\qquad
\hat R_D^+(\tau):=\frac{1}{m+1}\left[\sum_{z\in D}L(z;\tau)+1\right],
\]
and set
\[
\Select(D):=\max\{\tau\in\mathcal T(D):\hat R_D^+(\tau)\le\varepsilon\},
\]
with the convention $\Select(D)=0$ when the displayed set is empty. 
Given a subset $S\subseteq [m]$, we write $D_S := \{ z_r : r\in S \}$ for the sub-dataset of observations whose indices lie in $S$.

Let $N:=n+1$ and consider the augmented sample $\{ Z_i=(X_i,Y_i) : i\in[N] \}$, where $Z_{[n]}$ are calibration observations and $Z_{n+1}$ is the test point. Fix a fitted plug-in score $\hat \Delta$ obtained via \Cref{eq:plug-in-ests}, and assume $\hat \Delta$ is independent of the augmented sample.
Write $\hat\Delta_i:=\hat\Delta(X_i)$ for $i\in [N]$.
Assume $\hat\Delta(X)$ has an atomless distribution, so that $\hat\Delta_1$,
$\dots$,
$\hat\Delta_N$ have no ties a.s.
Let $\hat\Delta_{(1)}<\cdots<\hat\Delta_{(N)}$ be the order statistics.
For $i\in [N]$, let
$q_i\in[N]$ be the unique rank such that $\hat\Delta_i=\hat\Delta_{(q_i)}$.
Equivalently, for each $j\in [N]$, we may define the unique rank $v_j\in [N]$ such that
$\hat\Delta_{v_j}=\hat\Delta_{(j)}$.
By convention, we define 
$\hat \Delta_{(0)} := 0 $, 
$\hat\Delta_{(N+1)}:=\top$,
$q_0 := 0$,
and $q_{N+1} := N+1$.
For convenience, also introduce the notation 
$\hat R_{1:N}^+(\cdot) := \hat R_{Z_{[N]}}^+(\cdot)$ for the bumped empirical risk function over the augmented sample, 
and $\hat R_{-i}^+(\cdot) := \hat R_{Z_{[N]\setminus\{i\}}}^+$ for the bumped empirical risk functions over the leave-one-out sample with observation $i$ removed, 
for each $i\in [N]$.

Define the \textit{augmented threshold} as $\hat\tau_{1:N}:=\Select(Z_{[N]})$.
For $i\in [N]$, define the \textit{leave-one-out threshold} corresponding to observation $i$ as $\hat\tau_{-i}:=\Select(Z_{[N]\setminus\{i\}})$.

Next, we define the \textit{augmented threshold rank} $\hat j$ as follows.
If $\hat \tau_{1:N} = \hat \Delta_k$ for some $k\in [N]$, then $\hat j := q_k$.
If $\hat \tau_{1:N} = 0$, then $\hat j := 0$. 
If $\hat \tau_{1:N} = \top$, then $\hat j := N+1$.
(Since $\hat \Delta(X)$ has an atomless distribution on $[0,1]$, the case $\hat \Delta_{(1)} = 0$ occurs with probability zero, hence $\hat j$ is well-defined a.s.)
It follows that $\mathbf 1\{\hat\Delta_i<\hat\tau_{1:N}\}=\mathbf 1\{q_i<\hat j\}$ a.s.\ for all $i\in \{0,\ldots,N+1\}$.

Similarly, given $i\in [N]$, we define the \textit{leave-one-out threshold rank} $\hat j_{-i}$ associated to observation $i$ as follows.
If $\hat \tau_{-i} = \hat \Delta_k$ for some $k\in [N]\setminus \{i\}$, then $\hat j_{-i} := q_k$.
If $\hat \tau_{-i} = 0$, then $\hat j_{-i} := 0$. 
If $\hat \tau_{-i} = \top$, then $\hat j_{-i} := N+1$.
It follows that $\mathbf 1\{\hat\Delta_k<\hat\tau_{-i}\}=\mathbf 1\{q_k<\hat j_{-i}\}$ a.s.\ for all $i,k\in \{0,\ldots,N+1\}$.
Note that if we define \[
\mathcal J_{-i}:=\{q_k : k\in \{0,\ldots,N+1\}\setminus\{i\}\},
\]
then $\hat j_{-i}\in \mathcal J_{-i}$ for all $i\in [N]$.
Finally, for $i\in [N]$, if $\hat j_{-i} < N+1$, we define
\begin{align}\label{eq:j-minus-i-plus}
    \hat j_{-i}^+:=\min\{j\in\mathcal J_{-i}:j>\hat j_{-i}\}
\end{align}
as the smallest index greater than $\hat j_{-i}$ among the set of leave-one-out indices $\mathcal J_{-i}$.

With these definitions in place, we may define the \textit{rank stability} parameter $K$ as \begin{align}\label{eq:rank-stab}
    K := \max_{i\in[N]}|\hat j_{-i}-\hat j|.
\end{align}
The following result relates $K$ to the difference in losses between leave-one-out and augmented thresholds, and can be seen as a variant of \cite[Proposition 4]{angelopoulos2026conformalriskcontrolnonmonotonic}, which presents a similar bound for a selective classification algorithm.

\begin{proposition}\label{prop:stab-K}
Under the conditions in \Cref{cor:beta-logn-over-n}, we have
\[
\frac{1}{N}\sum_{i=1}^N \big|L(Z_i;\hat\tau_{-i})-L(Z_i;\hat\tau_{1:N})\big|
\le \frac{2K}{N}
\qquad\text{a.s.}
\]
\end{proposition}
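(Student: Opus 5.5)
The plan is to show that each summand $|L(Z_i;\hat\tau_{-i})-L(Z_i;\hat\tau_{1:N})|$ is nonzero only when the rank $q_i$ of observation $i$ falls between the leave-one-out rank $\hat j_{-i}$ and the augmented rank $\hat j$. We then count how many indices $i$ can satisfy this.

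\emph{Pointwise reduction.} Fix $i\in[N]$ and work on the probability-one event where the $\hat\Delta_k$ are distinct and positive. By the identities recorded above, $\mathbf 1\{\hat\Delta_i<\hat\tau_{1:N}\}=\mathbf 1\{q_i<\hat j\}$ and $\mathbf 1\{\hat\Delta_i<\hat\tau_{-i}\}=\mathbf 1\{q_i<\hat j_{-i}\}$. The loss $L(Z_i;\tau)$ depends on $\tau$ only through $\mathbf 1\{\hat\Delta_i<\tau\}$: it equals $\mathbf 1\{\ell(\pi_0(X_i),Y_i)\ge c\}$ when this indicator is one, and $\mathbf 1\{\ell(\hat\pi_*(X_i),Y_i)\ge c\}$ otherwise. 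The convention for $\top$ is consistent with this, since $\hat\Delta_i<\top$ always. Hence
\[
\big|L(Z_i;\hat\tau_{-i})-L(Z_i;\hat\tau_{1:N})\big|\le \big|\mathbf 1\{q_i<\hat j_{-i}\}-\mathbf 1\{q_i<\hat j\}\big|.
\]
The right-hand side is one only if $q_i$ lies in the half-open interval $[\min(\hat j,\hat j_{-i}),\max(\hat j,\hat j_{-i}))$. Because $|\hat j_{-i}-\hat j|\le K$ by the definition of $K$, this forces $q_i\in[\hat j-K,\hat j+K)$.

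\emph{Counting.} The map $i\mapsto q_i$ is a bijection from $[N]$ onto $[N]$. The integer interval $[\hat j-K,\hat j+K)$ contains at most $2K$ integers, so at most $2K$ indices $i$ have a nonzero summand. Each nonzero summand is at most $1$, because $L$ takes values in $\{0,1\}$. Summing over $i$ and dividing by $N$ gives the bound $2K/N$ almost surely.

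\emph{Main obstacle.} The delicate point is the bookkeeping of ranks, not any probabilistic argument. In particular, $\hat j_{-i}$ is measured in the ranks of the full sample, $q_k$ with $k\neq i$, and it is this convention that makes the comparison with $\hat j$ meaningful. I would check the endpoint conventions $\hat j\in\{0,N+1\}$ and $\hat j_{-i}\in\{0,N+1\}$ separately. I would also confirm that the identity for $\mathbf 1\{\hat\Delta_i<\hat\tau_{-i}\}$ holds even though $\hat\Delta_i$ is not in the leave-one-out grid; it does, since all scores are distinct almost surely. The argument uses no distributional assumptions beyond the absence of ties and of a zero score, so it holds almost surely under the conditions of \Cref{cor:beta-logn-over-n}.
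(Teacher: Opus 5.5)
Your proposal is correct and takes essentially the same route as the paper. You reduce each summand to $\bigl|\mathbf 1\{q_i<\hat j_{-i}\}-\mathbf 1\{q_i<\hat j\}\bigr|$ and then count admissible ranks near $\hat j$, using that ranks are a.s.\ distinct; the only cosmetic difference is that you bound the single window $[\hat j-K,\hat j+K)$ by $2K$ integers, whereas the paper splits it into the sets $S_+$ and $S_-$ and bounds each by $K$.
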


The proof of the above result is provided in Section \ref{sec:proof-prop-stab-k}.

\paragraph{A bound on $\E[K]$.}
For $i\in [N]$, define the random variables
\[
I^0_i:=\mathbf 1\{\ell(\pi_0(X_i),Y_i)\ge c\},
\qquad
I^s_i:=\mathbf 1\{\ell(\hat \pi_*(X_i),Y_i)\ge c\},
\qquad
W_i:=I^0_i-I^s_i,
\]
so that the violation loss may be written as
\begin{equation}\label{eq:L-nonmono-stab}
L(Z_i;\tau)=I^s_i+W_i\mathbf 1\{\hat\Delta(X_i)<\tau\}.
\end{equation}
Define the concomitants
$I^s_{(j)}:=I^s_{v_j}$ and $W_{(j)}:=W_{v_j}$ for $j\in[N]$.\footnote{The parentheses distinguish rank indices from the original observation indices.}
For $j\in\{0,1,\dots,N+1\}$, write $\tau_j := \hat \Delta_{(j)}$.

The loss sum over the augmented sample evaluated at $\tau = \tau_j$ is
\[
\sum_{k=1}^N L(Z_k;\tau_j)
=
\sum_{t=1}^N I^s_{(t)}+\sum_{t=1}^N W_{(t)}\mathbf 1\{t<j\},
\]
where the second sum is zero for $j=0$ and $j=1$, and is $\sum_{t=1}^N W_{(t)}$ for $j=N+1$.
We also define the centered partial sums
\[
T_j:=\sum_{t=1}^N (I^s_{(t)}-\varepsilon)+\sum_{t=1}^N W_{(t)}\mathbf 1\{t<j\}+(1-\varepsilon).
\]
Similarly, given $i\in [N]$ and $j\in \{0,\ldots,N+1\}$, the loss sum over the leave-one-out sample that removes observation $i$ evaluated at $\tau = \tau_j$ is 
\[
\sum_{k\in [N]\setminus \{i\}} L(Z_k;\tau_j)
=
\sum_{t\in [N]\setminus \{q_i\}} I^s_{(t)}
+\sum_{t\in [N]\setminus \{q_i\}} W_{(t)}\mathbf 1\{t<j\},
\]
and we define the centered partial sums
\[
T_j^{-i}:=
\sum_{t\in [N]\setminus \{q_i\}} (I^s_{(t)} - \ep)
+\sum_{t\in [N]\setminus \{q_i\}} W_{(t)}\mathbf 1\{t<j\}
+ (1-\ep).
\]
Note that $T_j^{-i}$ is related to $T_j$ via the identity
\[
T_j^{-i}:=T_j-(I^s_i-\varepsilon)-W_i\mathbf 1\{q_i<j\},
\qquad j\in\{0,1,\dots,N+1\}.
\]
We say that a threshold $\tau\in [0,1]\cup \{\top\}$ is \textit{$\hat R_{1:N}^+$-feasible} if $\hat R_{1:N}^+(\tau)\le \ep$.
Given $i\in [N]$, we say that a threshold $\tau\in [0,1]\cup \{\top\}$ is \textit{$\hat R_{-i}^+$-feasible} if $\hat R_{-i}^+(\tau)\le \ep$.
Likewise, we say that an index $j\in \{0,\ldots, N+1\}$ is \textit{$\hat R_{1:N}^+$-feasible} if $\hat R_{1:N}^+(\tau_j)\le \ep$.
Given $i\in [N]$, we say that an index $j\in \mathcal J_{-i}$ is \textit{$\hat R_{-i}^+$-feasible} if $\hat R_{-i}^+(\tau_j)\le \ep$.
It is easy to see that an index $j\in \{0,\ldots,N+1\}$ is $\hat R_{1:N}^+$-feasible iff $T_j\le0$, because
$
\hat R_{1:N}^+(\tau_j)-\varepsilon
=
\frac{T_j}{N+1}.
$
Similarly, an index $j\in\mathcal J_{-i}$ is $\hat R_{-i}^+$-feasible iff $T_j^{-i}\le0$, because
$
\hat R_{-i}^+(\tau_j)-\varepsilon
=
\frac{T_j^{-i}}{N}.
$

\begin{lemma}[Uniform perturbation bound]\label{lem:T-perturb}
Under the conditions in \Cref{cor:beta-logn-over-n}, for every $i\in[N]$ and every $j\in\{0,1,\dots,N+1\}$,
$|T_j^{-i}-T_j|\le 1$ a.s.
\end{lemma}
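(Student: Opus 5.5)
The plan is to work directly from the identity stated just before the lemma,
\[
T_j^{-i}=T_j-(I^s_i-\varepsilon)-W_i\mathbf 1\{q_i<j\},
\]
so that $T_j^{-i}-T_j=-\bigl[(I^s_i-\varepsilon)+W_i\mathbf 1\{q_i<j\}\bigr]$. I will first make sure this identity really holds. The sums defining $T_j^{-i}$ run over all ranks $t\in[N]\setminus\{q_i\}$. The omitted rank $t=q_i$ corresponds to the observation $v_{q_i}=i$, so its contributions are $I^s_{(q_i)}-\varepsilon=I^s_i-\varepsilon$ and $W_{(q_i)}\mathbf 1\{q_i<j\}=W_i\mathbf 1\{q_i<j\}$. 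The constant term $(1-\varepsilon)$ is the same in both $T_j$ and $T_j^{-i}$. The only probabilistic input is that the ranks $q_i$ and concomitants are well defined, which holds a.s.\ because $\hat\Delta(X)$ is atomless, so there are no ties. This is the only reason the conclusion is stated almost surely.

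Next I split into two cases according to the indicator. By definition of $L$ in \eqref{eq:L-nonmono-stab}, the bracket equals $L(Z_i;\tau_j)-\varepsilon$. To see this, note that the leave-one-out sum $T_j^{-i}$ removes observation $i$'s term $L(Z_i;\tau_j)-\varepsilon$. Also, $\mathbf 1\{\hat\Delta_i<\tau_j\}=\mathbf 1\{q_i<j\}$ a.s., including at the boundary conventions $j=0$ and $j=N+1$.

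\emph{Case $q_i\ge j$.} The bracket is $I^s_i-\varepsilon$.

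\emph{Case $q_i<j$.} The bracket is $I^s_i+W_i-\varepsilon=I^0_i-\varepsilon$.

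In either case the bracket has the form $b-\varepsilon$ with $b\in\{0,1\}$, since $L$ is $\{0,1\}$-valued.

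Finally, since $\varepsilon\in[0,1]$, we get $|b-\varepsilon|\le\max(\varepsilon,1-\varepsilon)\le 1$, which gives $|T_j^{-i}-T_j|\le1$ uniformly in $i$ and $j$.

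I do not expect a real obstacle. The only delicate step is the bookkeeping check that removing observation $i$ corresponds exactly to removing rank $q_i$ from the rank-indexed sums. This includes confirming that the normalizing constant $(1-\varepsilon)$ does not shift between the $N$-sample and the $(N-1)$-sample, which holds because the paper's centering already absorbs the $m+1$ normalizations.
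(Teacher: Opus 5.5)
Your proposal is correct and is essentially the paper's proof. Both start from the identity $T_j^{-i}=T_j-(I^s_i-\varepsilon)-W_i\mathbf 1\{q_i<j\}$, split on whether $q_i<j$ to get a difference of $\varepsilon-I^0_i$ or $\varepsilon-I^s_i$, and conclude because this value lies in $\{\varepsilon,-(1-\varepsilon)\}$. Your extra checks (identifying the removed term as $L(Z_i;\tau_j)-\varepsilon$, verifying the boundary conventions at $j=0$ and $j=N+1$, and noting that the constant $(1-\varepsilon)$ is shared) are valid details that the paper leaves implicit.
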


The proof of the above result is provided in Section \ref{sec:proof-lem-t-perturb}.

\begin{lemma}[One-step bounds]\label{lem:endpoint-crossing}
Assume the conditions in \Cref{thm:EK-log} hold, and assume $N\ge 2$. For each \(i\in[N]\), define
\[
a_i:=\min\bigl\{\mathcal J_{-i}\setminus\{0,N+1\}\bigr\},
\qquad
b_i:=\max\bigl\{\mathcal J_{-i}\setminus\{N+1\}\bigr\}.
\]
Then
$T_1=T_0$,
$T^{-i}_{a_i}=T^{-i}_0$,
$|T_{N+1}-T_N|\le 1$,
and
$|T^{-i}_{N+1}-T^{-i}_{b_i}|\le1$.
If we also have \(0<\hat j<N+1\), then
$|T_{\hat j+1}-T_{\hat j}|\le 1$.
Similarly, if for some $i\in [N]$ we have
\(0<\hat j_{-i}<N+1\), then 
$|T^{-i}_{\hat j^+_{-i}}-T^{-i}_{\hat j_{-i}}|\le 1$,
where $\hat j_{-i}^+$ is defined in \Cref{eq:j-minus-i-plus}.
\end{lemma}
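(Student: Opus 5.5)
The plan is to read every claim directly off the explicit formulas for $T_j$ and $T_j^{-i}$. The basic fact is that $T_j$ depends on $j$ only through the indicator sum $\sum_{t=1}^N W_{(t)}\mathbf 1\{t<j\}$, so for consecutive indices
\[
T_{j+1}-T_j = W_{(j)}\mathbf 1\{1\le j\le N\},
\]
and $|W_{(j)}|\le 1$ because $W$ is a difference of two indicators. In the leave-one-out version the sum runs over $t\in[N]\setminus\{q_i\}$. So if $j<j'$ are consecutive elements of $\mathcal J_{-i}$, the increment $T^{-i}_{j'}-T^{-i}_j$ equals $\sum_{t\in[N]\setminus\{q_i\},\,j\le t<j'} W_{(t)}$.

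\textbf{Endpoint claims.} I would handle these one at a time.
\begin{itemize}
\item $T_1=T_0$: the indicator $\mathbf 1\{t<j\}$ vanishes for every $t\ge1$ when $j\in\{0,1\}$.
\item $T^{-i}_{a_i}=T^{-i}_0$: here $a_i$ is either $1$ (when $q_i\neq 1$) or $2$ (when $q_i=1$). In both cases the only $t<a_i$ with $t\ge1$ is possibly $t=1=q_i$, which is excluded from the sum.
\item $|T_{N+1}-T_N|=|W_{(N)}|\le1$.
\item $|T^{-i}_{N+1}-T^{-i}_{b_i}|\le 1$: $b_i$ is $N$ when $q_i\ne N$, and $N-1$ otherwise. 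The difference is then a sum of $W_{(t)}$ over $t\in\{b_i,\dots,N\}\setminus\{q_i\}$, which contains exactly one index. This is where $N\ge2$ guarantees $b_i\ge1$ is a genuine rank rather than $0$; if $b_i=0$ the same reasoning still gives at most one term.
\end{itemize}

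\textbf{Interior claims.} If $0<\hat j<N+1$, then $T_{\hat j+1}-T_{\hat j}=W_{(\hat j)}$, which has absolute value at most $1$.

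For the leave-one-out claim, suppose $0<\hat j_{-i}<N+1$. Since $\hat j_{-i}\in\mathcal J_{-i}$, we have $\hat j_{-i}\ne q_i$. The next element $\hat j^+_{-i}$ is either $\hat j_{-i}+1$, or $\hat j_{-i}+2$ when $q_i=\hat j_{-i}+1$ is skipped; in the latter case, if $q_i=N$, it is $N+1$. In every case the index set $\{t\in[N]\setminus\{q_i\}: \hat j_{-i}\le t<\hat j^+_{-i}\}$ equals $\{\hat j_{-i}\}$. Hence the increment is $W_{(\hat j_{-i})}$, with absolute value at most $1$.

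\textbf{Main obstacle.} The only delicate part is bookkeeping the skipped rank $q_i$ and the conventions $q_0=0$, $q_{N+1}=N+1$. Everything holds on the a.s.\ event that the scores have no ties and $\hat\Delta_{(1)}>0$, which is supplied by atomlessness.
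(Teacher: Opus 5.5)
Your proposal is correct and follows essentially the same route as the paper: both read each claim off the explicit increment $T_{j+1}-T_j=W_{(j)}\mathbf 1\{1\le j\le N\}$ and its leave-one-out analogue over $[N]\setminus\{q_i\}$, together with $|W_{(t)}|\le 1$. The only difference is presentational. You compute $a_i\in\{1,2\}$, $b_i\in\{N-1,N\}$ and $\hat j^+_{-i}\in\{\hat j_{-i}+1,\hat j_{-i}+2\}$ explicitly, whereas the paper just uses that no element of $\mathcal J_{-i}=\{0,\dots,N+1\}\setminus\{q_i\}$ lies strictly between the relevant consecutive indices.
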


The proof of the above result is provided in Section \ref{sec:proof-lem-one-step}.

\begin{lemma}[Low-sum block between $\hat j$ and $\hat j_{-i}$]\label{lem:block-3}
Assume the augmented threshold rank obeys $\hat j<N+1$ and $T_{\hat j}\le0<T_{\hat j+1}$.
For each $i\in [N]$, assume the leave-one-out threshold rank associated to observation $i$ obeys $\hat j_{-i}<N+1$ and $T^{-i}_{\hat j_{-i}}\le0<T^{-i}_{\hat j_{-i}^+}$, where $\hat j_{-i}^+$ is defined in \Cref{eq:j-minus-i-plus}.
Then, under the conditions in \Cref{cor:beta-logn-over-n}, we have
\[
\sum_{t=\max\{1,\min\{\hat j,\hat j_{-i}\}\}}^{\max\{\hat j,\hat j_{-i}\}-1} W_{(t)} \le 1,
\]
where we use the convention that an empty sum equals zero.
\end{lemma}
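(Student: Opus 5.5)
The plan is to convert feasibility and infeasibility of the thresholds into statements about the centered walks $T_j$ and $T^{-i}_j$. The block sum is a difference of two such walk values. Integrality of the $W_{(t)}$ then turns a strict bound ``$<1$'' (or ``$<1+\varepsilon$'') into ``$\le 0$'' (or ``$\le 1$'').

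\textbf{Telescoping identity.} For $1\le j'\le j\le N+1$, the definitions give
\[
T_j-T_{j'}=\sum_{t=j'}^{j-1}W_{(t)} .
\]
Because $T_1=T_0$ (\Cref{lem:endpoint-crossing}), this also covers $j'=0$ once the sum is started at $t=1$, which matches the $\max\{1,\cdot\}$ in the statement. Each $W_{(t)}\in\{-1,0,1\}$ is an integer. The hypotheses of \Cref{thm:EK-log} give $\varepsilon<p_0\le 1$, so $\varepsilon<1$. Fix $i$ and split into the cases $\hat j_{-i}>\hat j$ and $\hat j_{-i}<\hat j$; the case of equality is an empty sum.

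\textbf{Case $\hat j_{-i}>\hat j$.} Since $T^{-i}_{\hat j_{-i}}\le 0$, \Cref{lem:T-perturb} gives $T_{\hat j_{-i}}\le 1$. The hypothesis gives $T_{\hat j+1}>0$. Hence
\[
\sum_{t=\hat j+1}^{\hat j_{-i}-1}W_{(t)}=T_{\hat j_{-i}}-T_{\hat j+1}<1 ,
\]
and integrality forces this sum to be $\le 0$. If $\hat j\ge1$, add the single remaining term $W_{(\hat j)}\le 1$ to get a block sum $\le 1$. If $\hat j=0$, the block starts at $t=1=\hat j+1$, so the block sum is already $\le 0$.

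\textbf{Case $\hat j_{-i}<\hat j$.} Here the roles reverse. We know $T_{\hat j}\le 0$, and $T^{-i}_{\hat j^+_{-i}}>0$ with $\hat j^+_{-i}\le \hat j$. Rather than applying the crude bound of \Cref{lem:T-perturb}, I would use the exact identity
\[
T_{\hat j^+_{-i}}=T^{-i}_{\hat j^+_{-i}}+(I^s_i-\varepsilon)+W_i\mathbf 1\{q_i<\hat j^+_{-i}\}.
\]
The elements of $\mathcal J_{-i}$ skip only $q_i$, so the gap between $\hat j_{-i}$ and $\hat j^+_{-i}$ contains at most the index $q_i$.

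If $q_i\neq \hat j_{-i}+1$, then $\hat j^+_{-i}=\hat j_{-i}+1$ and the identity gives $T_{\hat j^+_{-i}}>-1$. The telescoped sum from $\hat j^+_{-i}$ to $\hat j-1$ is then $<1$, hence $\le 0$. Adding $W_{(\hat j_{-i})}\le 1$ bounds the block sum by $1$.

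If $q_i=\hat j_{-i}+1$, then $\hat j^+_{-i}=\hat j_{-i}+2$ and $T_{\hat j^+_{-i}}>I^s_i-\varepsilon+W_i$. The block sum is
\[
W_{(\hat j_{-i})}+W_i+\bigl(T_{\hat j}-T_{\hat j^+_{-i}}\bigr)<W_{(\hat j_{-i})}-I^s_i+\varepsilon\le 1+\varepsilon .
\]
Since the block sum is an integer and $\varepsilon<1$, it is $\le 1$. When $\hat j_{-i}=0$, the same reasoning applies with the block starting at $1$, using $T^{-i}_{a_i}=T^{-i}_0$ from \Cref{lem:endpoint-crossing}.

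\textbf{Main obstacle.} The delicate step is this second case. The naive route, \Cref{lem:T-perturb} combined with a two-term remainder, only yields a bound of $2$. Reaching $1$ requires tracking exactly how the deleted observation $q_i$ sits relative to $\hat j_{-i}$ and $\hat j^+_{-i}$, and using the exact perturbation identity together with integrality and $\varepsilon<1$.
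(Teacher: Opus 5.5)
Your proof is correct. In the case $\hat j_{-i}>\hat j$ it matches the paper up to bookkeeping: you peel off $W_{(\hat j)}$ and use $T_{\hat j+1}>0$, while the paper uses $T_{\hat j}>-1$ and integrality of the whole block.

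In the case $\hat j_{-i}<\hat j$ you take a more laborious route than necessary, and your closing claim that the crude bound of \Cref{lem:T-perturb} only yields $2$ is mistaken. The paper never transfers at $\hat j^+_{-i}$.

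\begin{itemize}
\item It first applies the leave-one-out one-step bound of \Cref{lem:endpoint-crossing} inside the walk $T^{-i}$. There the increment from $\hat j_{-i}$ to $\hat j^+_{-i}$ contains at most one $W$-term, wherever $q_i$ sits, because $q_i$ is simply absent from that sum. This gives $T^{-i}_{\hat j_{-i}}>-1$.
\item Only then does it apply \Cref{lem:T-perturb}, at the index $\hat j_{-i}$, to get $-2<T_{\hat j_{-i}}\le 1$.
\item Combined with $-1<T_{\hat j}\le 0$, both cases reduce to a difference of $T$'s that is strictly less than $2$, hence at most $1$ by integrality.
\end{itemize}

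So no case split on $q_i$, no exact perturbation identity, and no use of $\varepsilon<1$ are needed. The two-term remainder you worried about arises only because you carry the leave-one-out infeasibility at $\hat j^+_{-i}$ over to the full walk, where the deleted point can reappear in the gap.

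Your version does buy two things. It gives a slightly sharper intermediate fact: when $q_i\neq\hat j_{-i}+1$, the sub-block after $\hat j_{-i}$ has nonpositive sum. It also handles $\hat j=0$ or $\hat j_{-i}=0$ directly, without first ruling them out.

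One small slip: your parenthetical claim $\hat j^+_{-i}\le\hat j$ fails when $\hat j=q_i=\hat j_{-i}+1$. This is harmless, since the block is then the single term $W_{(\hat j_{-i})}\le 1$. Your displayed identity also remains valid when read as a difference of $T$'s.
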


The proof of the above result is provided in Section \ref{sec:proof-lem-block-3}.

\subsection{Full near-optimality theorem}
\label{app:near-optimality-full}

\begin{theorem}[Near-optimality of exact-safe fallback post-processing]
\label{thm:near-opt}
Fix \(\delta\in(0,1)\). Suppose the following conditions hold.
\begin{enumerate}
    \item The fitted fallback policy is exact-safe: 
    \[
    \mathbf 1\{\ell(\hat \pi_*(x),y)\ge c\}=0
    \quad\text{for all } x\in \mathcal X, y\in \mathcal Y.
    \]
    \item The calibration and test observations \((X_i,Y_i)_{i=1}^{n+1}\) are i.i.d. 
    \item The oracle risk curve
    \[
    R^*(\tau):=\P(\ell(\pi_0(X),Y)\ge c,\Delta(X)<\tau)
    \]
    for $\tau\in [0,1]$ has an interior crossing: there exists \(\tau^*\in(0,1)\) satisfying
    \(R^*(\tau^*)=\varepsilon\). Moreover, there are constants \(C_R>0\) and
    \(\rho\in(0,\min\{\tau^*,1-\tau^*\})\) such that
    \[
    |R^*(\tau)-R^*(\tau^*)|
    \ge C_R|\tau-\tau^*|
    \]
    whenever \(|\tau-\tau^*|\le\rho\).
    \item The oracle agreement curve is locally Lipschitz at \(\tau^*\): there
    is a constant \(L_J<\infty\) such that
    \[
    \big|\P(\Delta(X)<\tau)-\P(\Delta(X)<\tau^*)\big|
    \le L_J|\tau-\tau^*|
    \]
    whenever \(|\tau-\tau^*|\le\rho\).
\end{enumerate}
Define
\[
\mathcal E_\Delta(u):=
\P(|\hat\Delta(X)-\Delta(X)|>u)
+\sup_{t\in[0,1]}\P(|\Delta(X)-t|\le u),
\qquad
\ep_\Delta:=\inf_{u>0}\mathcal E_\Delta(u).
\]
Also define
\[
\hat p:=\frac1n\sum_{i=1}^n\mathbf 1\{\ell(\pi_0(X_i),Y_i)\ge c\},
\qquad
\varepsilon_{1,n}(\delta):=
\sqrt{\frac{\log(4/\delta)}{2n}}
+\sqrt{\frac{\hat p\log(4/\delta)}{2n}},
\]
and
\[
\varepsilon_n(\delta):=
\frac{n}{n+1}\varepsilon_{1,n}(\delta)
+\frac{1}{n+1}
+\ep_\Delta,
\qquad
\nu_n(\delta):=
C_R^{-1}\left(\varepsilon_n(\delta)+\frac{1}{n+1}\right),
\]
\[
\eta_n(\delta):=L_J\nu_n(\delta)+\ep_\Delta.
\]
With probability at least \(1-\delta\) over the calibration data, if \(\nu_n(\delta)\le\rho\), then
\[
\left(J^*-\P\bigl(\hat\pi(X;\hat\tau)=\pi_0(X)\bigr)\right)_+
\le \eta_n(\delta),
\]
where \(J^*:=\P(\Delta(X)<\tau^*)\) is the optimal population agreement value for the problem in
\Cref{eq:zero-one-obj}. 
Consequently, we have the bound
\[ 
\left(J^*-\P\bigl(\hat\pi(X;\hat\tau)=\pi_0(X)\bigr)\right)_+
\le
C_4\sqrt{\frac{\log(4/\delta)}{n}}
+
\frac{C_5}{n+1}
+
C_6\varepsilon_\Delta, \]
where $C_4, C_5, C_6$ are defined in \Cref{app:constants}.
In addition, whenever \(\ep_\Delta\to0\), the sub-optimality positive-part is $O(n^{-1/2})+O(\ep_\Delta)$.
\end{theorem}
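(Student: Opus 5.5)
With the exact-safe fallback, $I^s_i\equiv 0$, so the violation loss is monotone: $L(x,y;\tau)=\mathbf 1\{\ell(\pi_0(x),y)\ge c\}\mathbf 1\{\hat\Delta(x)<\tau\}$. The empirical risk $\hat R_n(\tau):=\frac1n\sum_i L(Z_i;\tau)$ and the population risk $\hat R(\tau):=\P(\ell(\pi_0(X),Y)\ge c,\hat\Delta(X)<\tau)$ are both nondecreasing in $\tau$. The plan has three steps: show $\hat\tau$ is close to $\tau^*$, convert that closeness into agreement, and account for the score error.

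\textbf{Concentration.} The class $\{\mathbf 1\{\ell(\pi_0(x),y)\ge c\}\mathbf 1\{\hat\Delta(x)<\tau\}\}_\tau$ is a one-parameter monotone family of indicators. Hence a DKW/Massart-type bound gives, with probability at least $1-\delta/2$,
\[
\sup_\tau|\hat R_n(\tau)-\hat R(\tau)|\le \sqrt{\log(4/\delta)/(2n)}.
\]
The second term $\sqrt{\hat p\log(4/\delta)/(2n)}$ in $\varepsilon_{1,n}$ suggests an additional Bernstein/relative-deviation step. Its role is to control the discretization of the grid, or the mass of violating points, using the remaining $\delta/2$. I would allocate $\delta/2$ to each event and take a union bound.

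\textbf{Localizing $\hat\tau$.} Write $\hat R^+_n(\tau)=\frac{n}{n+1}\hat R_n(\tau)+\frac1{n+1}$. Monotonicity gives two facts. First, $\hat\tau$ is feasible, so $\hat R_n(\hat\tau)\le \varepsilon$. Second, the next grid point $\hat\tau^+$ is infeasible, and crossing a single grid point raises $(n+1)\hat R_n^+$ by at most $1$. Together these yield
\[
|\hat R^+_n(\hat\tau)-\varepsilon|\le \tfrac{1}{n+1}.
\]
Combining with the concentration event gives $|\hat R(\hat\tau)-\varepsilon|\le \frac{n}{n+1}\varepsilon_{1,n}+\frac{1}{n+1}+\frac1{n+1}$.

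Next, I compare $\hat R$ with $R^*$. The events $\{\hat\Delta<\tau\}$ and $\{\Delta<\tau\}$ differ only on $\{|\hat\Delta-\Delta|>u\}\cup\{|\Delta-\tau|\le u\}$, so
\[
\sup_\tau|\hat R(\tau)-R^*(\tau)|\le\mathcal E_\Delta(u)\quad\text{for every }u>0.
\]
Taking the infimum over $u$ gives the bound $\varepsilon_\Delta$. Hence $|R^*(\hat\tau)-R^*(\tau^*)|\le\varepsilon_n(\delta)+\frac1{n+1}$.

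Finally, I invoke the local growth condition. Since $R^*$ is monotone and $\nu_n(\delta)\le\rho$, a deviation $|\hat\tau-\tau^*|>\nu_n$ would force $|R^*(\hat\tau)-\varepsilon|\ge C_R\nu_n$, a contradiction. Here monotonicity extends the bound outside the $\rho$-window. So $|\hat\tau-\tau^*|\le\nu_n(\delta)$.

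\textbf{Agreement.} Since $\hat\pi(X;\hat\tau)=\pi_0(X)$ whenever $\hat\Delta(X)<\hat\tau$, we have
\[
\P(\hat\pi(X;\hat\tau)=\pi_0(X)\mid\hat\tau)\ge \P(\hat\Delta(X)<\hat\tau)\ge \P(\Delta(X)<\hat\tau)-\varepsilon_\Delta,
\]
using the same symmetric-difference argument. The Lipschitz condition then gives $\P(\Delta<\hat\tau)\ge J^*-L_J\nu_n$, so the deficit is at most $L_J\nu_n+\varepsilon_\Delta=\eta_n(\delta)$. Identifying $J^*=\P(\Delta<\tau^*)$ as the optimum uses \Cref{thm:zero-one-pop}: exact safety gives $g_*\equiv0$ and $B=\varepsilon$, and $\tau^*$ matches $\tau$ there because of the strict growth of $R^*$. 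The explicit constants follow from $\hat p\le1$: this gives $\varepsilon_{1,n}\le\sqrt2\sqrt{\log(4/\delta)/n}$, and expanding $\eta_n$ yields $C_4,C_5,C_6$.

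\textbf{Main obstacle.} I expect the main obstacle to be the localization step. It needs care with the edge cases $\hat\tau=\top$, $\hat\tau=0$, and an empty feasible set. It also needs the one-point-jump argument, which relies on atomless scores so that no ties occur. I would handle these by showing that on the good event $\varepsilon$ lies strictly between the endpoint risks, so that $\hat\tau$ is an interior grid point. That holds for $n$ large, since $\tau^*\in(0,1)$ with margin $\rho$.
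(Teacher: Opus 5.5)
Your concentration and agreement steps are fine, and your concentration step is simpler than the paper's. Set $V:=\hat\Delta(X)$ on $\{\ell(\pi_0(X),Y)\ge c\}$ and $V:=2$ otherwise. Then $\hat R_n(\tau)$ is the left-continuous empirical CDF of $V$ on $[0,1]$, and DKW--Massart gives $\sup_\tau|\hat R_n(\tau)-R(\tau)|\le\sqrt{\log(4/\delta)/(2n)}\le\varepsilon_{1,n}(\delta)$ with probability $1-\delta/2$. So no second event is needed. In the paper, the $\hat p$ term comes from writing $R=pF$ and $\hat R_n=\hat p\hat F_m$, then applying Hoeffding to $\hat p$ and DKW to the violating subsample; it has nothing to do with grid discretization.

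The gap is in the localization step, on the side that matters. Your lower bound $\hat R_n^+(\hat\tau)\ge\varepsilon-\frac1{n+1}$ relies on moving from $\hat\tau$ to the next grid point adding at most one violating calibration point, i.e.\ on the $\hat\Delta_i$ having no ties. Atomless $\hat\Delta(X)$ is a hypothesis of \Cref{thm:EK-log}, not of this theorem, and it can fail here (e.g.\ a finely rounded $\hat f$). A small $\varepsilon_\Delta$ only forces each atom of $\hat\Delta$ to have mass at most $\varepsilon_\Delta$, because $\{\hat\Delta=t\}\subseteq\{|\hat\Delta-\Delta|>u\}\cup\{|\Delta-t|\le u\}$; that mass does not shrink with $n$. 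An atom at $\hat\tau$ therefore carries order $n\,\P(\hat\Delta=\hat\tau)$ tied calibration points, and $\hat R_n^+(\hat\tau)$ can sit a fixed amount below $\varepsilon$. Your chain then leaves an extra jump term in the lower bound on $\hat\tau$. A further argument could bound it by order $\varepsilon_\Delta+\sqrt{\log(4/\delta)/n}$, but that gives worse constants, not the stated $\nu_n(\delta)$. Since the agreement shortfall is driven by $\hat\tau$ being too small, $\eta_n(\delta)$ does not follow. Separately, even without ties your final step is not literally a contradiction. You have $|R^*(\hat\tau)-\varepsilon|\le C_R\nu_n$, and a deviation only forces $\ge C_R\nu_n$; this contradicts only when $\nu_n<\rho$.

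The paper never evaluates the empirical risk at the data-dependent $\hat\tau$. Take $\nu=\nu_n(\delta)\le\rho$. On the concentration event, $\sup_\tau|\hat R_n^+(\tau)-R^*(\tau)|\le\varepsilon_n(\delta)$. The growth condition with $C_R\nu=\varepsilon_n(\delta)+\frac1{n+1}$ then gives $\hat R_n^+(\tau^*-\nu)\le\varepsilon-\frac1{n+1}$ and $\hat R_n^+(\tau^*+\nu)\ge\varepsilon+\frac1{n+1}$. Let $t^+$ be the smallest grid point with $t^+\ge\tau^*-\nu$. No $\hat\Delta_i$ lies in $[\tau^*-\nu,t^+)$, and the case $t^+=\top$ is identical, so $t^+$ has the same empirical loss as $\tau^*-\nu$. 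It is therefore feasible, and $\hat\tau\ge\tau^*-\nu$. Monotonicity makes every grid point at or beyond $\tau^*+\nu$, including $\top$, infeasible, so $\hat\tau<\tau^*+\nu$. This sandwich does not care about tie sizes. It handles the $\top$ and empty-feasible-set cases using only $\nu_n(\delta)\le\rho$, not ``$n$ large''. It also uses the extra $\frac1{n+1}$ in $\nu_n$ as a strictness margin rather than spending it on a jump. One smaller point: \Cref{thm:zero-one-pop} assumes $P_X$ atomless, which is not granted here. Since $R^*(\tau^*)=\varepsilon$ gives $s=0$, run the Lagrangian bound with $\lambda=1/\tau^*$ directly, as the paper does, instead of citing that theorem.
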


The proof of the above result is provided in \Cref{sec:proof-thm-near-opt}.

\begin{remark}[Threshold-disagreement modulus]
The quantity \(\mathcal E_\Delta(u)\) measures how often thresholding
\(\hat\Delta(X)\) can disagree with thresholding the oracle score \(\Delta(X)\),
uniformly over all thresholds. Indeed, for any \(\tau\in[0,1]\),
\[
\P\!\left(
\mathbf 1\{\hat\Delta(X)<\tau\}\neq \mathbf 1\{\Delta(X)<\tau\}
\right)
\le
\P(|\hat\Delta(X)-\Delta(X)|>u)
+\P(|\Delta(X)-\tau|\le u)
\le \mathcal E_\Delta(u).
\]
The first term is the score estimation error, while the second is the mass of
oracle scores within distance \(u\) of a threshold, i.e., points whose decision
is sensitive to perturbations of size \(u\). Thus
\(\ep_\Delta=\inf_{u>0}\mathcal E_\Delta(u)\) is a uniform thresholding error
bound. If \(|\hat\Delta(X)-\Delta(X)|\le a_n\) a.s.\ and
\(\Delta(X)\) has bounded density, then \(\ep_\Delta=O(a_n)\).
\end{remark}

\subsection{Adaptive concentration for exact-safe fallback}

\begin{corollary}[Adaptive concentration in the exact-safe fallback setting]\label{cor:step1-eps1}
Assume  the exact-safe fallback condition holds: $\mathbf 1\{\ell(\hat \pi_*(x),y)\ge c\}=0$ for all $x\in \mathcal X, y\in \mathcal Y$. Assume that the calibration samples \((X_i,Y_i)_{i=1}^n\) are i.i.d.
For $i\in [n]$, define
$W_i:=\mathbf 1\{\ell(\pi_0(X_i),Y_i)\ge c\}$ and $Z_i:=\hat\Delta(X_i)$,
so that
$L(Z_i;\tau)=W_i\,\mathbf 1\{Z_i<\tau\}$
a.s.\ for all $\tau\in[0,1]$.
Let $p:=\mathbb{P}(W_i=1)$, $m:=\sum_{i=1}^n W_i$, and $\hat p:=m/n$.
Let
$\hat R(\tau):=\frac{1}{n}\sum_{i=1}^n L(X_i,Y_i;\tau)$ and $R(\tau):=\E[L(X_i,Y_i;\tau)]$.
Then for any $\delta\in(0,1)$, with probability at least $1-\delta$,
\[
\sup_{\tau\in[0,1]}|\hat R(\tau)-R(\tau)|
\le
\varepsilon_{1,n}(\delta) 
:= \sqrt{\frac{\log(4/\delta)}{2n}}
+
\sqrt{\frac{\hat p\log(4/\delta)}{2n}}.
\]
\end{corollary}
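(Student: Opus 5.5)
The loss has the product form $L(Z_i;\tau)=W_i\mathbf 1\{Z_i<\tau\}$ with $W_i\in\{0,1\}$. The plan is to split the uniform deviation into a part driven by the count $m$ of baseline violations and a conditional DKW part over the violating points. Since $W_i\in\{0,1\}$, we have
\[
\hat R(\tau)=\hat p\,\hat F_1(\tau),
\qquad
R(\tau)=p\,F_1(\tau).
\]
Here $F_1(\tau):=\P(Z<\tau\mid W=1)$ and $\hat F_1(\tau):=m^{-1}\sum_{i:W_i=1}\mathbf 1\{Z_i<\tau\}$, with $\hat F_1:=0$ when $m=0$. This yields
\[
\hat R(\tau)-R(\tau)=\hat p\bigl(\hat F_1(\tau)-F_1(\tau)\bigr)+(\hat p-p)F_1(\tau).
\]
Because $0\le F_1\le 1$, the supremum over $\tau$ is at most $\hat p\sup_\tau|\hat F_1-F_1|+|\hat p-p|$.

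For the second term, Hoeffding's inequality gives $|\hat p-p|\le\sqrt{\log(4/\delta)/(2n)}$ with probability at least $1-\delta/2$. The constant $\log(4/\delta)$ leaves exactly enough room for a two-sided bound at level $\delta/2$.

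For the first term, condition on $(W_1,\dots,W_n)$. Under i.i.d.\ sampling, the $Z_i$ with $W_i=1$ are then i.i.d.\ from the conditional law of $Z$ given $W=1$, and there are $m$ of them. Massart's form of the DKW inequality, applied to the left-continuous empirical cdf (this is fine since $\{Z<\tau\}$ is also a VC class of half-lines), gives
\[
\P\Bigl(\sup_\tau|\hat F_1-F_1|>\sqrt{\log(4/\delta)/(2m)}\;\Big|\;W\Bigr)\le 2e^{-\log(4/\delta)}=\delta/2.
\]
On this event,
\[
\hat p\sup_\tau|\hat F_1-F_1|\le \frac{m}{n}\sqrt{\frac{\log(4/\delta)}{2m}}=\sqrt{\frac{\hat p\log(4/\delta)}{2n}}.
\]
When $m=0$ the term vanishes trivially. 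Taking expectation over $W$ removes the conditioning.

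A union bound over the two events then gives the claim with probability at least $1-\delta$. The only delicate point is justifying the conditional i.i.d.\ structure given $W$ and handling $m=0$. Note that the resulting bound is random, through $\hat p$, which is consistent with the statement.
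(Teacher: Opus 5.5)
Your proposal is correct and follows the paper's proof essentially step for step: the same factorization $\hat R=\hat p\,\hat F$, $R=pF$ with the bound $|\hat p-p|+\hat p\sup_\tau|\hat F-F|$, Hoeffding at level $\delta/2$, Massart's DKW inequality applied conditionally on the set of violating indices (using left limits for strict half-lines), and a union bound. The only extra detail in the paper is that it treats $p=0$ separately, since $F_1$ is then undefined but $\hat R=R=0$ a.s.; your argument covers this case with any convention for $F_1$.
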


The proof of the above result is provided in Section \ref{sec:proof-cor-step1-eps1}.

\subsection{Low-sum block bounds for drifted sequences}
\label{app:low-sum-blocks}

For independent bounded increments with uniformly positive drift on a suffix, we bound the
expected length of the longest contiguous block whose sum is small. This is used in
\Cref{thm:EK-log}.

\begin{corollary}[Suffix low-sum block bound]\label{cor:low-sum-b}
Let $(S_t)_{t=1}^n$ be independent random variables with $S_t\in[-1,1]$ a.s.
Fix an index $i_0\in\{1,\dots,n\}$ and assume there exists $\lambda>0$ such that
$\mathbb E[S_t]\ge \lambda$ for all $t\ge i_0$.
For any $b\ge0$, define
\[
M^{(b)}_{i_0}(n)
:=
\max\Bigl(\{0\}\cup\Bigl\{h\in [n] : \ \exists\, r\in\{i_0,\dots,n-h+1\}\ \text{s.t.}
\sum_{t=r}^{r+h-1} S_t \le b
\Bigr\}\Bigr).
\]
Then, for all sufficiently large $n$,
$\mathbb E\!\left[M^{(b)}_{i_0}(n)\right] \le \frac{16}{\lambda^2}\log n+\frac{2b}{\lambda}+2$.
\end{corollary}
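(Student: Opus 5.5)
We prove the bound by a union bound over all potential low-sum blocks together with Hoeffding's inequality, and then integrate the resulting tail. Fix $h\in[n]$ and a starting index $r\in\{i_0,\dots,n-h+1\}$. The summands $S_r,\dots,S_{r+h-1}$ are independent and lie in $[-1,1]$. Since $t\ge i_0$ on this block, each has mean at least $\lambda$, so $\mathbb E\bigl[\sum_{t=r}^{r+h-1}S_t\bigr]\ge\lambda h$. For $h>b/\lambda$, Hoeffding's inequality (range length $2$) gives
\[
\mathbb P\Bigl(\sum_{t=r}^{r+h-1}S_t\le b\Bigr)\le \exp\!\Bigl(-\frac{2(\lambda h-b)^2}{4h}\Bigr)=\exp\!\Bigl(-\frac{(\lambda h-b)^2}{2h}\Bigr).
\]
There are at most $n$ admissible values of $r$. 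A union bound therefore yields
\[
\mathbb P\bigl(M^{(b)}_{i_0}(n)\ge h\bigr)\le \sum_{h'\ge h}n\exp\!\bigl(-(\lambda h'-b)^2/(2h')\bigr),
\]
because the event $\{M\ge h\}$ requires a low-sum block of some length $h'\ge h$. We could instead bound it directly by the single-length event, but blocks are not monotone in length, so the sum over $h'$ is the safe form.

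Next we simplify the exponent. When $h'\ge 2b/\lambda$ we have $\lambda h'-b\ge\lambda h'/2$, and hence $(\lambda h'-b)^2/(2h')\ge \lambda^2h'/8$. The tail is then geometric:
\[
\sum_{h'\ge h} n e^{-\lambda^2 h'/8}\le \frac{n e^{-\lambda^2 h/8}}{1-e^{-\lambda^2/8}}.
\]
Set
\[
h_0:=\Bigl\lceil \max\Bigl\{\frac{2b}{\lambda},\ \frac{16}{\lambda^2}\log n\Bigr\}\Bigr\rceil.
\]
For $h\ge h_0$ we get $ne^{-\lambda^2h/8}\le n^{-1}e^{-\lambda^2(h-h_0)/8}$.

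Finally we write $\mathbb E[M]=\sum_{h\ge1}\mathbb P(M\ge h)$. The terms with $h\le h_0$ contribute at most $h_0$, and
\[
h_0\le \frac{16}{\lambda^2}\log n+\frac{2b}{\lambda}+1,
\]
using $\max\{x,y\}\le x+y$ for nonnegative $x,y$. The terms with $h>h_0$ contribute at most
\[
\frac{1}{n}\cdot\frac{1}{(1-e^{-\lambda^2/8})^2}.
\]
This is at most $1$ once $n\ge (1-e^{-\lambda^2/8})^{-2}$, which is exactly where "sufficiently large $n$" enters. Adding the two pieces gives $\mathbb E[M^{(b)}_{i_0}(n)]\le \frac{16}{\lambda^2}\log n+\frac{2b}{\lambda}+2$.

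The main obstacle is the constant bookkeeping. The factor $16/\lambda^2$ is designed so that $n$ from the union bound is beaten by $n^{-2}$ from the exponent, leaving a spare $n^{-1}$ to absorb the geometric constant. One must check that the Hoeffding range ($2$, not $1$) and the halving $\lambda h-b\ge\lambda h/2$ combine to the exponent $\lambda^2h/8$, so that $16/\lambda^2$ suffices. If the constants come out slightly off, the fallback is to double-sum over $(r,h)$ without the monotonicity trick, which costs only an extra factor of $n$ absorbed the same way. We also note that $\lambda\le1$ automatically, since each $S_t\le 1$.
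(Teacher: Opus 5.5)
Your proposal is correct and follows essentially the same route as the paper: a per-block tail bound $\exp(-(\lambda h-b)_+^2/(2h))$, a union bound over start indices and all lengths $h'\ge h$, and a split of $\sum_h \P(M\ge h)$ at $h_0\approx 16\lambda^{-2}\log n+2b/\lambda$, with a doubly geometric tail of order $n^{-1}(1-e^{-\lambda^2/8})^{-2}$. The differences are cosmetic. You apply Hoeffding's inequality directly instead of Hoeffding's lemma plus optimizing over $\theta$, and you use a $\max$ in $h_0$. Your explicit threshold $n\ge(1-e^{-\lambda^2/8})^{-2}$ correctly makes the paper's ``sufficiently large $n$'' precise.
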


The proof of the above result is provided in Section \ref{sec:proof-cor-low-sum-b}.

\subsection{Interior crossing with high probability}
\label{subsubsec:interior-crossing}

\begin{lemma}[Right-endpoint exclusion for augmented and leave-one-out threshold ranks]\label{lem:Eint-all-high-prob}
Assume the conditions in \Cref{thm:EK-log} hold.
Let $N\ge2$, and assume that $I^0_1,\dots,I^0_N$ are i.i.d.~with
$p_0:=\mathbb{P}(I^0_1=1)\ge \varepsilon+\xi$
for some $\xi>0$.
For $i\in [N]$, define the quantity
\[ \hat p_{0,-i}:=\frac{1}{N-1}\sum_{k\in [N]\setminus \{i\}} I^0_k \]
and the events
\[
E_{\mathrm{int},-i}:=\Big\{\hat p_{0,-i}>\varepsilon+\frac{\xi}{2}\Big\},
\qquad
E_{\mathrm{int}}:=\bigcap_{i=1}^N E_{\mathrm{int},-i}.
\]
Then
$
\mathbb{P}(E_{\mathrm{int}}^c)
\le
N\exp\bigl(-(N-1)\xi^2/2\bigr).
$
Moreover, for all sufficiently large $N$, on the event $E_{\mathrm{int}}$, the augmented threshold rank and leave-one-out threshold ranks satisfy $\hat j<N$ and $\hat j_{-i}<N$ for all $i\in [N]$.
\end{lemma}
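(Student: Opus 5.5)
The proof has two parts: a concentration bound with a union bound over $i$, and then a deterministic argument on $E_{\mathrm{int}}$ showing the thresholds $\top$ and $\hat\Delta_{(N)}$ are infeasible.

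\textbf{Probability bound.} For each fixed $i$, $\hat p_{0,-i}$ is an average of $N-1$ i.i.d.\ Bernoulli variables with mean $p_0\ge\varepsilon+\xi$. The event $E_{\mathrm{int},-i}^c$ requires $\hat p_{0,-i}\le\varepsilon+\xi/2\le p_0-\xi/2$. Hoeffding's inequality therefore gives
\[
\P(E_{\mathrm{int},-i}^c)\le\exp\bigl(-2(N-1)(\xi/2)^2\bigr)=\exp\bigl(-(N-1)\xi^2/2\bigr).
\]
A union bound over $i\in[N]$ yields the stated tail bound.

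\textbf{Leave-one-out ranks.} Fix $i$ and work on $E_{\mathrm{int}}$. By \eqref{eq:L-nonmono-stab}, at $\tau=\top$ every point uses $\pi_0$. So
\[
\hat R^+_{-i}(\top)=\frac{(N-1)\hat p_{0,-i}+1}{N}>\varepsilon,
\]
and $\top$ is infeasible. Now consider the largest leave-one-out grid point below $\top$. This is the largest score $\hat\Delta_k$ with $k\neq i$; call its rank $b_i$. At this threshold only the point of rank $b_i$ switches to $\hat\pi_*$, so the loss sum drops by at most one. Hence
\[
\hat R^+_{-i}(\tau_{b_i})\ge\frac{(N-1)\hat p_{0,-i}}{N}>\frac{(N-1)(\varepsilon+\xi/2)}{N}.
\]
The right-hand side exceeds $\varepsilon$ once $(N-1)\xi/2>\varepsilon$, which holds for all sufficiently large $N$. (Equivalently, one can use $|T^{-i}_{N+1}-T^{-i}_{b_i}|\le1$ from \Cref{lem:endpoint-crossing}.) So $\tau_{b_i}$ is infeasible too. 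The selected rank is the largest feasible one, so $\hat j_{-i}<b_i\le N$.

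\textbf{Augmented rank.} Removing one point changes the loss sum by at most one, so the full-sample fraction $\hat p_0$ satisfies $N\hat p_0\ge(N-1)\hat p_{0,-i}$. The same two computations then show that $\hat R^+_{1:N}(\top)$ and $\hat R^+_{1:N}(\tau_N)$ both exceed $\varepsilon$ for large $N$. Therefore $\hat j<N$.

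The only real care needed is the grid bookkeeping: the rank $b_i$ may equal $N-1$ when $q_i=N$, but in every case the two largest leave-one-out candidates are ruled out, which forces $\hat j_{-i}<N$. Degenerate cases, such as an empty feasible set returning $0$, trivially satisfy the conclusion. The threshold on $N$ is explicit: $N-1>2\varepsilon/\xi$ suffices.
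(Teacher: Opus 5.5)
Your proposal is correct and follows the paper's proof: Hoeffding plus a union bound gives the tail bound, and then $\top$ and the top-ranked grid point are shown to be infeasible via the one-step bound. The paper phrases the second part as $T_{N+1}>1\Rightarrow T_N>0$ and $T^{-i}_{N+1}>1\Rightarrow T^{-i}_{b_i}>0$.

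One small slip. With your bound $N\hat p_0\ge (N-1)\hat p_{0,-i}$, infeasibility of $\tau_N$ requires $(N-1)\xi/2\ge 2\varepsilon$, so your explicit cutoff $N-1>2\varepsilon/\xi$ does not cover that step. The paper instead averages the $N$ leave-one-out inequalities to get $\sum_k I^0_k>N(\varepsilon+\xi/2)$, which only needs $N\xi/2\ge\varepsilon$. This does not affect the ``sufficiently large $N$'' conclusion.
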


The proof of the above result is provided in Section \ref{sec:proof-lem-eint-all-high-prob}.

\subsection{Concomitants of order statistics}
\label{app:concomitants}

We record a standard conditional-independence property used in \Cref{thm:EK-log}.

\begin{lemma}[Conditional independence of concomitants]\label{lem:concomitants}
Let $(Z_i,U_i)_{i=1}^n$ be i.i.d.\ pairs, where $Z_i\in\mathbb R$ has an atomless distribution.
Let $Z_{(1)}<\cdots<Z_{(n)}$ be the order statistics and let $v_1,\dots,v_n$ be the (a.s.\ unique) permutation such that
$Z_{v_j}=Z_{(j)}$.
Define the concomitants $U_{(j)}:=U_{v_j}$ for $j\in [n]$.
Define the sigma-field $\mathcal G:=\sigma(Z_{(1)},\dots,Z_{(n)})$.
Then conditional on $\mathcal G$, the random variables $U_{(1)},\dots,U_{(n)}$ are independent, and
for each $j\in [n]$, the conditional law of $U_{(j)}$ given $\mathcal G$ is the same as the law of $U_1$ given $Z_1=Z_{(j)}$.
\end{lemma}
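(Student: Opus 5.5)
The plan is to reduce to the case of independent, identically distributed pairs. Write each pair as $(Z_i,U_i)$ with $Z_i$ real-valued and atomless, and $U_i$ taking values in a standard Borel space. Then compute the joint conditional law of the concomitants given the order statistics.

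Since $Z_1$ is atomless, ties occur with probability zero. So the permutation $v$ is a.s. unique, and the map $(Z_1,\dots,Z_n)\mapsto (Z_{(1)},\dots,Z_{(n)},v)$ is a.s. a bijection onto the set of increasing tuples times the permutations of $[n]$.

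Fix bounded measurable functions $h:\mathbb R^n\to\mathbb R$ and $f_1,\dots,f_n$ on the $U$-space. The goal is to show
\[
\E\Big[h(Z_{(1)},\dots,Z_{(n)})\prod_{j=1}^n f_j(U_{(j)})\Big]
=\E\Big[h(Z_{(1)},\dots,Z_{(n)})\prod_{j=1}^n \phi_j(Z_{(j)})\Big],
\]
where $\phi_j(z):=\E[f_j(U_1)\mid Z_1=z]$ is a fixed version of the regular conditional expectation. Regular conditional distributions exist because the spaces are standard Borel.

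To prove the identity, I would decompose according to the value of the permutation. The left side equals
\[
\sum_{\sigma}\E\Big[h(Z_{\sigma(1)},\dots,Z_{\sigma(n)})\,\mathbf 1\{Z_{\sigma(1)}<\cdots<Z_{\sigma(n)}\}\prod_j f_j(U_{\sigma(j)})\Big].
\]
For each fixed $\sigma$, independence across $i$ lets me condition on $(Z_1,\dots,Z_n)$. Given the $Z$'s, the $U_i$ are conditionally independent, with $U_i$ depending only on $Z_i$. Hence $\E[\prod_j f_j(U_{\sigma(j)})\mid Z_1,\dots,Z_n]=\prod_j\phi_j(Z_{\sigma(j)})$ a.s. Reassembling the sum over $\sigma$ gives the right side.

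The product form then follows in two steps. First, taking all but one $f_j$ equal to $1$ identifies the conditional law of $U_{(j)}$ given $\mathcal G$ as the law of $U_1$ given $Z_1=Z_{(j)}$. Second, the factorization $\E[\prod_j f_j(U_{(j)})\mid\mathcal G]=\prod_j\E[f_j(U_{(j)})\mid\mathcal G]$ over all bounded $f_j$ gives conditional independence, via a monotone class/$\pi$-$\lambda$ argument extending from products of indicators to the full product $\sigma$-field.

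The only delicate point is measure-theoretic: the permutation $v$ is only defined almost surely. I would handle this by working on the full-measure event of no ties and noting that $\phi_j$ is chosen as a fixed version, so the null sets cause no issue.
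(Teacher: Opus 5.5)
Your proposal is correct and takes essentially the same route as the paper: condition on the full vector $(Z_1,\dots,Z_n)$, use that the $U_i$ are then independent with laws $Q_{Z_i}$, and pass to $\mathcal G$ by the tower property, since the resulting product law depends only on the order statistics. Your sum over fixed permutations $\sigma$, tested against functions $h$ of the order statistics, simply writes out the paper's one-line observation that the ranking permutation is a measurable function of the $Z$-vector.
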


The proof of the above result is provided in Section \ref{sec:proof-lem-concomitants}.

\subsection{Threshold lemma for the population oracle}
\label{app:tau-pop}

\begin{lemma}\label{lem:tau-pop}
Assume $0\le B<\E[\Delta(X)]$ and let $\mu:=P_X$. Define
\[
F(t):=\E[\Delta(X)\mathbf 1\{\Delta(X)<t\}]+t\P(\Delta(X)=t)
=\int_{\{\Delta\le t\}}\Delta\,d\mu,
\qquad t\in[0,1],
\]
and
\[
\tau:=\inf\{t\in[0,1]:F(t)\ge B\}.
\]
Then $\tau$ is well-defined. Moreover:
\begin{enumerate}
\item $B=0$ if and only if $\tau=0$.
\item If $B>0$, then $\tau\in(0,1]$ and
\[
\E[\Delta(X)\mathbf 1\{\Delta(X)<\tau\}]
\le B
\le \E[\Delta(X)\mathbf 1\{\Delta(X)<\tau\}]+\tau\P(\Delta(X)=\tau).
\]
\end{enumerate}
\end{lemma}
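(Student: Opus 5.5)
The plan is to work with the equivalent form $F(t)=\int_{\{\Delta\le t\}}\Delta\,d\mu$. First I verify this identity: split $\{\Delta\le t\}=\{\Delta<t\}\cup\{\Delta=t\}$, and note that on $\{\Delta=t\}$ the integrand equals $t$. From this form, $F$ is nondecreasing on $[0,1]$ because the sets $\{\Delta\le t\}$ increase with $t$ and $\Delta\ge0$. It is also right-continuous by dominated (or monotone) convergence, since $\{\Delta\le t+h\}\downarrow\{\Delta\le t\}$ as $h\downarrow0$. Since $\Delta\in[0,1]$, we have $F(1)=\E[\Delta(X)]>B$. The set $\{t\in[0,1]:F(t)\ge B\}$ is therefore nonempty (it contains $1$), so $\tau$ is well-defined and $\tau\in[0,1]$. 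Right-continuity together with monotonicity shows that the infimum is attained, so $F(\tau)\ge B$.

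For item 1, observe that $F(0)=\int_{\{\Delta\le0\}}\Delta\,d\mu=0$ because $\Delta\ge0$. If $B=0$, then $F(0)\ge B$, so $\tau=0$. Conversely, if $\tau=0$, attainment gives $F(0)\ge B$, i.e.\ $0\ge B$, and combined with $B\ge0$ this forces $B=0$.

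For item 2, take $B>0$. Item 1 gives $\tau\neq0$, hence $\tau\in(0,1]$. The upper inequality $B\le\E[\Delta\mathbf 1\{\Delta<\tau\}]+\tau\P(\Delta=\tau)$ is exactly $F(\tau)\ge B$, which comes from attainment. For the lower inequality, note that for every $t<\tau$ we have $F(t)<B$ by the definition of the infimum. Letting $t\uparrow\tau$, the sets $\{\Delta\le t\}$ increase to $\{\Delta<\tau\}$, so monotone convergence gives $F(t)\to\int_{\{\Delta<\tau\}}\Delta\,d\mu=\E[\Delta(X)\mathbf 1\{\Delta(X)<\tau\}]$. The strict inequalities $F(t)<B$ then pass to the limit as the weak inequality $\E[\Delta\mathbf 1\{\Delta<\tau\}]\le B$.

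The only delicate point is the pair of one-sided limits: right-continuity is what gives attainment, and the left limit is what gives the lower bound. Both follow from the integral representation and standard convergence theorems. No atomlessness assumption is needed for this lemma.
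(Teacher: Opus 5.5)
Your proposal is correct and takes essentially the same approach as the paper. Both use monotonicity and right-continuity of $F$ from the integral representation, $F(1)=\E[\Delta(X)]>B$ for well-definedness, right-continuity to get $F(\tau)\ge B$ (which handles the converse in item 1 and the upper bound in item 2), and a monotone-convergence left limit $F(\tau-)=\E[\Delta(X)\mathbf 1\{\Delta(X)<\tau\}]\le B$ for the lower bound.
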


The proof of the above result is provided in Section \ref{sec:proof-lem-tau-pop}.

\section{Proofs}

\subsection{Proof of the population oracle}\label{sec:proof-thm-zero-one-pop}
\begin{proof}
Let $\mu:=P_X$. For any measurable deterministic policy $\pi$, define
$S(\pi):=\{x\in\mathcal X:\pi(x)=\pi_0(x)\}$.
Construct a new policy $\tilde\pi$ by
\[
\tilde\pi(x):=
\begin{cases}
\pi_0(x), & x\in S(\pi),\\
\pi_*(x), & x\notin S(\pi).
\end{cases}
\]
Then pointwise $\mathbf{1}\{\tilde\pi(x)=\pi_0(x)\}\ge \mathbf{1}\{\pi(x)=\pi_0(x)\}$, hence
$\P(\tilde\pi(X)=\pi_0(X))\ge \P(\pi(X)=\pi_0(X))$.
On the other hand, for each $x\notin S(\pi)$ we have $\pi(x)\neq \pi_0(x)$ and by optimality of $\pi_*(x)$,
$g(\pi_*(x),x)=g_*(x)\le g(\pi(x),x)$. Therefore $g(\tilde\pi(x),x)\le g(\pi(x),x)$ for all $x$, and
$\P(\ell(\tilde\pi(X),Y)\ge c)=\E[g(\tilde\pi(X),X)]\le \E[g(\pi(X),X)]=\P(\ell(\pi(X),Y)\ge c)$.
Thus if $\pi$ is feasible, then so is $\tilde\pi$, and $\tilde\pi$ achieves at least as large an objective value. Hence some optimum lies in the two-action class.

For any measurable set $S\subseteq\mathcal X$, define the two-action policy
\[
\pi_S(x):=
\begin{cases}
\pi_0(x), & x\in S,\\
\pi_*(x), & x\notin S.
\end{cases}
\]
Its agreement rate is
\[
\P(\pi_S(X)=\pi_0(X))
=\mu(S)+\mu(S^c\cap\{\Delta=0\})
=\mu(S\cup\{\Delta=0\}),
\]
Indeed, $\Delta(x)=0$ if and only if $g_0(x)=g_*(x)$, which holds if and only if
$\pi_0(x)\in\argmin_{a\in\mathcal A} g(a,x)$; by the tie-breaking convention for
$\pi_*$, this is equivalent to $\pi_*(x)=\pi_0(x)$.
Its risk is
\[
\P(\ell(\pi_S(X),Y)\ge c)
=\E[g(\pi_S(X),X)]
=\int_S g_0\,d\mu+\int_{S^c} g_*\,d\mu
=G_*+\int_S \Delta\,d\mu.
\]
Since $\Delta=0$ on $\{\Delta=0\}$, we have $\int_{S\cup\{\Delta=0\}}\Delta\,d\mu=\int_S\Delta\,d\mu$. Thus, for the set-optimization problem below we may restrict without loss of generality to sets $S$ that contain $\{\Delta=0\}$, in which case $\P(\pi_S(X)=\pi_0(X))=\mu(S)$.
Therefore \Cref{eq:zero-one-obj} is equivalent to the pure set problem
\begin{equation}\label{eq:set-problem-pop}
\max_{S\subseteq\mathcal X\text{ measurable}}\ \mu(S)
\quad\text{s.t.}\quad
\int_S \Delta\,d\mu\le B.
\end{equation}
If $B<0$, \eqref{eq:set-problem-pop} is infeasible. If $B\ge \int_{\mathcal X}\Delta\,d\mu$ (equivalently $G_0\le \varepsilon$), then $S=\mathcal X$ is optimal and corresponds to $\pi_0$.
Henceforth assume $0\le B<\int_{\mathcal X}\Delta\,d\mu$.

If $B=0$, feasibility forces $\Delta=0$ $\mu$-a.e.\ on $S$, so every feasible $S$ satisfies $S\subseteq\{\Delta=0\}$ up to $\mu$-null sets. Thus $S^*:=\{\Delta=0\}$ is feasible and maximizes $\mu(S)$, proving optimality in this case (and in particular $\tau=0$ by \Cref{lem:tau-pop}).

Now assume $B>0$. By \Cref{lem:tau-pop}, we have $\tau\in(0,1]$.

\emph{(a) A universal upper bound.}
Fix any $\lambda\ge 0$. For any feasible $S$ with $\int_S\Delta\,d\mu\le B$,
\[
\mu(S)=\int_S 1\,d\mu
=\int_S(1-\lambda\Delta)\,d\mu+\lambda\int_S\Delta\,d\mu
\le \int_S(1-\lambda\Delta)\,d\mu+\lambda B.
\]
Moreover, for any set $S$,
\[
\int_S(1-\lambda\Delta)\,d\mu
\le \int_S(1-\lambda\Delta)_+\,d\mu
\le \int_{\mathcal X}(1-\lambda\Delta)_+\,d\mu,
\]
and equality is attained by taking $S=\{1-\lambda\Delta>0\}\cup E_0$ for any measurable $E_0\subseteq\{1-\lambda\Delta=0\}$.
Thus every feasible $S$ satisfies
\[
\mu(S)\le J(\lambda):=\int_{\mathcal X}(1-\lambda\Delta)_+\,d\mu+\lambda B,
\]
and hence
\[
\sup\Big\{\mu(S): \int_S\Delta\,d\mu\le B\Big\}\le \inf_{\lambda\ge 0} J(\lambda).
\]

\emph{(b) Choose $\lambda^*=1/\tau$ and construct $S^*$ attaining the bound.}
By \Cref{lem:tau-pop},
\[
\int_{\{\Delta<\tau\}}\Delta\,d\mu\ \le\ B\ \le\ \int_{\{\Delta<\tau\}}\Delta\,d\mu+\tau\,\mu(\Delta=\tau),
\]
which implies $s=\frac{B-\int_{\{\Delta<\tau\}}\Delta\,d\mu}{\tau}\in[0,\mu(\Delta=\tau)]$.
Since $\mu$ is atomless, by Sierpi\'nski's theorem \citep{sierpinski1922fonctions}: for any measurable $A$ and any $u\in[0,\mu(A)]$, there exists a measurable $E\subseteq A$ with $\mu(E)=u$.
Apply this with $A=\{\Delta=\tau\}$ and $u=s$ to obtain $E\subseteq\{\Delta=\tau\}$ with $\mu(E)=s$, and define $S^*:=\{\Delta<\tau\}\cup E$.
Then $\Delta=\tau$ on $E$, so
\[
\int_{S^*}\Delta\,d\mu
=\int_{\{\Delta<\tau\}}\Delta\,d\mu+\int_E \Delta\,d\mu
=\int_{\{\Delta<\tau\}}\Delta\,d\mu+\tau\,\mu(E)
=\int_{\{\Delta<\tau\}}\Delta\,d\mu+\tau s
=B,
\]
so $S^*$ is feasible and the constraint is tight.

Now set $\lambda^*:=1/\tau$. Observe that
\[
(1-\lambda^*\Delta)_+
=\Big(1-\frac{\Delta}{\tau}\Big)_+
=\mathbf{1}\{\Delta<\tau\}\Big(1-\frac{\Delta}{\tau}\Big),
\]
so
\[
\int_{\mathcal X}(1-\lambda^*\Delta)_+\,d\mu
=\mu(\Delta<\tau)-\frac{1}{\tau}\int_{\{\Delta<\tau\}}\Delta\,d\mu.
\]
Therefore
\begin{align*}
J(\lambda^*)
&=\mu(\Delta<\tau)-\frac{1}{\tau}\int_{\{\Delta<\tau\}}\Delta\,d\mu+\frac{B}{\tau} \\
&=\mu(\Delta<\tau)+\frac{B-\int_{\{\Delta<\tau\}}\Delta\,d\mu}{\tau}
=\mu(\Delta<\tau)+s
=\mu(S^*).
\end{align*}
Combining with the bound $\mu(S)\le J(\lambda^*)$ for all feasible $S$ shows that $S^*$ is optimal for \eqref{eq:set-problem-pop}. Translating back, the corresponding deterministic policy $\pi^*:=\pi_{S^*}$ is optimal for \Cref{eq:zero-one-obj}, with $\P\bigl(\ell(\pi^*(X),Y)\ge c\bigr)=\varepsilon$ and $\P(\pi^*(X)=\pi_0(X))=\mu(\Delta<\tau)+\mu(E)=\mu(\Delta<\tau)+s$.
\end{proof}

\subsection{Proof of the rank-stability proposition}\label{sec:proof-prop-stab-k}
\begin{proof}
Given $i\in[N]$, by \eqref{eq:L-nonmono-stab},
\[
L(Z_i;\hat\tau_{-i})-L(Z_i;\hat\tau_{1:N})
=
W_i\Big(\mathbf 1\{\hat\Delta_i<\hat\tau_{-i}\}-\mathbf 1\{\hat\Delta_i<\hat\tau_{1:N}\}\Big).
\]
By the conventions in \Cref{app:formal-rank-stability}, for all $i\in [N]$ we have
\[
\mathbf 1\{\hat\Delta_i<\hat\tau_{-i}\}=\mathbf 1\{q_i<\hat j_{-i}\},
\qquad
\mathbf 1\{\hat\Delta_i<\hat\tau_{1:N}\}=\mathbf 1\{q_i<\hat j\}.
\]
Thus $|L(Z_i;\hat\tau_{-i})-L(Z_i;\hat\tau_{1:N})|$ can be nonzero only if $q_i$ lies between $\hat j$ and $\hat j_{-i}$.
Define the disjoint sets
\[
S_+:=\{i:\hat j_{-i}>\hat j,\ \hat j\le q_i<\hat j_{-i}\},
\qquad
S_-:=\{i:\hat j_{-i}<\hat j,\ \hat j_{-i}\le q_i<\hat j\}.
\]
Since $|W_i|\le1$,
\[
\sum_{i=1}^N |L(Z_i;\hat\tau_{-i})-L(Z_i;\hat\tau_{1:N})|
\le |S_+|+|S_-|.
\]
If $i\in S_+$, then $\hat j_{-i}\le \hat j+K$, hence $q_i\in\{\hat j,\hat j+1,\dots,\hat j+K-1\}$, an interval containing at most $K$ possible positive ranks. Since the ranks are a.s.~distinct by assumption, $|S_+|\le K$ a.s. Similarly, $|S_-|\le K$ a.s. Dividing the resulting bound by $N$ proves the claim.
\end{proof}

\subsection{Proof of the uniform perturbation lemma}\label{sec:proof-lem-t-perturb}
\begin{proof}
By the definition of the leave-one-out partial sum $T_j^{-i}$, we have 
$T_j^{-i}=T_j-(I^s_i-\varepsilon)-W_i\mathbf 1\{q_i<j\}$.
Therefore
$T_j^{-i}-T_j=-(I^s_i-\varepsilon)-W_i\mathbf 1\{q_i<j\}$.
If $q_i<j$, then
$T_j^{-i}-T_j=\varepsilon-I^0_i$; if $q_i\ge j$, then
$T_j^{-i}-T_j=\varepsilon-I^s_i$. In either case the difference is one of
$\varepsilon$ or $-(1-\varepsilon)$, and hence has absolute value at most $1$.
\end{proof}

\subsection{Proof of the one-step bound}\label{sec:proof-lem-one-step}
\begin{proof}
First,
\[
T_1-T_0
=
\sum_{t=1}^N W_{(t)}
\left(\mathbf 1\{t<1\}-\mathbf 1\{t<0\}\right)
=0,
\]
so \(T_1=T_0\).
Similarly, by definition of \(a_i\), there is no index $t$ in $\mathcal J_{-i}$ satisfying \(0<t<a_i\). Hence
\[
T^{-i}_{a_i}-T^{-i}_0
=
\sum_{t\in[N]\setminus\{q_i\}} W_{(t)}
\mathbf 1\{t<a_i\}
=0,
\]
and \(T^{-i}_{a_i}=T^{-i}_0\).

Next,
$T_{N+1}-T_N=W_{(N)}$,
so
$|T_{N+1}-T_N|\le 1$.
Similarly, \(b_i\) is the largest index in $\mathcal J_{-i}$  below \(N+1\), so 
$T^{-i}_{N+1}-T^{-i}_{b_i}=W_{(b_i)}$,
and therefore
$|T^{-i}_{N+1}-T^{-i}_{b_i}|\le 1$.

If we assume \(0<\hat j<N+1\), then \(\hat j\in[N]\), and
$T_{\hat j+1}-T_{\hat j}
=
W_{(\hat j)}$.
Since \(W_{(\hat j)}\in[-1,1]\),
we obtain $|T_{\hat j+1}-T_{\hat j}|\le 1$.

Similarly, For the leave-one-out sample, suppose \(0<\hat j_{-i}<N+1\). By definition of
\(\hat j^+_{-i}\), there is no index $t$ in $\mathcal J_{-i}$ satisfying \(\hat j_{-i} < t < \hat j^+_{-i}\). Therefore
\[
T^{-i}_{\hat j^+_{-i}}-T^{-i}_{\hat j_{-i}}
=
\sum_{t\in[N]\setminus\{q_i\}} W_{(t)}
\left(
\mathbf 1\{t<\hat j^+_{-i}\}
-
\mathbf 1\{t<\hat j_{-i}\}
\right)
\]
contains at most one term. Since each \(W_{(t)}\in[-1,1]\),
$|T^{-i}_{\hat j^+_{-i}}-T^{-i}_{\hat j_{-i}}|\le 1$, completing the proof.
\end{proof}

\subsection{Proof of the low-sum block lemma}\label{sec:proof-lem-block-3}
\begin{proof}
By assumption, $T_{\hat j}\le0<T_{\hat j+1}$.
If $\hat j=0$, then \Cref{lem:endpoint-crossing} gives $T_1=T_0$,
contradicting $T_{\hat j}\le0<T_{\hat j+1}$. Hence $\hat j\in[N]$.
Thus \Cref{lem:endpoint-crossing} implies $|T_{\hat j+1}-T_{\hat j}|\le 1$,
and $T_{\hat j}>-1$.

Given $i\in [N]$, consider the leave-one-out sample with observation $i$ removed. 
By assumption, $T^{-i}_{\hat j_{-i}}\le0<T^{-i}_{\hat j_{-i}^+}$.
Since by assumption $\hat j_{-i}\in [N]$, \Cref{lem:endpoint-crossing} implies 
$T^{-i}_{\hat j_{-i}^+}-T^{-i}_{\hat j_{-i}}\le1$.
Combining these bounds, we deduce $T^{-i}_{\hat j_{-i}}>-1$.
By \Cref{lem:T-perturb},
$T_{\hat j_{-i}}\le T^{-i}_{\hat j_{-i}}+1\le1$ and 
$T_{\hat j_{-i}}>T^{-i}_{\hat j_{-i}}-1>-2$.

If $\hat j_{-i}\ge \hat j$, then
\[
\sum_{t=\max\{1,\hat j\}}^{\hat j_{-i}-1} W_{(t)} = T_{\hat j_{-i}}-T_{\hat j}<2.
\]
If $\hat j_{-i}<\hat j$, then
\[
\sum_{t=\max\{1,\hat j_{-i}\}}^{\hat j-1} W_{(t)} = T_{\hat j}-T_{\hat j_{-i}}<2.
\]
In either case, the sum is an integer, since each $W_{(t)}\in\{-1,0,1\}$, and is at most unity.
\end{proof}

\subsection{Proof of the concomitant expectation bound}\label{sec:proof-thm-ek-log}
\begin{proof}
First, let $p_s:=\mathbb{P}(I^s_1=1)$ and $p_0:=\mathbb{P}(I^0_1=1)$.
Let $\kappa > 0$ and $\xi > 0$ satisfy $p_s \le \varepsilon - \kappa$ and $p_0 \ge \varepsilon + \xi$.
Let $F$ be the cumulative distribution function of $\hat\Delta(X)$. 
Set $c_0:=\kappa/4$ and $i_0:=\lceil c_0N\rceil$.
Since by assumption, $\hat \Delta(X)$ has an atomless distribution on $[0,1]$, we may select $\zeta>0$ such that $\P(\hat\Delta(X)<\zeta)\le c_0/2$. 
Define $\lambda_W:=c_W \zeta^{\beta}>0$.

Define the suffix low-sum block length
\[
M^{(1)}_{i_0}(N)
:=
\max\Bigl(\{0\}\cup\Bigl\{h\in[N] :\ \exists\, r\in\{i_0,\dots,N-h+1\}\ \text{s.t.}
\sum_{t=r}^{r+h-1} W_{(t)} \le 1
\Bigr\}\Bigr).
\]

Let
$S_0:=\sum_{i=1}^N (I^s_i-\varepsilon)$ and
$E_{\mathrm{start}}:=\{S_0\le -\kappa N/2\}$.
For $i\in [N]$, define
\[
\hat p_{0,-i}:=(N-1)^{-1}\sum_{k\in [N]\setminus \{i\}} I^0_k,
\] and define the events \[
E_{\mathrm{int},-i}:=\Big\{\hat p_{0,-i}>\varepsilon+\frac{\xi}{2}\Big\},
\qquad
E_{\mathrm{int}}:=\bigcap_{i=1}^N E_{\mathrm{int},-i}.
\]

On \(E_{\mathrm{start}}\), for all sufficiently large \(N\),
\[
T_0=S_0+1-\varepsilon
\le -\frac{\kappa N}{2}+1-\varepsilon
\le0.
\]
Thus index \(0\) is \(\hat R^+_{1:N}\)-feasible. By
\Cref{lem:endpoint-crossing}, \(T_1=T_0\), so index \(1\) is
\(\hat R^+_{1:N}\)-feasible, and $\hat j\ge1.$

Given \(i\in[N]\), we have
\[
T^{-i}_0
=
T_0-(I^s_i-\varepsilon)
=
S_0+1-I^s_i
\le S_0+1
\le -\frac{\kappa N}{2}+1
\le0
\]
for all sufficiently large \(N\). Hence index \(0\) is
\(\hat R^+_{-i}\)-feasible. Let
\[
a_i:=\min\bigl\{\mathcal J_{-i}\setminus\{0,N+1\}\bigr\}.
\]
By \Cref{lem:endpoint-crossing}, \(T^{-i}_{a_i}=T^{-i}_0\), so \(a_i\) is also
\(\hat R^+_{-i}\)-feasible. Therefore $\hat j_{-i}\ge a_i\ge1.$

By \Cref{lem:Eint-all-high-prob}, on $E_{\mathrm{int}}$ we have $\hat j<N$ and $\hat j_{-i}<N$ for every $i\in[N]$. 
Thus, on
$E_{\mathrm{start}}\cap E_{\mathrm{int}}$, 
we have $T_{\hat j}\le 0 < T_{\hat j+1}$
and
$T_{\hat j_{-i}}^{-i}\le 0 < T_{\hat j_{-i}^+}^{-i}$
for $i\in [N]$,
where $\hat j_{-i}^+$ is defined in \Cref{eq:j-minus-i-plus}.

Next, for any $j\ge1$, 
\begin{align*}
T_j = S_0+\sum_{t<j}W_{(t)}+(1-\varepsilon)
\le S_0+(j-1)+(1-\varepsilon),
\end{align*}
because $W_{(t)}\le1$. Therefore, on $E_{\mathrm{start}}$, if $j\ge1$ and $T_j>-2$, then
$-2<S_0+(j-1)+(1-\varepsilon)$, so
$j>\kappa N/2-2+\varepsilon$. 
On $E_{\mathrm{start}}\cap E_{\mathrm{int}}$,
since $0 < \hat j < N+1$ and $T_{\hat j+1}>0$, 
\Cref{lem:endpoint-crossing} implies
$T_{\hat j}>-1$.
Similarly, for $i\in [N]$, 
since $0 < \hat j_{-i} < N+1$ and $T_{\hat j_{-i}^+}^{-i}>0$, 
\Cref{lem:endpoint-crossing} implies $T_{\hat j_{-i}}^{-i}>-1$.
By \Cref{lem:T-perturb}, we deduce $T_{\hat j_{-i}} > -2$.
Hence, for all sufficiently large $N$,
$\min\{\hat j,\hat j_{-i}\}\ge i_0$ for all $i\in [N]$ on $E_{\mathrm{start}}\cap E_{\mathrm{int}}$.

Now fix \(i\in[N]\) and set
\(r_i:=\min\{\hat j,\hat j_{-i}\}\) and 
\(h_i:=|\hat j-\hat j_{-i}|\).
Since \(r_i\ge i_0\), \Cref{lem:block-3} gives
\[
\sum_{t=r_i}^{r_i+h_i-1} W_{(t)}\le1,
\]
Thus, if \(h_i>0\),
the interval \(\{r_i,\ldots,r_i+h_i-1\}\) is a contiguous block of indices contained in the suffix
\(\{i_0,\ldots,N\}\), with cumulative drift at most \(1\). By the definition of
\(M^{(1)}_{i_0}(N)\), this implies \(h_i\le M^{(1)}_{i_0}(N)\). Taking the maximum
over \(i\in[N]\), we obtain
\[
K\mathbf 1_{E_{\mathrm{start}}\cap E_{\mathrm{int}}}
\le M^{(1)}_{i_0}(N).
\]

Since $K\le N+1$ always, this implies the bound
\begin{equation}\label{eq:EK-split}
\E[K]
\le
\E\big[M^{(1)}_{i_0}(N)\big]
+(N+1)\P(E_{\mathrm{start}}^c)
+(N+1)\P(E_{\mathrm{int}}^c).
\end{equation}

We control each term in turn. Since $p_s \le \ep - \kappa$ by assumption, we have $\E[S_0]=N(p_s-\varepsilon)\le-\kappa N$. Since $|I^s_i-\varepsilon|\le 1$, Hoeffding's inequality \citep{hoeffding1963probability} gives
\[
\P(E_{\mathrm{start}}^c)
=\P\Big(S_0-\E[S_0]\ge \frac{\kappa N}{2}\Big)
\le e^{-\kappa^2N/2}.
\]
By \Cref{lem:Eint-all-high-prob},
$\P(E_{\mathrm{int}}^c)\le N e^{-(N-1)\xi^2/2}$.

Thus, the second and third terms of \Cref{eq:EK-split} are exponentially small in $N$, and it remains to bound $\E[M^{(1)}_{i_0}(N)]$. Define the sigma-field
$\mathcal G:=\sigma(\hat\Delta_{(1)},\dots,\hat\Delta_{(N)})$ and the event
$E_{\mathrm{quant}}:=\{\hat\Delta_{(i_0)}\ge \zeta\}$.
Note that $E_{\mathrm{quant}}\in \mathcal G$.
On $E_{\mathrm{quant}}$, $\hat\Delta_{(t)}\ge \zeta$ for all $t\ge i_0$. By the drift lower bound assumption, we have
$\E[W_{(t)}\mid\mathcal G]=\mu_W(\hat\Delta_{(t)})\ge c_W \zeta^{\beta}=\lambda_W$ for all $t\ge i_0$, a.s.\ on $E_{\mathrm{quant}}$.
By \Cref{lem:concomitants}, conditional on $\mathcal G$, the variables $W_{(1)},\dots,W_{(N)}$ are independent, and $W_{(t)}$ has the conditional law of $W$ given $\hat\Delta(X)=\hat\Delta_{(t)}$. Applying \Cref{cor:low-sum-b} conditionally on $\mathcal G$ with $b=1$ gives, on $E_{\mathrm{quant}}$,
\[
\E\big[M^{(1)}_{i_0}(N)\mid\mathcal G\big]
\le \frac{16}{\lambda_W^2} \log N + \frac{2}{\lambda_W} + 2.
\] Since $M^{(1)}_{i_0}(N)\le N$, we have the bound
\[ \E\big[M^{(1)}_{i_0}(N)] \le  \left( \frac{16}{\lambda_W^2} \log N + \frac{2}{\lambda_W} + 2 \right)
+ N \mathbb{P}(E_{\mathrm{quant}}^c), \]
and it suffices to control $\mathbb{P}(E_{\mathrm{quant}}^c)$. Let $B_\zeta:=|\{i\in[N]:\hat\Delta_i<\zeta\}|$. By the choice of $\zeta$, $B_\zeta$ is stochastically dominated by a $\mathrm{Binom}(N,c_0/2)$ random variable. Since $i_0=\lceil c_0N\rceil$, the event $E_{\mathrm{quant}}^c$ implies $B_\zeta\ge i_0\ge c_0N$. Applying a Chernoff bound \citep{chernoff1952measure} to the binomial variable gives
$\P(E_{\mathrm{quant}}^c)\le e^{-c_0N/6}$.
Thus $\E[M^{(1)}_{i_0}(N)] \le  \left( \frac{16}{\lambda_W^2} \log N + \frac{2}{\lambda_W} + 2 \right)
+ o(1)$, so that for sufficiently large $n$, by the definition of $C_1, C_2$ in \Cref{app:constants}, we have 
\[ \E[K] \le \frac{16}{\lambda_W^2} \log N + \frac{2}{\lambda_W} + 3 =: C_1\log(n+1) + C_2.\]
\end{proof}

\subsection{Proof of expected risk control from rank stability}\label{sec:proof-cor-beta-logn-over-n}
\begin{proof}
Exchangeability gives
\[
\E\big[L(Z_N;\hat\tau_{-N})\big]
=\E\left[\frac1N\sum_{i=1}^N L(Z_i;\hat\tau_{-i})\right].
\]
By \Cref{prop:stab-K},
\[
\frac1N\sum_{i=1}^N L(Z_i;\hat\tau_{-i})\le
\frac1N\sum_{i=1}^N L(Z_i;\hat\tau_{1:N})+\frac{2K}{N}.
\]
On the event $E_{\mathrm{start}}$ from the proof of \Cref{thm:EK-log}, the threshold $\tau=0$ is $\hat R_{1:N}^+$-feasible for all sufficiently large $N$, so that $\hat R_{1:N}^+(\hat \tau_{1:N})\le \ep$. Therefore $\sum_{i=1}^N L(Z_i;\hat\tau_{1:N})+1\le (N+1)\varepsilon$, and, because $\varepsilon\le1$,
\[
\frac1N\sum_{i=1}^N L(Z_i;\hat\tau_{1:N})\le\varepsilon
\]
on $E_{\mathrm{start}}$. On $E_{\mathrm{start}}^c$, the average loss is at most one. Therefore
\[
\E\big[L(Z_N;\hat\tau_{-N})\big]
\le \varepsilon+\P(E_{\mathrm{start}}^c)+\frac{2\E[K]}{N}.
\]
The proof of \Cref{thm:EK-log} gives $\P(E_{\mathrm{start}}^c)\le e^{-\kappa^2N/2}$, and \Cref{thm:EK-log} also gives $\E[K]\le C_1\log N + C_2$ for sufficiently large $n$. Substituting $N=n+1$ and using the definition $C_3 := 3C_1$ from \Cref{app:constants} completes the proof.
\end{proof}

\subsection{Proof of the exact-safe fallback theorem}\label{sec:proof-thm-safe-policy}
\begin{proof}
Exact-safety gives, for every \(z=(x,y)\in \mathcal X\times \mathcal Y\) and every
\(\tau\in[0,1]\cup\{\top\}\),
\[
L(z;\tau)=\mathbf 1\{\ell(\pi_0(x),y)\ge c\}\,
\mathbf 1\{\hat\Delta(x)<\tau\}.
\]
Thus \(L(z;\tau)\) is non-decreasing in the extended threshold order.

As in \Cref{app:formal-rank-stability}, let \(N:=n+1\), and consider an augmented sample $Z_{1:N}$. 
For each \(i\in[N]\), let \(\hat\tau_{-i}\) be the threshold
selected by \Cref{alg:crc-nonmono-conservative} with
\(\hat \pi_* = \pi_{\mathrm{safe}}\) from the leave-one-out sample
\(Z_{[N]\setminus\{i\}}\). By exchangeability,
\[
\E\big[L(Z_N;\hat\tau_{-N})\big]
=
\E\left[\frac1N\sum_{i=1}^N L(Z_i;\hat\tau_{-i})\right].
\]
Next, define the (un-bumped) empirical risk over the augmented sample as
\[ \hat R_{1:N}(\tau) := \frac1N\sum_{j=1}^N L(Z_j;\tau) \]
for $\tau\in [0,1]\cup \{\top\}$.
Let
\(\mathcal T_{1:N}:=\{0,\top\}\cup\{\hat\Delta(X_i):i\in[N]\}\), and define the modified threshold
\begin{align}\label{eq:unbumped-thresh}
\hat\tau_{1:N}^{\circ}
:=
\max\left\{\tau\in\mathcal T_{1:N}:
\hat R_{1:N}(\tau) \le\varepsilon
\right\},
\end{align}
where the maximum is taken in the extended order.
We say a threshold $\tau\in [0,1]\cup \{\top\}$ is \textit{$\hat R_{1:N}$-feasible} if $\hat R_{1:N}(\tau)\le \varepsilon$. 
Note that \(\tau=0\) is $\hat R_{1:N}$-feasible because \(L(z;0)\equiv 0\).

Fix \(i\in[N]\). If the set of $\hat R_{-i}^+$-feasible indices is non-empty, then since \(\hat\tau_{-i}\) is $\hat R_{-i}^+$-feasible, we deduce
\[
\sum_{j\in [N]\setminus \{i\}} L(Z_j;\hat\tau_{-i})+1\le N\varepsilon.
\]
Since \(L(Z_j;\hat\tau_{-i})\le1\), it follows that
\(\sum_{j=1}^N L(Z_j;\hat\tau_{-i})\le N\varepsilon\). 
On the other hand, if the set of $\hat R_{-i}^+$-feasible indices is empty, then \(\hat\tau_{-i}=0\) by convention and the same
inequality holds because \(L(z;0)\equiv 0\). Therefore each
\(\hat\tau_{-i}\) is $\hat R_{1:N}$-feasible.
Consequently, by the monotonicity of $\hat R_{1:N}(\tau)$ in $\tau$, we have
\(\hat\tau_{-i}\le\hat\tau_{1:N}^{\circ}\) for every \(i\in [N]\). As a result,
\[
\frac1N\sum_{i=1}^N L(Z_i;\hat\tau_{-i})
\le
\frac1N\sum_{i=1}^N L(Z_i;\hat\tau_{1:N}^{\circ})
\le\varepsilon.
\]
Taking expectations proves
\(\E[L(Z_N;\hat\tau_{-N})]\le\varepsilon\). Since \(\hat\tau_{-N}\) is the
threshold selected from the calibration observations \(Z_{[n]}\), this implies the result.
\end{proof}

\subsection{Proof of the near-optimality theorem}\label{sec:proof-thm-near-opt}
\begin{proof}
By the exact-safety assumption, we have
\[
L(X,Y;\tau)=\mathbf 1\{\ell(\pi_0(X),Y)\ge c\}\,\mathbf 1\{\hat\Delta(X)<\tau\}
\]
a.s.\ for $\tau\in[0,1]$. Let
$\hat R(\tau)=n^{-1}\sum_{i=1}^nL(X_i,Y_i;\tau)$ and
$R(\tau)=\E[L(X_i,Y_i;\tau)]$.
Given $\tau\in [0,1]\cup \{\top\}$, we say that $\tau$ is \textit{$\hat R^+$-feasible} if $\hat R^+(\tau)\le \ep$.
By \Cref{cor:step1-eps1}, with probability at least $1-\delta$,
\[
\sup_{\tau\in[0,1]}|\hat R(\tau)-R(\tau)|\le\varepsilon_{1,n}(\delta).
\]
We work on this event.

The oracle safe-action loss is
$L^*(X,Y;\tau)=\mathbf 1\{\ell(\pi_0(X),Y)\ge c\}\mathbf 1\{\Delta(X)<\tau\}$.
For $\tau\in[0,1]$, set
\[
E_\tau:=\{\mathbf 1\{\hat\Delta(X)<\tau\}\ne\mathbf 1\{\Delta(X)<\tau\}\}.
\]
Since the two losses can differ only on $E_\tau$,
\[
|R(\tau)-R^*(\tau)|\le \P(E_\tau).
\]
For any $u>0$, if $E_\tau$ holds and $|\hat\Delta(X)-\Delta(X)|\le u$, then
$|\Delta(X)-\tau|\le u$. Hence
\[
E_\tau\subseteq
\{|\hat\Delta(X)-\Delta(X)|>u\}\cup\{|\Delta(X)-\tau|\le u\}.
\]
Taking the supremum over $\tau\in[0,1]$ and then the infimum over $u>0$ gives
\[
\sup_{\tau\in[0,1]}|R(\tau)-R^*(\tau)|\le \varepsilon_\Delta.
\]
The same containment, applied directly to the two threshold indicators, gives
\[
\sup_{\tau\in[0,1]}
\big|\P(\hat\Delta(X)<\tau)-\P(\Delta(X)<\tau)\big|
\le \varepsilon_\Delta.
\]
Because $\hat R^+(\tau)=\frac{n}{n+1}\hat R(\tau)+\frac1{n+1}$ and $R^*(\tau)\in[0,1]$, on the same event
\[
\sup_{\tau\in[0,1]}|\hat R^+(\tau)-R^*(\tau)|\le
\frac{n}{n+1}\varepsilon_{1,n}(\delta)+\frac1{n+1}+\varepsilon_\Delta
=\varepsilon_n(\delta).
\]

Let $\nu=\nu_n(\delta)$. By assumption, $\nu\le\rho$, so both $\tau^*-\nu$ and $\tau^*+\nu$ lie in $[0,1]$ and the local margin condition applies to them. Since $R^*(\tau^*)=\varepsilon$ and $R^*$ is non-decreasing,
\[
R^*(\tau^*-\nu)\le \varepsilon-C_R\nu,
\qquad
R^*(\tau^*+\nu)\ge \varepsilon+C_R\nu.
\]
The definition of $\nu$ gives $C_R\nu=\varepsilon_n(\delta)+1/(n+1)$. Hence
\[
\hat R^+(\tau^*-\nu)\le\varepsilon-\frac1{n+1}<\varepsilon,
\qquad
\hat R^+(\tau^*+\nu)\ge\varepsilon+\frac1{n+1}>\varepsilon.
\]
Note that $\hat R^+$ is non-decreasing in the extended threshold order. The algorithm selects over the grid $\mathcal T_n=\{0,\top\}\cup\{\hat\Delta(X_i):i\in[n]\}$, while the two thresholds $\tau^*-\nu$ and $\tau^*+\nu$ need not themselves be grid points. Since $\tau^*-\nu$ is $\hat R^+$-feasible, the smallest grid point $t^+\in\mathcal T_n$ satisfying $t^+\ge\tau^*-\nu$ has the same empirical loss as $\tau^*-\nu$ unless it is $\top$, in which case all calibration scores are already below $\tau^*-\nu$ and the same conclusion holds. Hence $t^+$ is grid-feasible, so the selected threshold satisfies $\hat\tau\ge t^+\ge\tau^*-\nu$. Since $\tau^*+\nu$ is not $\hat R^+$-feasible, monotonicity makes every grid threshold to its right (including $\top$) infeasible; hence $\hat\tau\le\tau^*+\nu$. In particular, on this event $\hat\tau\in[0,1]$, $\tau^*-\nu\le\hat\tau\le\tau^*+\nu$, and
$|\hat\tau-\tau^*|\le\nu_n(\delta)$.

It remains to translate threshold error into an agreement shortfall. Write $\mu:=P_X$. In the exact-safe setting, any feasible policy $\pi$ with agreement set $A_\pi:=\{x:\pi(x)=\pi_0(x)\}$ satisfies
\[
\int_{A_\pi}\Delta\,d\mu
=
\int_{A_\pi}g_0\,d\mu
\le
\P(\ell(\pi(X),Y)\ge c)
\le\varepsilon.
\]
Thus the agreement of any feasible policy is bounded by the value of the set problem
$\max\{\mu(S):\int_S\Delta\,d\mu\le\varepsilon\}$.
The set $S^*=\{\Delta<\tau^*\}$ is feasible and tight because exact-safety gives
$R^*(\tau^*)=\int_{S^*}\Delta\,d\mu=\varepsilon$, and the policy that keeps $\pi_0$ on $S^*$ and uses $\pi_{\mathrm{safe}}$ elsewhere attains risk $\varepsilon$ and agreement at least $\mu(S^*)$. For any feasible $S$,
\[
\mu(S)
\le
\int_S\left(1-\frac{\Delta}{\tau^*}\right)d\mu+\frac{\varepsilon}{\tau^*}
\le
\int_{\mathcal X}\left(1-\frac{\Delta}{\tau^*}\right)_+d\mu
+\frac{\varepsilon}{\tau^*}
=\mu(\Delta<\tau^*)
+\frac{\varepsilon-R^*(\tau^*)}{\tau^*}
=J^*,
\]
where the equality uses $\tau^*>0$ and
$\left(1-\Delta/\tau^*\right)_+=\mathbf 1\{\Delta<\tau^*\}(1-\Delta/\tau^*)$. Thus $S^*$ is optimal, and the optimal population agreement is
$J^*=\mu(S^*)=\P(\Delta(X)<\tau^*)$. The deployed policy agrees with the baseline at least on the event
$\{\hat\Delta(X)<\hat\tau\}$, so
\begin{align*}
J^*-\P(\hat\pi(X;\hat\tau)=\pi_0(X))
&\le \P(\Delta(X)<\tau^*)-\P(\hat\Delta(X)<\hat\tau)\\
&\le \big|\P(\Delta(X)<\tau^*)-\P(\Delta(X)<\hat\tau)\big|\\
&\quad+\big|\P(\Delta(X)<\hat\tau)-\P(\hat\Delta(X)<\hat\tau)\big|\\
&\le L_J|\hat\tau-\tau^*|+\varepsilon_\Delta\\
&\le L_J\nu_n(\delta)+\varepsilon_\Delta.
\end{align*}
Since the right-hand side is non-negative, the same inequality implies the asserted positive-part shortfall bound.
Unpacking the definition of $\nu_n(\delta)$ and bounding $\hat p \le 1$, we obtain 
\begin{align*}
\left(
J^*-\mathbb P\bigl(\hat\pi(X;\hat\tau)=\pi_0(X)\bigr)
\right)_+
&\le
\frac{L_J}{C_R}
\left[
\sqrt{\frac{\log(4/\delta)}{2n}}
+
\sqrt{\frac{\hat p\log(4/\delta)}{2n}}
+
\frac{2}{n+1}
\right]
+
\left(1+\frac{L_J}{C_R}\right)\varepsilon_\Delta \\
&\le C_4\sqrt{\frac{\log(4/\delta)}{n}}
+
\frac{C_5}{n+1}
+
C_6\varepsilon_\Delta,
\end{align*}
as desired.
\end{proof}

\subsection{Proof of the exact-safe fallback concentration corollary}\label{sec:proof-cor-step1-eps1}
\begin{proof}
If $p=0$, then $W_i=0$ a.s.\ for each $i\in [n]$ and result follows. Assume $p>0$.
Define the \textit{left conditional distribution function}
$F(t):=\P(Z_1<t\mid W_1=1)$ for $t\in [0,1]$.
On $\{m\ge1\}$, define
$\hat F_m(t):= \frac{1}{m}\sum_{i=1}^n W_i\mathbf 1\{Z_i<t\}$; on $\{m=0\}$, set $\hat F_m(t):=0$.
The population and empirical risks satisfy
$R(\tau)=pF(\tau)$ and $\hat R(\tau)=\hat p\,\hat F_m(\tau)$
for all $\tau\in[0,1]$. Hence
\[
\sup_{\tau\in[0,1]}|\hat R(\tau)-R(\tau)|
\le
|\hat p-p|+\hat p\sup_{t\in\R}|\hat F_m(t)-F(t)|.
\]
By Hoeffding's inequality \citep{hoeffding1963probability}, with probability at least $1-\delta/2$,
\[
|\hat p-p|\le \sqrt{\frac{\log(4/\delta)}{2n}}.
\]
Conditional on the index set $S:=\{i\in [n] : W_i=1\}$ and conditional on $m=|S|\ge1$, the selected scores $(Z_i)_{i\in S}$ are i.i.d.~from the conditional law of $Z_1$ given $W_1=1$. The standard Dvoretzky--Kiefer--Wolfowitz inequality for half-lines $(-\infty,t]$ \citep{massart1990tight} also controls strict half-lines $(-\infty,t)$ by taking left limits. Therefore,
\[
\P\left(
\sup_{t\in\R}|\hat F_m(t)-F(t)|
>
\sqrt{\frac{\log(4/\delta)}{2m}}
\;\middle|\; S
\right)\le \frac{\delta}{2}
\]
for every such $S$. On $\{m=0\}$, the product
$\hat p\sup_{t\in\R}|\hat F_m(t)-F(t)|$ is zero by definition. Hence, with probability at least $1-\delta/2$,
\[
\hat p\sup_{t\in\R}|\hat F_m(t)-F(t)|
\le
\sqrt{\frac{\hat p\log(4/\delta)}{2n}}.
\]
A union bound completes the proof.
\end{proof}

\subsection{Proof of the suffix low-sum block bound}\label{sec:proof-cor-low-sum-b}
\begin{proof}
Fix $h\in\{1,\dots,n\}$ and a start index $r\in\{i_0,\dots,n-h+1\}$. Let
$Y_{r,h}:=\sum_{t=r}^{r+h-1}S_t$.
For any $\theta>0$, Markov's inequality gives
\(
\P(Y_{r,h}\le b)
\le e^{\theta b}\prod_{t=r}^{r+h-1}\E[e^{-\theta S_t}].
\)
Since $S_t\in[-1,1]$, Hoeffding's lemma \citep{hoeffding1963probability} yields
\[
\E[e^{-\theta S_t}]
\le \exp\!\left(\frac{\theta^2}{2}-\theta\E[S_t]\right)
\le \exp\!\left(\frac{\theta^2}{2}-\theta \lambda\right),
\]
so
$\P(Y_{r,h}\le b)\le\exp(\theta b+h\theta^2/2-\theta \lambda h)$.
Optimizing over $\theta>0$ gives
\[ \P(Y_{r,h}\le b)\le\exp\left(-\frac{(\lambda h-b)_+)^2}{2h}\right). \]
A union bound over the possible start indices $r$ and block lengths $m$ gives
\[
\P\big(M^{(b)}_{i_0}(n)\ge h\big)
\le \sum_{m=h}^n n\exp\!\left(-\frac{(\lambda m-b)_+^2}{2m}\right).
\]
Since $M^{(b)}_{i_0}(n)$ is non-negative and integer-valued, we may write
$$\E[M^{(b)}_{i_0}(n)]=\sum_{h=1}^n\P(M^{(b)}_{i_0}(n)\ge h)$$ and split the sum at
$h_0:=\left\lceil 16\lambda^{-2}\log n+2b/\lambda\right\rceil$.
The contribution up to and including $h=h_0$ is at most $h_0$. 
For $h > h_0$, we have $h \ge 2b/\lambda$, hence  $\lambda h-b\ge \lambda h/2$.
Thus, the tail sum can be bounded as
\begin{align*}
\sum_{h=h_0+1}^n
\P\!\left(M^{(b)}_{i_0}(n)\ge h\right) 
&\le
\sum_{h=h_0+1}^n
\sum_{m=h}^n ne^{-\lambda^2 m/8} \\ 
&=
n\sum_{m=h_0+1}^n
(m-h_0)e^{-\lambda^2 m/8} 
\le
\frac{n e^{-\lambda^2(h_0+1)/8}}{(1-e^{-\lambda^2/8})^2},
\end{align*}
which by
$n\exp(-\lambda^2h_0/8)\le {1}/{n}$ is bounded by one for sufficiently large $n$. Therefore
$\E[M^{(b)}_{i_0}(n)]\le h_0+1\le 16\frac{1}{\lambda^2}\log n+\frac{2b}{\lambda}+2$ for all sufficiently large $n$.
\end{proof}

\subsection{Proof of the right-endpoint exclusion lemma}\label{sec:proof-lem-eint-all-high-prob}
\begin{proof}
For each fixed $i\in[N]$, $\mathbb E[\hat p_{0,-i}]=p_0\ge\varepsilon+\xi$, and Hoeffding's inequality \citep{hoeffding1963probability} gives
$\P(E_{\mathrm{int},-i}^c)
\le e^{-(N-1)\xi^2/2}.$
A union bound over $i\in [N]$ yields
$\P(E_{\mathrm{int}}^c)
\le N e^{-(N-1)\xi^2/2}.$
On $E_{\mathrm{int}}$, averaging the inequalities
$\sum_{k\in [N]\setminus \{i\}}I^0_k>(N-1)(\varepsilon+\xi/2)$ over $i$ gives
$\sum_{k=1}^N I^0_k>N(\varepsilon+\xi/2)$. 

Therefore
\[
\begin{aligned}
T_{N+1}
&=
\sum_{t=1}^N(I^s_{(t)}-\varepsilon)
+
\sum_{t=1}^N W_{(t)}
+
1-\varepsilon 
&=
\sum_{k=1}^N(I^0_k-\varepsilon)+1-\varepsilon 
&>
\frac{N\xi}{2}+1-\varepsilon.
\end{aligned}
\]
For all sufficiently large \(N\), this is larger than \(1\). Hence
$T_{N+1}>1.$
By \Cref{lem:endpoint-crossing},
$T_N\ge T_{N+1}-1>0.$
Thus both indices \(N\) and \(N+1\) are \(\hat R^+_{1:N}\)-infeasible, and consequently
$\hat j<N.$

Given $i\in [N]$, a similar argument shows that $\hat j_{-i} < N$. 
Indeed, on \(E_{\mathrm{int},-i}\),
\[
\begin{aligned}
T^{-i}_{N+1}
&=
\sum_{t\in[N]\setminus\{q_i\}}(I^s_{(t)}-\varepsilon)
+
\sum_{t\in[N]\setminus\{q_i\}}W_{(t)}
+
1-\varepsilon \\
&=
\sum_{k\in[N]\setminus\{i\}}(I^0_k-\varepsilon)+1-\varepsilon 
>
\frac{(N-1)\xi}{2}+1-\varepsilon.
\end{aligned}
\]
For all sufficiently large \(N\), this is larger than \(1\), so
$T^{-i}_{N+1}>1.$
Let
\[
b_i:=\max\bigl\{\mathcal J_{-i}\setminus\{N+1\}\bigr\}.
\]
By \Cref{lem:endpoint-crossing},
$T^{-i}_{b_i}\ge T^{-i}_{N+1}-1>0$.
Thus both indices \(b_i\) and \(N+1\) are not \(\hat R^+_{-i}\)-feasible. Since no indices in $\mathcal J_{-i}$ lie strictly between \(b_i\) and \(N+1\), no index in $\mathcal J_{-i}$ at or above \(b_i\) is \(\hat R^+_{-i}\)-feasible. Therefore
$\hat j_{-i}<b_i\le N,$
hence $\hat j_{-i}<N$ for each $i\in [N]$.
The result follows.
\end{proof}

\subsection{Proof of the concomitants lemma}\label{sec:proof-lem-concomitants}
\begin{proof}
Let $Q_z$ be a regular conditional law of $U_1$ given $Z_1=z$. Conditional on the full vector $(Z_1,\dots,Z_n)$, the variables $U_1,\dots,U_n$ are independent with conditional laws $Q_{Z_1},\dots,Q_{Z_n}$. The ranking permutation $v_1,\dots,v_n$ is a measurable function of $(Z_1,\dots,Z_n)$ because ties have probability zero. Therefore, conditional on the full vector, $U_{(1)},\dots,U_{(n)}$ are independent with laws $Q_{Z_{(1)}},\dots,Q_{Z_{(n)}}$. Since this product law depends on the $Z$-vector only through the order statistics, conditioning further only on $\mathcal G=\sigma(Z_{(1)},\dots,Z_{(n)})$ gives the same product law. This proves both the conditional independence and the stated conditional marginal distributions.
\end{proof}

\subsection{Proof of the threshold lemma for the population oracle}\label{sec:proof-lem-tau-pop}
\begin{proof}
Since $\Delta\ge 0$ and $\Delta$ is integrable, the sets $\{\Delta\le t\}$ are increasing in $t$, so $F$ is non-decreasing and $F(0)=0$.
Moreover, if $s_n\downarrow t$, then $\Delta\mathbf{1}\{\Delta\le s_n\}\to \Delta\mathbf{1}\{\Delta\le t\}$ pointwise and is dominated by $\Delta$, so $F$ is right-continuous by dominated convergence.
Since $g_0,g_*\in[0,1]$, we have $0\le \Delta\le 1$, and therefore
\[
F(1)=\int_{\X}\Delta\,d\mu=\E[\Delta(X)]>B.
\]
Hence the set $\{t\in[0,1]:F(t)\ge B\}$ is non-empty and $\tau\le 1$.

If $B=0$, then $0\in\{t:F(t)\ge B\}$, so $\tau=0$. Conversely, if $\tau=0$, then by the definition of the infimum there exists a sequence $u_n\downarrow 0$ with $F(u_n)\ge B$ for all $n$. By right-continuity,
\[
F(0)=\lim_{n\to\infty}F(u_n)\ge B.
\]
Since $F(0)=0$ and $B\ge 0$, it follows that $B=0$.
If $B>0$, then $F(0)=0<B$, so $\tau>0$; finiteness has been shown above.

Finally, assume $B>0$ so that $\tau>0$, and take any sequence $t_n\uparrow \tau$ with $0\le t_n<\tau$ for all $n$ (e.g.\ $t_n:=\max\{0,\tau-1/n\}$). Then $\mathbf{1}\{\Delta\le t_n\}\uparrow \mathbf{1}\{\Delta<\tau\}$, so
\[
F(\tau-):=\lim_{n\to\infty}F(t_n)=\int_{\{\Delta<\tau\}}\Delta\,d\mu.
\]
By definition of $\tau$, we have $F(t)<B$ for all $t<\tau$, hence $F(\tau-)\le B$. On the other hand, by definition of the infimum there exists a sequence $u_n\downarrow\tau$ with $F(u_n)\ge B$ for all $n$, so right-continuity yields $F(\tau)=\lim_{n\to\infty}F(u_n)\ge B$. Finally, $F(\tau)=F(\tau-)+\tau\,\mu(\Delta=\tau)$, yielding the stated inequality.
\end{proof}

\end{document}